\newtheorem{theorem}{Theorem}
\newtheorem{lemma}[theorem]{Lemma}
\newtheorem{assumption}{Assumption}
\DeclarePairedDelimiter{\ceil}{\lceil}{\rceil}
\begin{document}

\title{Distributed Multi-Task Relationship Learning}

\author{Sulin Liu$^\dag$, Sinno Jialin Pan$^\dag$ and Qirong Ho$^\ddag$ \\
$^\dag$Nanyang Technological University, Singapore, $^\ddag$Petuum, Inc. \\
$^\dag$\{liusl,sinnopan\}@ntu.edu.sg, $^\ddag$hoqirong@gmail.com
}
\date{}
\maketitle

\begin{abstract}
Multi-task learning aims to learn multiple tasks jointly by exploiting their relatedness to improve the generalization performance for each task. Traditionally, to perform multi-task learning, one needs to centralize data from all the tasks to a single machine. However, in many real-world applications, data of different tasks may be geo-distributed over different local machines. Due to heavy communication caused by transmitting the data and the issue of data privacy and security, it is impossible to send data of different task to a master machine to perform multi-task learning. Therefore, in this paper, we propose a distributed multi-task learning framework that simultaneously learns predictive models for each task as well as task relationships between tasks alternatingly in the parameter server paradigm. In our framework, we first offer a general dual form for a family of regularized multi-task relationship learning methods. Subsequently, we propose a communication-efficient primal-dual distributed optimization algorithm to solve the dual problem by carefully designing local subproblems to make the dual problem decomposable. Moreover, we provide a theoretical convergence analysis for the proposed algorithm, which is specific for distributed multi-task relationship learning. We conduct extensive experiments on both synthetic and real-world datasets to evaluate our proposed framework in terms of effectiveness and convergence.
\end{abstract}

\section{Introduction}
In the era of big data, developing distributed machine learning algorithms for big data analytics has become increasingly important yet challenging. Most of the recent developments of distributed machine learning optimization techniques focus on designing algorithms on learning a single predictive model under a setting where data of a certain task is distributed over different worker machines. Besides this setting, there is another natural setting of distributed machine learning where data of different sources (e.g. users, organizations) is geo-stored in local machines, and the goal is to learn a specific predictive model for each source.

Under this setting, a traditional approach is to regard learning on each source's data as an independent task and solve it locally for each source. This approach fails to fully exploit the commonality or relatedness among all the available data to learn a more precisely predictive model for each source. Another approach is through multi-task learning (MTL), where multiple tasks are learned jointly with the help of related tasks~\citep{caruana1997multitask,PanY09TKDE}. The aim is to explore the shared relevant information between the tasks to achieve better generalization performance than learning the tasks independently. Out of data privacy or security issue and communication cost of transmitting the data, it is not feasible to centralize the data of different tasks to perform MTL. And even if the data can be centralized, the size of the total data could easily exceed the physical memory of the machine. However, most existing MTL methods that have been developed could not be implemented directly in a distributed manner. Although there have been developments in data-parallel distributed algorithms for single task learning~\citep{yang2013nips,ShamirS014,MaSJJRT15}, distributed MTL under the aforementioned setting remains challenging as these algorithms do not suit the MTL formulation as MTL requires joint optimization of parameters of different tasks.

To address the problem mentioned above, we propose a distributed multi-task relationship learning algorithmic framework, denoted by DMTRL, which allows multi-task learning to be done in a distributed manner when tasks are geo-distributed over different places and data is stored locally over different machines. In general, existing MTL methods can be categorized into two main categories: learning with feature covariance~\citep{ando2005framework,argyriou2008convex,obozinski2010joint} and learning with task relations~\citep{evgeniou2004regularized,evgeniou2005kernelMTL,jacob2009clustered,zhang2010convex}. Different from prior solutions to distributed MTL, which are focused on the former category~\citep{wang2015distributedmtl,wang2016distributed,baytas16async}, our proposed DMTRL falls into the latter category. In our proposed framework, a communication-efficient primal-dual distributed optimization technique is utilized to simultaneously learn multiple tasks as well as the task relatedness in the parameter server paradigm, with a theoretical convergence guarantee.

Specifically, to make learning multiple tasks with unknown task relationships in a distributed computing environment possible, we first derive a general dual form for a family of regularized multi-task relationship learning methods. With the general dual form, we design our distributed learning algorithm by leveraging the primal-dual structure of the optimization under the parameter server paradigm. In each round of the distributed learning procedure, each local worker solves a local subproblem approximately over the data of each local task, and sends updates back to the server. Then the server aggregates the updates and calculates updated task weight vectors, which are sent back to corresponding workers, by updating and exploiting the task relatedness. Moreover, we provide theoretical analysis on the convergence rate of the proposed framework and analyze how task relationships affect the convergence rate.

The major contributions of our work are three folds:
\begin{itemize}
	\item Our proposed framework DMTRL is general for a family of regularized MTL methods, which simultaneously learn task relationships and task-specific predictive models from geo-distributed task data. Furthermore, DMTRL is communication-efficient. As a by-product, DMTRL provides a scalable solution to MTL in large scale when the total data is of massive due to either large number of tasks or large amount of data per task.\footnote{Though MTL is originally proposed for the problem where each task only has a small size of labeled training data, it has been shown by other researchers that when some tasks have relatively large amount of data, MTL can still help improve generalization across tasks by jointly exploiting information from related tasks~\citep{AhmedDS14}.}
	
	\item We provide theoretical analysis on primal-dual convergence rate for the proposed distributed MTL optimization for both smooth and non-smooth convex losses. Different from previous distributed optimization convergence analysis for a single task, ours is specific for distributed MTL which takes task relationships into consideration.
	
	\item We implement the framework on a distributed machine learning platform Petuum~\citep{xing2015petuum}, and conduct extensive experiments on both synthetic and real-world datasets to demonstrate its effectiveness in terms of prediction accuracy and convergence. Note that our framework can be fitted to any distributed machine learning platform under the parameter server paradigm, such as~\citep{LiAPSAJLSS14}.
\end{itemize}

\section{Related Work}

\noindent{\bf Distributed machine learning} has attracted more and more interests recently~\citep{BalcanBFM12}. There have been tremendous efforts done on different machine learning problems~\citep{newman2009distributed,forero2010consensus,BalcanBFM12}. At the same time, developing distributed optimization methods for large-scale machine learning has been receiving much research interest~\citep{boyd2011distributed,richtarik2013distributed,yang2013nips,ShamirS014,MaSJJRT15}. These methods allow for local optimization procedure to be taken at each communication round. However, their algorithms focus on single-task learning problems, while our work aims at developing a distributed optimization algorithm for MTL problems where single-task learning algorithms cannot be directly applied.

\noindent{\bf Online Multi-task Learning} assumes instances from different tasks arrive in a sequence and adversarially chooses task to learn. \citet{cavallanti2010online} exploited online MTL with a given task relationship encoded in a matrix, which is known beforehand.~\citet{saha2011online} exploited online learning of task weight vectors and relationship together. They formulated the problem of online learning the task relationship matrix as a Bregman divergence minimization problem. After the task relationship matrix is learned, it is exploited to help actively select informative instances for online learning.

\noindent{\bf Parallel Multi-task Learning} aims to develop parallel computing algorithms for MTL in a shared-memory computing environment. Recently,~\citet{zhangyuPMTL} proposed a parallel MTL algorithm named PMTL. In PMTL, dual forms of three losses are presented and accelerated proximal gradient (APG) method is applied to make the problem decomposable, and thus possible to be solved in parallel. By comparison, firstly, we induce a more general dual form, where any convex loss function can be applied. In addition, our algorithm can solve the same type of problem as PMTL under the distributed machine learning setting, while PMTL cannot be applied directly when data of different tasks are stored on different local machines.

\noindent{\bf Distributed Multi-task Learning} is an area that has not been much exploited.~\citet{wang2015distributedmtl} proposed a distributed algorithm for MTL by assuming that different tasks are related through shared sparsity. In another work~\citep{baytas16async}, asynchronous distributed MTL method is proposed for MTL with shared subspace learning or shared feature subset learning.
Different from the above mentioned approaches, our method aims at solving MTL by learning task relationships from data, which can be positive, negative, or unrelated, via a task-covariance matrix.~\citet{AhmedDS14} proposed a hierarchical MTL model motivated from the application of advertising. Their method assumes a hierarchical structure among tasks be given in advance. Proximal subgradient method is utilized such that partial subgradients can be distributively calculated. Our setting is different from theirs as we do not assume tasks lie in a hierarchical structure. Moreover, our method distributes the dual problem while theirs focuses on distributively solving the primal with the given task hierarchy. Another work that exploits distributed MTL~\citep{dinuzzo2011client} considers a client-server setting, where clients send their own data to the server and the server sends back helpful information for each client to solve the task independently. By sending data from clients to the server, it is very communication-heavy and thus not feasible under our problem setting.

\textbf{Notation.} Scalars, vectors and matrices are denoted by lowercase, boldface lowercase and boldface uppercase letters respectively. For any $k \in \mathbb{N}^+$, we define $[k] = \{1,\cdots, k\}$. For a vector $\mathbf{a} \in \mathbb{R}^n$ that is split into $m$ coordinate blocks, i.e. $\mathbf{a} = [\tilde{\mathbf{a}}_{[1]}; \cdots;\tilde{\mathbf{a}}_{[m]}]$, we define $\mathbf{a}_{[i]} \in \mathbb{R}^n \,(i \in [m])$ that takes same value of $\mathbf{a}$ if the coordinate belongs to $i$-th coordinate block and takes $0$ elsewhere.
\section{Problem Statement}
For simplicity in description, we consider a setting with $m$ tasks $\{T_i\} _{i=1}^m$ that are distributed over $m$ workers, i.e., one machine is for one task. In practice, our framework is flexible to put several tasks together in one worker or further distribute data of one task over several local workers with straightforward modification of the algorithm. Each task $T_i$ on a worker $i$ follows a distribution $\mathcal{D}_i$ and has a training set of size $\mathit{n_i}$ with $\mathbf{x}_j^i\in\mathbb{R}^{d}$ being the $j$-th data point and $\mathit{y}_j^i$ as its label. The value of label $\mathit{y}_j^i$ can be continuous for a regression problem or discrete for a classification problem. Here, we consider a general family of regularized MTL methods introduced in ~\citep{zhang2010convex}, which is a general multi-task relationship learning framework that includes many existing popular MTL methods as its special cases~\citep{evgeniou2004regularized,evgeniou2005kernelMTL,kato2008multi,jacob2009clustered}.

The formulation is defined as follows:
\begin{eqnarray}
\label{formulation}
& \underset{\mathbf{W,\Omega}}{\mbox{min}} &
\sum_{i=1}^{m} \frac{1}{n_i}\sum_{j=1}^{n_i}l_j^i(\mathbf{w}_i^T \phi(\mathbf{x}_j^i),y_j^i )+ \frac{\lambda}{2} \mbox{tr}(\mathbf{W}\boldsymbol{\Omega}\mathbf{W}^T) \\
&\mbox{s.t.} & \boldsymbol{\Omega}^{-1} \succeq 0, \mbox{ and }\; \mbox{tr}(\boldsymbol{\Omega}^{-1}) = 1, \nonumber 
\end{eqnarray}
where $l_j^i(\cdot)$ is an arbitrary convex real-valued loss function of the $i$-th task on the $j$-th data point, $\phi(\cdot)$ is a feature mapping that can be linear or nonlinear, $\mathbf{W} = (\mathbf{w}_1,\mathbf{w}_2,\cdots,\mathbf{w}_m)\in\mathbb{R}^{d\times m}$, and $\lambda > 0$ is the regularization parameter. The first term of the objective measures the empirical loss of all tasks with the term $1/n_i$ to balance different sample sizes of different tasks. The second term serves as a task-relationship regularizer with $\boldsymbol{\Omega}$ being the precision matrix (inverse of the covariance matrix as shown in~ \cite{zhang2010convex}). The covariance matrix $\boldsymbol{\Omega}^{-1}$ is flexible enough to describe positive, negative and unrelated task relationships. The regularization term on each task's weight vector is embedded in $\boldsymbol{\Omega}$ as well. The constraints serve to enforce some prior assumptions on $\boldsymbol{\Omega}^{-1}$, which can be replaced by some other convex constraints on $\boldsymbol{\Omega}^{-1}$.

According to~\cite{zhang2010convex}, \eqref{formulation} is jointly convex w.r.t. $\mathbf{W}$ and $\boldsymbol{\Omega}^{-1}$, which can be resorted to an alternating optimization procedure. Our proposed DMTRL aims at distributing the learning of multiple tasks for finding $\mathbf{W}$ with precision matrix $\boldsymbol{\Omega}$ fixed when data of different tasks are stored in local workers, and centralizing parameters $\mathbf{W}$ to a server to update $\boldsymbol{\Omega}$ in the alternating step. In the following section, we first derive a general dual form for (\ref{formulation}) with $\boldsymbol{\Omega}$ fixed, which will be used to design our distributed learning algorithm.

\section{General Dual Form with \texorpdfstring{${\boldsymbol{\Omega}}$}{Omega} Fixed and Primal-Dual Certificates}
Motivated by the recent advances in distributed optimization using stochastic dual coordinate ascent (SDCA) for single task learning~\citep{yang2013nips,MaSJJRT15}, we turn to deriving the dual form to facilitate distributed optimization for MTL and arrive at the following theorem.
\begin{theorem}\label{dual_form_theorem}
	The general dual problem of \eqref{formulation} with ${\mathbf{\Omega}}$ fixed is given by:
	\begin{equation}\label{generaldual}
	\underset{\boldsymbol{\alpha} \in \mathbb{R}^n}{\textnormal{max}}D(\boldsymbol{\alpha})=
	-\frac{1}{2\lambda}\boldsymbol{\alpha}^T \mathbf{K} \boldsymbol{\alpha} - \sum_{i=1}^{m}\frac{1}{n_i}\sum_{j=1}^{n_i}{l_j^i}^*(-\alpha_j^i),
	\end{equation}
	where ${l_j^i}^*(\cdot)$ is the conjugate function of ${l_j^i}(\cdot)$, $\boldsymbol{\alpha} =(\tilde{\boldsymbol{\alpha}}_{[1]};\cdots;\tilde{\boldsymbol{\alpha}}_{[m]})$ with $\tilde{\boldsymbol{\alpha}}_{[i]} = (\alpha_1^i,\cdots,\alpha_{n_i}^i)^T$,
	$\mathbf{K}$ is an $n \times n$ matrix, where $n\!=\!\sum_{t=1}^mn_t$, with its $(I_j^i,I_{j^\prime}^{i^\prime})$-th element being $\frac{\sigma_{i i^\prime}}{n_in_{i'}} \langle \phi(\mathbf{x}_j^i),\phi(\mathbf{x}_{j^\prime}^{i^\prime})\rangle$, $I_j^i = j+\sum_{t=1}^{i-1}n_t$ is the global index for $\mathbf{x}_j^i$ among all training data from all tasks, and $\sigma_{i i^\prime}$ is the $(i,i')$-th element of $\,\boldsymbol{\Sigma} = \boldsymbol{\Omega}^{-1}$, which represents the correlation between task $i$ and $i'$. The primal-dual optimal point correspondence is given by
	$\mathbf{w}^*_i = \frac{1}{\lambda} \sum_{i'=1}^{m}\sum_{j'=1}^{n_{i'}}\frac{{\alpha_{j'}^{i'}}^*}{n_{i'}}{\phi(\mathbf{x}_{j'}^{i'})} \sigma_{ii'}$.
\end{theorem}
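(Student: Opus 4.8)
The plan is to derive the dual by the standard Fenchel–Lagrangian route, treating $\boldsymbol{\Omega}$ (equivalently $\boldsymbol{\Sigma}=\boldsymbol{\Omega}^{-1}$) as a fixed positive definite matrix, so that only $\mathbf{W}$ is the optimization variable. First I would introduce slack variables $z_j^i = \mathbf{w}_i^T\phi(\mathbf{x}_j^i)$ and rewrite \eqref{formulation} (with $\boldsymbol{\Omega}$ fixed) as
\begin{equation}\label{constrained}
\min_{\mathbf{W},\,\mathbf{z}} \;\; \sum_{i=1}^m \frac{1}{n_i}\sum_{j=1}^{n_i} l_j^i(z_j^i,y_j^i) + \frac{\lambda}{2}\mathrm{tr}(\mathbf{W}\boldsymbol{\Omega}\mathbf{W}^T) \quad \text{s.t. } z_j^i = \mathbf{w}_i^T\phi(\mathbf{x}_j^i)\;\forall i,j,
\end{equation}
and attach a Lagrange multiplier $\alpha_j^i/n_i$ to each equality constraint (the scaling by $1/n_i$ is a convenient bookkeeping choice that makes the final $\mathbf{K}$ come out clean). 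The Lagrangian then separates into a part depending only on $\mathbf{z}$ and a part depending only on $\mathbf{W}$.

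Next I would minimize over each block. Minimizing over $z_j^i$ of $\frac{1}{n_i} l_j^i(z_j^i,y_j^i) - \frac{\alpha_j^i}{n_i} z_j^i$ gives $-\frac{1}{n_i} {l_j^i}^*(\alpha_j^i)$ by the definition of the convex conjugate (after a sign normalization this becomes the $-\frac{1}{n_i}{l_j^i}^*(-\alpha_j^i)$ term appearing in \eqref{generaldual}). Minimizing over $\mathbf{W}$ is the crux: the $\mathbf{W}$-dependent terms are $\frac{\lambda}{2}\mathrm{tr}(\mathbf{W}\boldsymbol{\Omega}\mathbf{W}^T) - \sum_{i,j}\frac{\alpha_j^i}{n_i}\mathbf{w}_i^T\phi(\mathbf{x}_j^i)$, which is a strictly convex quadratic in $\mathbf{W}$ (since $\boldsymbol{\Omega}\succ 0$). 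Setting the gradient to zero, $\lambda \mathbf{W}\boldsymbol{\Omega} = \sum_{i,j}\frac{\alpha_j^i}{n_i}\phi(\mathbf{x}_j^i)\mathbf{e}_i^T$, so $\mathbf{W}^* = \frac{1}{\lambda}\big(\sum_{i,j}\frac{\alpha_j^i}{n_i}\phi(\mathbf{x}_j^i)\mathbf{e}_i^T\big)\boldsymbol{\Omega}^{-1}$; reading off the $i$-th column and using $\boldsymbol{\Sigma}=\boldsymbol{\Omega}^{-1}$ with entries $\sigma_{ii'}$ yields exactly the stated primal–dual correspondence $\mathbf{w}_i^* = \frac{1}{\lambda}\sum_{i'=1}^m\sum_{j'=1}^{n_{i'}}\frac{\alpha_{j'}^{i'}}{n_{i'}}\phi(\mathbf{x}_{j'}^{i'})\sigma_{ii'}$. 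Substituting $\mathbf{W}^*$ back into the quadratic collapses it to $-\frac{1}{2\lambda}\sum_{i,i'}\sum_{j,j'}\frac{\sigma_{ii'}}{n_i n_{i'}}\langle\phi(\mathbf{x}_j^i),\phi(\mathbf{x}_{j'}^{i'})\rangle\alpha_j^i\alpha_{j'}^{i'}$, which is precisely $-\frac{1}{2\lambda}\boldsymbol{\alpha}^T\mathbf{K}\boldsymbol{\alpha}$ with $\mathbf{K}$ as defined (the global index $I_j^i$ just flattens the per-task blocks into one vector). Combining the two blocks gives $D(\boldsymbol{\alpha})$ in \eqref{generaldual}.

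Finally I would justify that the dual is exactly \eqref{generaldual} with no duality gap and that the maximum is attained: \eqref{formulation} with $\boldsymbol{\Omega}$ fixed is a convex program with affine constraints and (at least) one feasible point where the objective is finite, so Slater's condition holds and strong duality applies; moreover $\mathbf{K}\succeq 0$ because it is a Gram-type matrix built from $\boldsymbol{\Sigma}\succ 0$ and inner products (concretely $\boldsymbol{\alpha}^T\mathbf{K}\boldsymbol{\alpha} = \lambda^2\|\mathbf{W}^*\boldsymbol{\Omega}^{1/2}\|_F^2 \ge 0$ when one plugs in the correspondence), so $D$ is concave and the problem is well posed. The one place that needs a little care is the conjugate-function step when the losses are non-smooth: there the inner infimum over $z_j^i$ need not be attained and the conjugate may take the value $+\infty$ on part of its domain, so the dual feasible set is implicitly restricted to the domain of the ${l_j^i}^*$; I would simply note that this is handled by the standard convention that $+\infty$ terms are excluded from the $\max$, exactly as in the single-task SDCA derivations of~\citep{yang2013nips,MaSJJRT15}. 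I expect this bookkeeping around non-smooth conjugates and the $1/n_i$ scalings to be the only genuine obstacle; the algebra of completing the square in $\mathbf{W}$ is routine once $\boldsymbol{\Omega}$ is treated as a fixed PD matrix.
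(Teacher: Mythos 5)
Your proposal is correct and follows essentially the same route as the paper: introduce slack variables for $\mathbf{w}_i^T\phi(\mathbf{x}_j^i)$, attach multipliers scaled by $1/n_i$, minimize the Lagrangian blockwise (conjugates from the $z$-block, the stationarity condition $\lambda\mathbf{W}\boldsymbol{\Omega}=\mathbf{B}$ from the $\mathbf{W}$-block yielding $\mathbf{W}^*=\frac{1}{\lambda}\mathbf{B}\boldsymbol{\Omega}^{-1}$ and hence the stated correspondence), and substitute back to obtain $-\frac{1}{2\lambda}\boldsymbol{\alpha}^T\mathbf{K}\boldsymbol{\alpha}$. Your added remarks on Slater's condition and on conjugates taking the value $+\infty$ for non-smooth losses are sound bookkeeping that the paper leaves implicit, but they do not change the argument.
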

Here, $\mathbf{K}$ could be regarded as a multi-task similarity matrix with each element scaled by inter-task covariance and the number of instances per task. If $\phi(\cdot)$ maps an instance to a Hilbert space, then $\mathbf{K}$ is a kernel matrix. However, in this way, we have to compute the kernel matrix of $n \!\times\! n$ size using all instances from all tasks, which is infeasible in our distributed setting. Therefore, we propose to use explicit feature mapping function $\phi(\cdot)$ instead. For example, we could approximate infinite kernel expansions by using randomly drawn features in an unbiased manner~\citep{rahimi2007random}. Note that in PMTL~\citep{zhangyuPMTL}, dual forms of (\ref{formulation}) for special cases such as hinge loss, $\epsilon$-sensitive loss and squared loss are derived for a similar problem. The difference lies in that our theorem is more general, which applies to all kinds of convex losses.

Following the primal-dual optimal point correspondence of Theorem \ref{dual_form_theorem}, it is natural to define a feasible $\mathbf{W}(\boldsymbol{\alpha})$ that corresponds to $\boldsymbol{\alpha}$ as follows,
\begin{equation} \label{primal_dual_feasible}
{{\mathbf{w}}}_i(\boldsymbol{\alpha}) = \frac{1}{\lambda} \sum_{i'=1}^{m}\sum_{j'=1}^{n_{i'}}\frac{\alpha_{j'}^{i'}}{n_{i'}}{\phi(\mathbf{x}_{j'}^{i'})} \sigma_{ii'}.
\end{equation}
By defining the objective in \eqref{formulation} with $\mathbf{\Omega}$ fixed as $P(\mathbf{W})$, we have the duality gap function defined as
$G(\boldsymbol{\alpha}) = P(\mathbf{W}(\boldsymbol{\alpha})) - D(\boldsymbol{\alpha})$.
From weak duality, $P(\mathbf{W}(\boldsymbol{\alpha}))$ is always greater or equal to $D(\boldsymbol{\alpha})$. Therefore, duality gap $G(\boldsymbol{\alpha})$ could provide a certificate on the approximation to the optimum.

With the derived dual problem, we could carefully design local dual subproblems that allow for distributed primal-dual optimization, which will be presented in details in Section~\ref{sec:DMTRL_algo}.
The primal-dual optimization method we introduce later has several advantages over the gradient-based primal methods: 1) it does not need to determine any step-size, and 2) the duality gap provides a measure of approximation quality during training. Next, we introduce two common classes of functions.

\textbf{Definition 1} (L-Lipschitz continuous function) A function $l$: $\mathbb{R} \!\rightarrow\! \mathbb{R}$ is L-Lipschitz continuous if $\forall a,b \in \mathbb{R}$, we have
\begin{equation*}
|l(a) - l(b)| \leq L|a-b|.
\end{equation*}

\textbf{Definition 2} ($(1/\mu)$-smooth function) A function $l$: $\mathbb{R} \!\rightarrow\! \mathbb{R}$ is $(1/\mu)$-smooth if it is differentiable and its derivative is $(1/\mu)$-Lipschitz, where $\mu > 0$. Or equivalently, $\forall a,b \in \mathbb{R}$, we have
\begin{equation*}
l(a) \leq l(b) + l'(b)(a-b) + \frac{1}{2\mu}(a-b)^2.
\end{equation*}
Note that most commonly used loss functions fall into the above two classes. For instance, hinge loss falls under the first category, squared loss falls under the second category, and logistic loss falls under both categories. In Section~\ref{sec:convergence}, we provide convergence analysis when the loss function falls into either of the above two categories. Based on the definition of smooth function above, we have the following well-known lemma:
\begin{lemma}
	Function $l(\cdot)$ is $(1/\mu)$-smooth if and only if its conjugate function $l^*(\cdot)$ is $\mu$ strongly convex.
\end{lemma}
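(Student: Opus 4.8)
The plan is to derive both implications from elementary convex-conjugate duality, using three standard facts that hold for any closed proper convex $l$ (all the losses considered here are of this type): the Fenchel--Young inequality $l(a)+l^*(b)\ge ab$ together with its equality condition $l(a)+l^*(b)=ab \iff b\in\partial l(a) \iff a\in\partial l^*(b)$; the biconjugate identity $l^{**}=l$; and the fact that a closed proper convex $g$ is $\mu$-strongly convex if and only if its subdifferential is $\mu$-strongly monotone, i.e. $(p-q)(x-y)\ge\mu(x-y)^2$ whenever $p\in\partial g(x)$ and $q\in\partial g(y)$. We also use, without further comment, that a function satisfying the upper bound in Definition 2 and being convex has a $(1/\mu)$-Lipschitz derivative, which is the regime of interest.

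For the direction ``$l$ is $(1/\mu)$-smooth $\Rightarrow$ $l^*$ is $\mu$-strongly convex'', I would first prove the co-coercivity (Baillon--Haddad) estimate $(l'(a)-l'(b))(a-b)\ge\mu\,(l'(a)-l'(b))^2$ for all $a,b$. This follows by applying the smoothness bound of Definition 2 to the nonnegative convex function $h_b(\cdot)=l(\cdot)-l(b)-l'(b)(\cdot-b)$, whose derivative is $l'(\cdot)-l'(b)$ and whose minimum value is $0$: minimizing the quadratic upper bound gives $0=\min_c h_b(c)\le h_b(a)-\tfrac{\mu}{2}h_b'(a)^2$, i.e. $l(a)-l(b)-l'(b)(a-b)\ge\tfrac{\mu}{2}(l'(a)-l'(b))^2$, and adding this to the same inequality with $a$ and $b$ swapped yields the claim. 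Now take any $u,v$ and any $a\in\partial l^*(u)$, $b\in\partial l^*(v)$; since $l$ is differentiable, the Fenchel--Young equality condition forces $u\in\partial l(a)=\{l'(a)\}$ and $v=l'(b)$, so co-coercivity reads $(u-v)(a-b)\ge\mu(u-v)^2$. Hence $\partial l^*$ is $\mu$-strongly monotone, and therefore $l^*$ is $\mu$-strongly convex.

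For the converse ``$l^*$ is $\mu$-strongly convex $\Rightarrow$ $l$ is $(1/\mu)$-smooth'', I would invoke $l=l^{**}=(l^*)^*$ and show that the conjugate of a $\mu$-strongly convex function $g:=l^*$ is $(1/\mu)$-smooth. Strong convexity makes $x\mapsto xy-g(x)$ $\mu$-strongly concave and coercive for every $y$, so $g^*(y)=\max_x\{xy-g(x)\}$ is finite and attained at a unique point $x(y)$ with $y\in\partial g(x(y))$; standard conjugate duality then gives $\partial g^*(y)=\{x(y)\}$, so $g^*$ is differentiable with $(g^*)'(y)=x(y)$. Finally, for $y_1,y_2$, writing $x_i=x(y_i)$ and combining $\mu$-strong monotonicity of $\partial g$ with the Cauchy--Schwarz-type bound $|y_1-y_2|\,|x_1-x_2|\ge(y_1-y_2)(x_1-x_2)\ge\mu(x_1-x_2)^2$, we obtain $|(g^*)'(y_1)-(g^*)'(y_2)|=|x_1-x_2|\le\tfrac{1}{\mu}|y_1-y_2|$; thus $l=g^*$ is $(1/\mu)$-smooth.

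I expect the forward direction to be the main obstacle: the co-coercivity inequality for $l'$ is the only genuinely non-trivial ingredient, and one must be careful that $l^*$ need not be differentiable, so the argument has to be carried out at the level of the subdifferential operator $\partial l^*$ together with the Fenchel--Young equality condition rather than with a gradient $(l^*)'$. The heuristic computation $(l^*)''=1/l''\ge\mu$ gives the right intuition but is valid only when both functions happen to be twice differentiable, so it cannot replace the subgradient argument in general.
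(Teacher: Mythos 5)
The paper offers no proof of this lemma at all --- it is stated as ``well-known'' and used as a black box, so there is nothing in the text to compare your argument against. Your proposal is correct and is the standard textbook proof: the forward direction via the co-coercivity estimate $(l'(a)-l'(b))(a-b)\ge\mu\,(l'(a)-l'(b))^2$ (obtained exactly as you describe, by minimizing the quadratic upper bound of the nonnegative Bregman-type function $h_b$ and symmetrizing) followed by transfer to $\partial l^*$ through the Fenchel--Young equality condition; the converse via $l=l^{**}$ and the $\frac{1}{\mu}$-Lipschitz bound on the unique maximizer $x(y)$ coming from strong monotonicity of $\partial g$. You are also right to flag the one genuine pitfall, namely that $l^*$ need not be differentiable so the forward direction must be argued at the level of the subdifferential operator rather than with $(l^*)'$; your handling of that point is correct, given the standing assumption (which you state) that $l$ is closed, proper and convex so that the biconjugate identity and the equivalence between $\mu$-strong convexity and $\mu$-strong monotonicity of the subdifferential apply.
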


\section{The Proposed Methodology}\label{sec:DMTRL_algo}
\subsection{The Overall Framework}\label{sec:overall_framework}
Our proposed overall algorithm presented in Algorithm~\ref{DDCAMTL} is mainly based on an alternating optimization procedure that comprises two steps: solving $\mathbf{W}$ with a fixed $\boldsymbol{\Omega}$ in a distributed manner between the server and workers ($\mathbf{W}$-step: Steps 4-10), and solving $\boldsymbol{\Omega}$ with aggregated $\mathbf{W}$ from all workers on the server ($\boldsymbol{\Omega}$-step: Step 11).

Specifically, during the $\mathbf{W}$-step, distributed optimization is conducted on the dual problem~\eqref{generaldual} iteratively. Each worker is assigned a local subproblem that only requires accessing local data.
Note that there are two types of updates in $\mathbf{W}$-step: global update and local update. In local update, every worker $i$ solves the local dual subproblem through a Local SDCA algorithm approximately over the local dual coordinate block $\boldsymbol{\alpha}_{[i]}$. Moreover, by defining $\mathbf{b}_i \!=\! \frac{1}{n_{i}}\sum_{j'=1}^{n_{i}} {\alpha_{j'}^{i}} \phi(x_{j'}^{i})$, each worker computes the updates on $\mathbf{b}_i$, i.e., $\Delta\mathbf{b}_i$, locally. When the local update ends, in global update, each worker sends the corresponding $\Delta\mathbf{b}_i$  to the server. We know from \eqref{primal_dual_feasible} that ${{\mathbf{w}}}_i(\boldsymbol{\alpha}) \!=\! \frac{1}{\lambda}\sum_{i'=1}^{m}\mathbf{b}_i \sigma_{ii'}$. Therefore, the server aggregates the local updates on $\{\mathbf{b}_i\}_{i=1}^m$ from all local workers to calculate updated task weight vectors $\{\mathbf{w}_i\}_{i=1}^m$ and send them back to the corresponding local workers. This procedure repeats until desired duality gap is arrived to establish convergence. Figure \ref{fig:distributed_W_step} provides an illustration of the procedure in $\mathbf{W}$-step.
\begin{figure}[h!]
	\begin{center}
		\includegraphics[width=0.6\columnwidth]{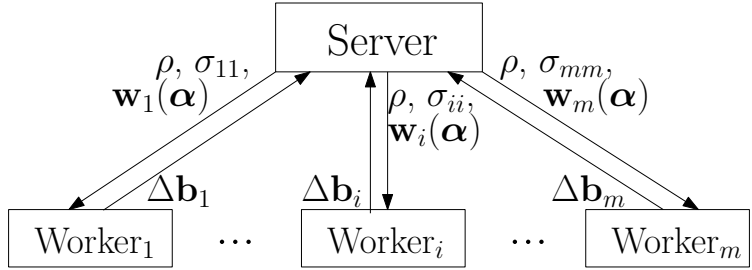}
		\caption{Distributed learning in $\mathbf{W}$-step}\label{fig:distributed_W_step}
	\end{center}
\end{figure}

After $\mathbf{W}$-step is finished, $\boldsymbol{\Omega}$-step is conducted on the server by solving problem \eqref{formulation} with fixed $\mathbf{W}$. Then the server will send each updated $\sigma_{ii}$ to the corresponding local worker $i$. The computational cost of this step is reasonable for computing centrally, since it only involves optimizing $\mbox{tr}(\mathbf{W}\boldsymbol{\Omega}\mathbf{W}^T)$ given the constraints on $\boldsymbol{\Omega}^{-1}$. The optimization involves eigen-decomposition of $\mathbf{W}^T\mathbf{W}$, which is computationally expensive when number of tasks is large. In that case, existing distributed SVD algorithms~\citep{meng2015mllib} can be leveraged to solve it more efficiently, which is beyond the focus of this work.

Note that in $\boldsymbol{\Omega}$-step, the communication cost is just to send an updated scalar $\sigma_{ii}$ to the corresponding worker $i$. Therefore, the main communication cost is caused by the global updates in $\mathbf{W}$-step (Row 9 in Algorithm~\ref{DDCAMTL}), i.e., the number of iterations $T$.
\begin{algorithm}[!t]
	\caption{DMTRL Algorithm}
	\label{DDCAMTL}
	\begin{algorithmic}[1]
		\STATE {\bfseries Input:} data $\{\mathbf{x}_j^i,y_j^i \}$ with $i\!=\!1,...m$ and $j\!=\!1,..,n_j$ distributed over $m$ machines, aggregation parameter $\frac{1}{m} \leq \eta \leq 1$, maximum number of alternating iterations $P$, and maximum number of global update iterations, $T$, in the $\mathbf{W}$-step
		\STATE {\bfseries Initialize:}  $\boldsymbol{\alpha}_{[i]}^{(0)}  \gets \mathbf{0}$ for all machines $i$ and $ {\mathbf{w}_i}(\boldsymbol{\alpha}) \gets  \mathbf{0}$, where $i$ is the task that the data in machine $i$ belong to, $\boldsymbol{\Omega} \gets m\,\mathbf{I},\boldsymbol{\Sigma} \gets \frac{1}{m}\,\mathbf{I}$
		\FOR{$p = 1$ {\textbf{to}} $P$ }
		\FOR{$t = 1$ {\textbf{to}} $T$}
		\FOR{\textbf{all machines (local update):} $i=1,2,\cdots,m$ \textbf{in parallel}}
		\STATE solving local subproblem: \\$\Delta\boldsymbol{\alpha}_{[i]} \gets \mbox{Local SDCA}\left(\boldsymbol{\alpha}_{[i]}^{(t-1)},{{\mathbf{w}}}_i(\boldsymbol{\alpha})^{(t-1)},\sigma_{ii}^{(p-1)}\right)$
		\STATE local updates: \\
		${\boldsymbol{\alpha}_{[i]}}^{(t)} \gets{\boldsymbol{\alpha}_{[i]}}^{(t-1)} + \eta {\Delta\boldsymbol{\alpha}_{[i]}}$ \\
		$\Delta\mathbf{b}_{i}^{(t)} \gets\Delta\mathbf{b}_{i}^{(t-1)} + \frac{1}{n_{i}}\sum_{j'=1}^{n_{i}} {\eta\Delta\alpha_{j'}^{i}} \phi(\mathbf{x}_{j'}^{i})$
		\ENDFOR	
		\STATE {\bfseries Reduce (global update):} server aggregates $\Delta\mathbf{b}_i$'s from all workers to compute $\mathbf{w}_i(\boldsymbol{\alpha})^{(t)} \!=\! {{\mathbf{w}}}_i(\boldsymbol{\alpha})^{(t-1)} \!+\! \frac{1}{\lambda}\sum_{i'=1}^{m}\Delta\mathbf{b}_i \sigma_{ii'}$, and sends updated $\{\mathbf{w}_i\}$'s back to the corresponding local workers.
		\ENDFOR
		\STATE $\boldsymbol{\Omega}^{(p)}$ $\gets$ Solve problem (\ref{formulation}) on server for fixed $\mathbf{W}^{(p)}$, whose $i$-th column corresponds to $\mathbf{w}_i(\boldsymbol{\alpha})$, and update $\boldsymbol{\Sigma}^{(p)} = {\boldsymbol{\Omega}^{(p)}}^{-1}$. Server sends updated $\sigma_{ii}$ to each worker $i$
		\ENDFOR
		\STATE {\bfseries Output:} $\mathbf{W},\boldsymbol{\Sigma}$
	\end{algorithmic}
\end{algorithm}

\subsection{Local SDCA}\label{sec:localSDCA}
In this section, we describe the local dual subproblem to be solved for local updates in $\mathbf{W}$-step in details.
For each worker, a local subproblem is defined and only local data is needed for solving it.
During the local update step of $\mathbf{W}$-step, each worker approximately solves the local subproblem (Rows 5-8 in Algorithm~\ref{DDCAMTL}). The local subproblem solution does not need to be near-optimal. It only needs to achieve some improvement of the local subproblem objective towards the optimum, which will be explained more clearly in Section~\ref{sec:convergence}. The subproblem for each worker is defined as,
\begin{equation}\label{localsubproblem}
\underset{\Delta\boldsymbol{\alpha}_{[i]} \in \mathbb{R}^{n_i}}{\text{max}} \mathcal{D}_i^\rho(\Delta\boldsymbol{\alpha}_{[i]};\mathbf{w}_i(\boldsymbol{\alpha}),{\boldsymbol{\alpha}}_{[i]}),
\end{equation}
where
\begin{eqnarray}
\lefteqn{\mathcal{D}_i^\rho(\Delta\boldsymbol{\alpha}_{[i]};\mathbf{w}_i(\boldsymbol{\alpha}),{\boldsymbol{\alpha}}_{[i]})} \nonumber \\
& = & - \frac{1}{n_i}\sum_{j=1}^{n_i}{l_j^i}^{*}(-\alpha_j^i - \Delta\alpha_j^i)  - \frac{1}{n_i}\sum_{j=1}^{n_i} \Delta\alpha_j^i {\mathbf{w}_i(\boldsymbol{\alpha})}^T \phi(\mathbf{x}_j^i) \nonumber - \frac{1}{2\lambda m}{\boldsymbol{\alpha}}^T \mathbf{K} {\boldsymbol{\alpha}} - \frac{\rho}{2\lambda}  \Delta\boldsymbol{\alpha}_{[i]}^T \mathbf{K} \Delta\boldsymbol{\alpha}_{[i]}. \nonumber
\end{eqnarray}
By defining the local subproblem in this way, when the local variables $\Delta\boldsymbol{\alpha}_{[i]}$ vary during the local subproblem optimization, the local objectives well approximate the global objective in (\ref{generaldual}) as shown in the following Lemma \ref{dual_lemma}.
\begin{lemma}\label{dual_lemma}
	For any dual variable $\boldsymbol{\alpha}\in \mathbb{R}^n$, change in dual variable $\Delta\boldsymbol{\alpha} \in \mathbb{R}^n$, primal variable $\mathbf{w}_i = \mathbf{w}_i(\boldsymbol{\alpha})$, aggregation parameter $\eta \in [0,1],$ and $\rho$, when
	\begin{equation}\label{rhodef}
	\rho \geq \rho_{\text{min}} = \eta \underset{\boldsymbol{\alpha} \in \mathbb{R}^n}{\text{max}} \frac{{\boldsymbol{\alpha}}^T \mathbf{K} \boldsymbol{\alpha}} {\sum_{i=1}^{m} {\boldsymbol{\alpha}}_{[i]}^T \mathbf{K} \boldsymbol{\alpha}_{[i]}},
	\end{equation}
	it holds that
	\begin{equation*}
	D(\boldsymbol{\alpha} + \eta \sum_{i=1}^{m}\Delta\boldsymbol{\alpha}_{[i]}) \geq (1-\eta) D(\boldsymbol{\alpha})
	+ \eta \sum_{i=1}^{m} \mathcal{D}_i^\rho(\Delta\boldsymbol{\alpha}_{[i]};\mathbf{w}_i(\boldsymbol{\alpha}),{\boldsymbol{\alpha}}_{[i]}).
	\end{equation*}
	
\end{lemma}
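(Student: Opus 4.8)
The plan is to exploit the quadratic structure of the dual objective $D$ and to decompose the global quadratic form $\boldsymbol{\alpha}^T\mathbf{K}\boldsymbol{\alpha}$ across the $m$ coordinate blocks using the definition of $\rho_{\min}$. First I would write $D(\boldsymbol{\alpha}+\eta\sum_i\Delta\boldsymbol{\alpha}_{[i]})$ explicitly, splitting it into the quadratic ``kernel'' part $-\frac{1}{2\lambda}(\cdot)^T\mathbf{K}(\cdot)$ and the separable conjugate part $-\sum_i\frac{1}{n_i}\sum_j {l_j^i}^*(\cdot)$. For the conjugate part, the argument of each ${l_j^i}^*$ is $-\alpha_j^i-\eta\Delta\alpha_j^i = (1-\eta)(-\alpha_j^i) + \eta(-\alpha_j^i-\Delta\alpha_j^i)$, so by convexity of ${l_j^i}^*$ I get the bound $-{l_j^i}^*(-\alpha_j^i-\eta\Delta\alpha_j^i) \ge -(1-\eta){l_j^i}^*(-\alpha_j^i) - \eta\,{l_j^i}^*(-\alpha_j^i-\Delta\alpha_j^i)$; summing over $i,j$ reproduces exactly the $(1-\eta)$-weighted conjugate terms of $D(\boldsymbol{\alpha})$ plus the $\eta$-weighted conjugate terms appearing in $\sum_i\mathcal{D}_i^\rho$. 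That disposes of the non-smooth part and reduces everything to the quadratic term.

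For the quadratic part, I would expand
$\bigl(\boldsymbol{\alpha}+\eta\sum_i\Delta\boldsymbol{\alpha}_{[i]}\bigr)^T\mathbf{K}\bigl(\boldsymbol{\alpha}+\eta\sum_i\Delta\boldsymbol{\alpha}_{[i]}\bigr)
= \boldsymbol{\alpha}^T\mathbf{K}\boldsymbol{\alpha} + 2\eta\sum_i \boldsymbol{\alpha}^T\mathbf{K}\Delta\boldsymbol{\alpha}_{[i]} + \eta^2\bigl(\sum_i\Delta\boldsymbol{\alpha}_{[i]}\bigr)^T\mathbf{K}\bigl(\sum_i\Delta\boldsymbol{\alpha}_{[i]}\bigr).$
The cross term $\boldsymbol{\alpha}^T\mathbf{K}\Delta\boldsymbol{\alpha}_{[i]}$ must be matched to the linear term $\frac{1}{n_i}\sum_j\Delta\alpha_j^i\,\mathbf{w}_i(\boldsymbol{\alpha})^T\phi(\mathbf{x}_j^i)$ in $\mathcal{D}_i^\rho$; here I would plug in the definition \eqref{primal_dual_feasible} of $\mathbf{w}_i(\boldsymbol{\alpha})$ together with the explicit entries of $\mathbf{K}$ (the $\sigma_{ii'}/(n_in_{i'})$ factors) to verify these two expressions coincide up to the $\frac1\lambda$ scaling — this is a bookkeeping identity on indices. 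The genuinely substantive step is the cross-block quadratic term: I need $\eta^2\bigl(\sum_i\Delta\boldsymbol{\alpha}_{[i]}\bigr)^T\mathbf{K}\bigl(\sum_i\Delta\boldsymbol{\alpha}_{[i]}\bigr) \le \eta\rho\sum_i\Delta\boldsymbol{\alpha}_{[i]}^T\mathbf{K}\Delta\boldsymbol{\alpha}_{[i]}$, which is exactly what the choice $\rho\ge\rho_{\min}=\eta\max_{\boldsymbol{\alpha}}\frac{\boldsymbol{\alpha}^T\mathbf{K}\boldsymbol{\alpha}}{\sum_i\boldsymbol{\alpha}_{[i]}^T\mathbf{K}\boldsymbol{\alpha}_{[i]}}$ delivers: applying the defining inequality of $\rho_{\min}$ to the vector $\sum_i\Delta\boldsymbol{\alpha}_{[i]}$ (whose $i$-th coordinate block is $\Delta\boldsymbol{\alpha}_{[i]}$) gives $\bigl(\sum_i\Delta\boldsymbol{\alpha}_{[i]}\bigr)^T\mathbf{K}\bigl(\sum_i\Delta\boldsymbol{\alpha}_{[i]}\bigr)\le \frac{\rho_{\min}}{\eta}\sum_i\Delta\boldsymbol{\alpha}_{[i]}^T\mathbf{K}\Delta\boldsymbol{\alpha}_{[i]}\le\frac{\rho}{\eta}\sum_i\Delta\boldsymbol{\alpha}_{[i]}^T\mathbf{K}\Delta\boldsymbol{\alpha}_{[i]}$, and multiplying by $\eta^2$ finishes it.

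Finally I would reassemble: combining the conjugate bound, the cross-term identity, and the $\rho$-inequality, the right-hand side $-\frac{1}{2\lambda}[\,\cdots\,]$ is bounded below by $(1-\eta)$ times the kernel part of $D(\boldsymbol{\alpha})$ plus $\eta$ times $\sum_i\bigl(-\frac{1}{2\lambda m}\boldsymbol{\alpha}^T\mathbf{K}\boldsymbol{\alpha} - \frac{\rho}{2\lambda}\Delta\boldsymbol{\alpha}_{[i]}^T\mathbf{K}\Delta\boldsymbol{\alpha}_{[i]} - \frac{1}{n_i}\sum_j\Delta\alpha_j^i\mathbf{w}_i^T\phi(\mathbf{x}_j^i)\bigr)$ — note $\sum_{i=1}^m\frac{1}{m}\boldsymbol{\alpha}^T\mathbf{K}\boldsymbol{\alpha}=\boldsymbol{\alpha}^T\mathbf{K}\boldsymbol{\alpha}$, which is why the $\frac{1}{2\lambda m}$ normalization appears in $\mathcal{D}_i^\rho$. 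Adding back the conjugate terms then yields precisely $(1-\eta)D(\boldsymbol{\alpha}) + \eta\sum_i\mathcal{D}_i^\rho(\Delta\boldsymbol{\alpha}_{[i]};\mathbf{w}_i(\boldsymbol{\alpha}),\boldsymbol{\alpha}_{[i]})$. The main obstacle I anticipate is purely notational: keeping the global index $I_j^i$, the per-task normalizers $1/n_i$, and the covariance factors $\sigma_{ii'}$ consistent when showing the cross term equals the linear term; the convexity and the $\rho$-bound are each one-line arguments once the algebra is lined up correctly.
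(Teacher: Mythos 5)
Your proposal is correct and follows essentially the same route as the paper's proof: split $D$ into the separable conjugate part (handled by convexity/Jensen applied to ${l_j^i}^*$ at the convex combination $(1-\eta)(-\alpha_j^i)+\eta(-\alpha_j^i-\Delta\alpha_j^i)$) and the quadratic kernel part (expanded, with the cross term identified with $\tfrac{1}{\lambda}\Delta\boldsymbol{\alpha}_{[i]}^T\mathbf{K}\boldsymbol{\alpha}=\tfrac{1}{n_i}\sum_j\Delta\alpha_j^i\,\mathbf{w}_i(\boldsymbol{\alpha})^T\phi(\mathbf{x}_j^i)$ and the cross-block term controlled by the defining inequality of $\rho_{\min}$ applied to $\Delta\boldsymbol{\alpha}$). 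The reassembly, including the $\tfrac{1}{2\lambda m}$ normalization summing back to $\tfrac{1}{2\lambda}\boldsymbol{\alpha}^T\mathbf{K}\boldsymbol{\alpha}$, matches the paper exactly.
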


In Algorithm~\ref{DDCAMTL}, each worker implements the local dual stochastic coordinate ascent (SDCA) method on the local subproblem \eqref{localsubproblem} to reach an approximate solution. The detailed algorithm of the local SDCA method is presented in Algorithm~\ref{LSDCA}.
In each iteration, a coordinate $\alpha_j^i$ in $\boldsymbol{\alpha}_{[i]}$ is randomly selected and set to the update that maximizes the local subproblem $\mathcal{D}_i^\rho(\Delta\boldsymbol{\alpha}_{[i]};\mathbf{w}_i(\boldsymbol{\alpha}),{\boldsymbol{\alpha}}_{[i]})$ with other local coordinates fixed. $\mathbf{e}_j^i \in \mathbb{R}^{n}$ in Algorithm \ref{LSDCA} is defined as a basis vector with $\mathbf{e}_j^i(I_j^i) = 1$ and $0$ elsewhere.
\begin{algorithm}
	
	\caption{Local SDCA}
	\label{LSDCA}
	\begin{algorithmic}
		\STATE\textbf{Input:} $H \geq 1,\boldsymbol{\alpha}_{[i]},{\mathbf{w}_i}(\boldsymbol{\alpha}), \sigma_{ii}$
		\STATE\textbf{Data:} Local data $\{x_j^i,y_j^i \}_{j=1}^{n_i}$
		\STATE\textbf{Initialize:} $\boldsymbol{\Delta\alpha}_{[i]} \gets 0$
		\FOR {$h=1,2,\cdots,H$}
		\STATE choose $j \in {1,2,\cdots,n_i}$ uniformly at random
		\STATE $\quad {\delta}_j^i := \underset{{\delta}_j^i\in\mathbb{R} }{\text{argmax}} \mathcal{D}_i^\rho(\Delta\boldsymbol{\alpha}_{[i]}^{(h-1)}+\delta_j^i\mathbf{e}_j^i;\mathbf{w}_i(\boldsymbol{\alpha}),{\boldsymbol{\alpha}}_{[i]})$
		\STATE $\quad{\Delta\alpha_j^i}^{(h)} \gets {\Delta\alpha_j^i}^{(h-1)} + {\delta}_j^i$
		\ENDFOR
		\STATE \textbf{Output:} $ \Delta\boldsymbol{\alpha}_{[i]} $
		
	\end{algorithmic}
\end{algorithm}

\section{Convergence Analysis}\label{sec:convergence}
Since the optimization problem \eqref{formulation} is jointly convex with $\mathbf{W}$ and $\boldsymbol{\Omega}^{-1}$, the alternating optimization procedure is guaranteed to converge to the global optimal solution. Our analysis focuses on the distributed optimization in $\mathbf{W}$-step. Ideas of our convergence analysis come from distributed or stochastic primal-dual optimization methods for single task learning~\cite{MaSJJRT15,shalev2013stochastic}. However, our analysis is specific for multi-task learning and provides insights on how task relationships affect the convergence (section \ref{effect_task_relationship}).
Before conducting convergence analysis, we define Assumption~\ref{local_approx_assumption} that characterizes how well the local solution approximates the local optimal solution. In section~\ref{sec:local_convergence}, we analyze the local convergence of the local SDCA method in each worker, i.e. when is Assumption~\ref{local_approx_assumption} satisfied.
In section~\ref{sec:primal_dual_convergence}, we show the primal-dual convergence rate for the global update of $\mathbf{W}$-step when Assumption~\ref{local_approx_assumption} is satisfied.
\begin{assumption}\label{local_approx_assumption}
	($\Theta$-approximate solution). $\forall i \!\in\! [m]$, the local solver at any iteration $t \!\in\! [T]$ reaches an approximate update $\Delta \boldsymbol{\alpha}_{[i]}$ such that there exists a $\Theta \!\in\! [0,1)$, and the following inequality holds:
	\begin{eqnarray}
	\lefteqn{\mathbb{E} [\mathcal{D}_i^\rho(\Delta\boldsymbol{\alpha}_{[i]}^*;\mathbf{w}_i(\boldsymbol{\alpha}),{\boldsymbol{\alpha}}_{[i]}) - \mathcal{D}_i^\rho(\Delta\boldsymbol{\alpha}_{[i]};\mathbf{w}_i(\boldsymbol{\alpha}),{\boldsymbol{\alpha}}_{[i]})]} \nonumber \\
	& & \leq \Theta \bigg( \mathcal{D}_i^\rho(\Delta\boldsymbol{\alpha}_{[i]}^*;\mathbf{w}_i(\boldsymbol{\alpha}),{\boldsymbol{\alpha}}_{[i]}) - \mathcal{D}_i^\rho(\mathbf{0};\mathbf{w}_i(\boldsymbol{\alpha}),{\boldsymbol{\alpha}}_{[i]}) \bigg), \nonumber
	\end{eqnarray}
	where
	$\Delta\boldsymbol{\alpha}_{[i]}^* \in \underset{\Delta\boldsymbol{\alpha}_{[i]} \in \mathbb{R}^{n_i}}{\mathrm{argmax}} \mathcal{D}_i^\rho(\Delta\boldsymbol{\alpha}_{[i]};\mathbf{w}_i(\boldsymbol{\alpha}),{\boldsymbol{\alpha}}_{[i]})$, i.e. the optimal solution to the local subproblem.
\end{assumption}
The assumption characterizes how well the local subproblem is solved. The smaller $\Theta$ is, the better the local subproblem is  solved.

Note that in the following sections, due to the limit in space, for most theorems and Lemmas, proofs are deferred to the Appendix.

\subsection{Local Subproblem Convergence}\label{sec:local_convergence}
The following two theorems show the local subproblem convergence using SDCA as the local solver. In particular, by removing the negative sign from \eqref{localsubproblem}, the original maximization local subproblem can be written as the following minimization problem,
\begin{equation*}
\min\limits_{\Delta\boldsymbol{\alpha}_{[i]}}g(\Delta\boldsymbol{\alpha}_{[i]})+f(\Delta\boldsymbol{\alpha}_{[i]}),
\end{equation*}
where
\[g(\Delta\boldsymbol{\alpha}_{[i]}) = \frac{1}{n_i}\sum_{j=1}^{n_i}{l_j^i}^*(-\alpha_j^i - \Delta\alpha_j^i),\]
and
\[f(\Delta\boldsymbol{\alpha}_{[i]}) =  \frac{1}{2\lambda m}{\boldsymbol{\alpha}}^T \mathbf{K} {\boldsymbol{\alpha}} + \frac{1}{n_i}\sum_{j=1}^{n_i} \Delta\alpha_j^i {\mathbf{w}_i(\boldsymbol{\alpha})}^T \phi(\mathbf{x}_j^i) +
\frac{\rho}{2\lambda}  \Delta\boldsymbol{\alpha}_{[i]}^T \mathbf{K} \Delta\boldsymbol{\alpha}_{[i]},\]
whose gradient is coordinate-wise Lipschitz continuous. This type of objective function has been studied in Block Coordinate Descent~\citep{Richtarik2015,tappenden2015complexity}. We can show that the Local SDCA algorithm achieves the following convergence rate when applying it to the local subproblem of our algorithm. In the theorems, $q_{\text{max}} \!=\! \text{max}_j  \|\phi(\mathbf{x}_j^i)\|^2$.
\begin{theorem}\label{local_convergence_smooth}
	When functions $l_j^i(\cdot)$ are $(1/\mu)$-smooth for all $(i,j)$: Assumption~\ref{local_approx_assumption} holds for Local SDCA if the number of iterations $H$ satisfies
	\begin{equation*}
	H \geq \text{log}(\frac{1}{\Theta}) \frac{\rho\sigma_{ii} q_{\text{max}}+\mu\lambda n_i}{\mu\lambda }.
	\end{equation*}
\end{theorem}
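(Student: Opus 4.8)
\textbf{Proof proposal for Theorem~\ref{local_convergence_smooth}.}

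The plan is to recognize the local minimization problem $\min_{\Delta\boldsymbol{\alpha}_{[i]}} g(\Delta\boldsymbol{\alpha}_{[i]}) + f(\Delta\boldsymbol{\alpha}_{[i]})$ as an instance of randomized block/coordinate descent on a composite objective where the smooth part $f$ has coordinate-wise Lipschitz gradient and the nonsmooth part $g$ is separable and strongly convex, and then invoke the known linear convergence rate for such problems. First I would identify the relevant constants. The Hessian of the quadratic part $f$ restricted to the block $[i]$ is $\frac{\rho}{\lambda}\mathbf{K}_{[i][i]}$, whose diagonal entries are $\frac{\rho}{\lambda}\cdot\frac{\sigma_{ii}}{n_i^2}\|\phi(\mathbf{x}_j^i)\|^2$; hence each coordinate $\delta_j^i$ has a Lipschitz-gradient constant bounded by $\frac{\rho\sigma_{ii} q_{\max}}{\lambda n_i^2}$. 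On the other side, since each $l_j^i$ is $(1/\mu)$-smooth, Lemma~1 gives that ${l_j^i}^*$ is $\mu$-strongly convex, so $g$ is $\frac{\mu}{n_i}$-strongly convex as a function of $\Delta\boldsymbol{\alpha}_{[i]}$ (the factor $1/n_i$ coming from the averaging and a change of sign that does not affect strong convexity).

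Next I would write down the per-step expected decrease. For randomized coordinate descent with uniform sampling over $n_i$ coordinates on a composite objective with coordinate Lipschitz constant $L_{\text{coord}} \le \frac{\rho\sigma_{ii}q_{\max}}{\lambda n_i^2}$ and strong convexity modulus $\mu_g \ge \frac{\mu}{n_i}$, one step contracts the expected suboptimality by a factor $\bigl(1 - \frac{1}{n_i}\cdot\frac{\mu_g}{\mu_g + L_{\text{coord}}}\bigr)$; this is the standard estimate from \citep{Richtarik2015,tappenden2015complexity,shalev2013stochastic}. Substituting the constants, the contraction factor per step is at most
\[
1 - \frac{1}{n_i}\cdot\frac{\mu/n_i}{\mu/n_i + \rho\sigma_{ii}q_{\max}/(\lambda n_i^2)}
= 1 - \frac{\mu\lambda n_i}{n_i(\mu\lambda n_i + \rho\sigma_{ii}q_{\max})}
= 1 - \frac{\mu\lambda}{\mu\lambda n_i + \rho\sigma_{ii}q_{\max}}.
\]
After $H$ iterations the expected suboptimality is scaled by $\bigl(1 - \frac{\mu\lambda}{\rho\sigma_{ii}q_{\max}+\mu\lambda n_i}\bigr)^H$, and since $\mathcal{D}_i^\rho(\mathbf{0};\cdot) $ corresponds to the initial iterate $\Delta\boldsymbol{\alpha}_{[i]}=\mathbf{0}$, this exactly bounds the ratio appearing in Assumption~\ref{local_approx_assumption}. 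Using $(1-x)^H \le e^{-xH} \le \Theta$ whenever $H \ge \frac{1}{x}\log(1/\Theta)$ with $x = \frac{\mu\lambda}{\rho\sigma_{ii}q_{\max}+\mu\lambda n_i}$ yields precisely the stated bound on $H$.

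The main obstacle I anticipate is bookkeeping the constants correctly: tracking the $1/n_i$ factors in both $g$ and $f$, confirming that the coordinate-wise Lipschitz constant of $f$ is governed by the diagonal of $\frac{\rho}{\lambda}\mathbf{K}_{[i][i]}$ rather than by its spectral norm, and verifying that the quantity $\frac{1}{n_i}\sum_j\Delta\alpha_j^i\,\mathbf{w}_i(\boldsymbol{\alpha})^T\phi(\mathbf{x}_j^i)$ is linear in $\Delta\boldsymbol{\alpha}_{[i]}$ and so contributes nothing to either the Lipschitz or strong-convexity constants. A secondary subtlety is that SDCA on the dual subproblem performs \emph{exact} coordinate maximization rather than a prox-gradient step; I would handle this by noting that exact minimization along a coordinate only improves the decrease relative to the prox-gradient surrogate, so the same contraction factor still applies. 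Once these constant-chasing points are settled, the rest is a direct application of the composite randomized coordinate descent rate.
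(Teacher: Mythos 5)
Your proposal is correct and follows essentially the same route as the paper's proof: decompose the (negated) local objective into a separable $\frac{\mu}{n_i}$-strongly convex part $g$ and a quadratic part $f$ with coordinate-wise Lipschitz constant $\frac{\rho\sigma_{ii}q_{\max}}{\lambda n_i^2}$, invoke the randomized composite coordinate descent rate of \citep{Richtarik2015}, and your contraction factor $1-\frac{\mu\lambda}{\rho\sigma_{ii}q_{\max}+\mu\lambda n_i}$ matches the paper's exactly, as does the final bound on $H$. Your remark that exact coordinate maximization only improves on the prox-gradient surrogate is a sensible point the paper leaves implicit.
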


\begin{theorem}\label{local_convergence_lipschitz}
	When functions $l_j^i(\cdot)$ are L-Lipschitz for all $(i,j)$: Assumption~\ref{local_approx_assumption} holds for Local SDCA if the number of iterations $H$ satisfies
	\begin{equation*}
	H \geq n_i\bigg( \frac{1-\Theta}{\Theta} + \frac{\rho \sigma_{ii}q_{\text{max}} \| \Delta\boldsymbol{\alpha}_{[i]}^* \|^2}{2\Theta \lambda {n_i}^2 \big(\mathcal{D}_i^\rho(\Delta\boldsymbol{\alpha}_{[i]}^*;.)-\mathcal{D}_i^\rho(\mathbf{0};.)\big)} \bigg).
	\end{equation*}
\end{theorem}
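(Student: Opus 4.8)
The plan is to read the (negated) local subproblem as the composite convex minimization $\min_{\Delta\boldsymbol{\alpha}_{[i]}} g(\Delta\boldsymbol{\alpha}_{[i]})+f(\Delta\boldsymbol{\alpha}_{[i]})$ displayed just above the theorem, and to run the standard randomized block-coordinate-descent analysis for such objectives — the complexity results cited there — instantiated with the constants of our setting. First I would record the structural facts. The nonsmooth part $g(\Delta\boldsymbol{\alpha}_{[i]})=\frac{1}{n_i}\sum_{j}{l_j^i}^*(-\alpha_j^i-\Delta\alpha_j^i)$ is block-separable and convex; since each $l_j^i$ is $L$-Lipschitz, each conjugate has bounded effective domain, so $g$ is finite on a bounded box but — unlike the $(1/\mu)$-smooth case of Theorem~\ref{local_convergence_smooth}, where the conjugate is $\mu$-strongly convex — $g$ is \emph{not} strongly convex, which is precisely why the rate here is sublinear. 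The smooth part $f$ is a convex quadratic whose restriction to block $i$ equals $\tfrac{\rho\sigma_{ii}}{2\lambda n_i^2}\big\|\sum_{j}\Delta\alpha_j^i\phi(\mathbf{x}_j^i)\big\|^2$ plus a linear term; hence the $j$-th diagonal entry of its Hessian is $\tfrac{\rho\sigma_{ii}}{\lambda n_i^2}\|\phi(\mathbf{x}_j^i)\|^2\le\tfrac{\rho\sigma_{ii}q_{\text{max}}}{\lambda n_i^2}=:L_{\text{coord}}$, i.e.\ $\nabla f$ is coordinate-wise $L_{\text{coord}}$-Lipschitz, and the factor $\rho\sigma_{ii}q_{\text{max}}$ in the stated bound is exactly $\lambda n_i^2 L_{\text{coord}}$.

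Next I would derive the one-step progress inequality. Fix iteration $h$ with iterate $\Delta\boldsymbol{\alpha}_{[i]}^{(h)}$ (starting at $\Delta\boldsymbol{\alpha}_{[i]}^{(0)}=\mathbf{0}$), let $\Delta\boldsymbol{\alpha}_{[i]}^*$ be the subproblem optimum, and put $\epsilon_h:=\mathcal{D}_i^\rho(\Delta\boldsymbol{\alpha}_{[i]}^*;\cdot)-\mathcal{D}_i^\rho(\Delta\boldsymbol{\alpha}_{[i]}^{(h)};\cdot)$. Because Local SDCA maximizes $\mathcal{D}_i^\rho$ \emph{exactly} along the randomly drawn coordinate $j$ (so no step size enters), its update is at least as good as the ``model move'' $\Delta\alpha_j^{(h)}\mapsto\Delta\alpha_j^{(h)}+s({\Delta\alpha_j^i}^*-\Delta\alpha_j^{(h)})$ for any $s\in[0,1]$. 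Bounding the change of $f$ by its coordinate-wise quadratic upper bound, the change of $g_j$ by convexity along the segment $[\Delta\boldsymbol{\alpha}_{[i]}^{(h)},\Delta\boldsymbol{\alpha}_{[i]}^*]$, then averaging over the uniform choice of $j$ and using convexity of $f$ to replace $\langle\nabla f(\Delta\boldsymbol{\alpha}_{[i]}^{(h)}),\Delta\boldsymbol{\alpha}_{[i]}^*-\Delta\boldsymbol{\alpha}_{[i]}^{(h)}\rangle$ by $f(\Delta\boldsymbol{\alpha}_{[i]}^*)-f(\Delta\boldsymbol{\alpha}_{[i]}^{(h)})$, one gets, for every $s\in[0,1]$,
\[
\mathbb{E}\big[\epsilon_{h+1}\,\big|\,\Delta\boldsymbol{\alpha}_{[i]}^{(h)}\big]\;\le\;\Big(1-\tfrac{s}{n_i}\Big)\epsilon_h\;+\;\frac{s^2L_{\text{coord}}}{2n_i}\,\big\|\Delta\boldsymbol{\alpha}_{[i]}^*-\Delta\boldsymbol{\alpha}_{[i]}^{(h)}\big\|^2 .
\]
I would then bound the distance-to-optimum term by $\|\Delta\boldsymbol{\alpha}_{[i]}^*\|^2=:R^2$, using that the iterates start at $\mathbf{0}$ and, together with $\Delta\boldsymbol{\alpha}_{[i]}^*$, stay inside the bounded effective domain of $g$.

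The remaining step turns this recursion into the stated iteration count. Keeping $s$ free and optimizing it per step ($s\approx\min\{1,\epsilon_h/(L_{\text{coord}}R^2)\}$) gives, in the regime $\epsilon_h\lesssim L_{\text{coord}}R^2$, the telescoping inequality $\tfrac{1}{\epsilon_{h+1}}\ge\tfrac{1}{\epsilon_h}+\tfrac{1}{2n_iL_{\text{coord}}R^2}$, so that $\mathbb{E}[\epsilon_H]\le\Theta\epsilon_0$ holds after $O\!\big(n_iL_{\text{coord}}R^2\,\tfrac{1-\Theta}{\Theta\epsilon_0}\big)$ such steps; substituting $L_{\text{coord}}=\tfrac{\rho\sigma_{ii}q_{\text{max}}}{\lambda n_i^2}$, $R^2=\|\Delta\boldsymbol{\alpha}_{[i]}^*\|^2$ and $\epsilon_0=\mathcal{D}_i^\rho(\Delta\boldsymbol{\alpha}_{[i]}^*;\cdot)-\mathcal{D}_i^\rho(\mathbf{0};\cdot)$ reproduces, up to the sharp constants the full argument keeps track of, the second term of the claimed bound, while the complementary regime together with the lower-order corrections of the coordinate-descent complexity estimate supplies the remaining $n_i\tfrac{1-\Theta}{\Theta}$ iterations. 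Hence any $H$ meeting the stated lower bound forces $\mathbb{E}[\epsilon_H]\le\Theta\epsilon_0$, which is exactly Assumption~\ref{local_approx_assumption}.

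I expect the main obstacle to be the two places where the non-strong-convexity of $g$ bites: (i) controlling $\|\Delta\boldsymbol{\alpha}_{[i]}^*-\Delta\boldsymbol{\alpha}_{[i]}^{(h)}\|^2$ by $\|\Delta\boldsymbol{\alpha}_{[i]}^*\|^2$ rather than by an a priori unbounded, iterate-dependent quantity — this genuinely uses the starting point $\mathbf{0}$ and the bounded domain of the conjugate of a Lipschitz loss — and (ii) extracting the \emph{additive} form of the bound (the $\tfrac{1-\Theta}{\Theta}$ term plus the $R^2/\epsilon_0$ term) rather than a multiplicative one, which forces the per-step inequality to be carried with $s$ free and the two-regime bookkeeping to be done with care. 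The conjugate/separability structure, the coordinate-Lipschitz constant, and the convexity manipulations are otherwise routine.
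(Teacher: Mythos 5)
Your overall strategy is the same as the paper's: you identify the composite structure $g+f$ of the negated local subproblem, compute the coordinate-wise Lipschitz constant $L_{\text{coord}}=\tfrac{\rho\sigma_{ii}q_{\text{max}}}{\lambda n_i^2}$ of $\nabla f$, note that the $L$-Lipschitz losses give bounded-domain (but not strongly convex) conjugates so only a sublinear rate is available, and then aim at a bound of the form $\mathbb{E}[\epsilon_H]\le\tfrac{n_i}{n_i+H}\bigl(\epsilon_0+\tfrac12 L_{\text{coord}}\|\Delta\boldsymbol{\alpha}_{[i]}^*\|^2\bigr)$, from which the stated threshold on $H$ follows by exact arithmetic. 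The paper obtains that displayed rate in one line by invoking Theorem~3 of \citet{tappenden2015complexity}; you instead attempt to re-derive it, and that is where the gap lies.

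The problematic step is your claim that $\|\Delta\boldsymbol{\alpha}_{[i]}^*-\Delta\boldsymbol{\alpha}_{[i]}^{(h)}\|^2$ can be bounded by $\|\Delta\boldsymbol{\alpha}_{[i]}^*\|^2$ for every $h$ ``using that the iterates start at $\mathbf{0}$ and stay inside the bounded effective domain of $g$.'' Neither justification works: exact coordinate minimization of a convex composite function does \emph{not} monotonically decrease the Euclidean distance to the minimizer (a quadratic in two variables with a cross term already gives a counterexample after one coordinate step), and the bounded effective domain of the conjugates only yields the per-coordinate bound $|\Delta\alpha_j^i|\le 2L$, hence a distance bound of order $n_iL^2$, which is not $\|\Delta\boldsymbol{\alpha}_{[i]}^*\|^2$ and would change the constant in the theorem. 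The reason the \emph{initial} distance $\|\Delta\boldsymbol{\alpha}_{[i]}^{(0)}-\Delta\boldsymbol{\alpha}_{[i]}^*\|^2=\|\Delta\boldsymbol{\alpha}_{[i]}^*\|^2$ is the quantity that appears in the final bound is that the standard randomized-coordinate-descent analysis does not bound the per-iterate distance at all; it folds $\tfrac12\|\Delta\boldsymbol{\alpha}_{[i]}^{(h)}-\Delta\boldsymbol{\alpha}_{[i]}^*\|_{L}^2$ into a single Lyapunov quantity together with the suboptimality and telescopes, so only the initial distance survives. If you replace your step (i) by that telescoping argument (which is exactly what the cited theorem packages), the rest of your plan goes through; as written, your subsequent optimization over $s$ and the two-regime bookkeeping also only recover the bound ``up to sharp constants,'' whereas the clean $\tfrac{n_i}{n_i+H}$ form gives the theorem's threshold exactly.
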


\subsection{Primal-Dual Convergence Analysis}\label{sec:primal_dual_convergence}
Next, we show the primal-dual convergence of the global update step when solving $\mathbf{W}$. Before introducing the main theorems, we first introduce the following lemmas that describe the relationship between increase in dual objective and the duality gap.

\begin{lemma}\label{conv_lemma}
	$\forall i,j$, if ${l_j^i}^*(\cdot)$ is $\mu$ strongly convex (i.e., ${l_j^i}(\cdot)$ is $(1/\mu)$-smooth) and Assumption~\ref{local_approx_assumption} is fulfilled, then for all iterations $t\in[T]$ within $\mathbf{W}$-step of Algorithm~\ref{DDCAMTL} and $ \forall s \in [0,1]$,
	\begin{equation} \label{convergence_lemma_ineq}
	\mathbb{E}\left(D(\boldsymbol{\alpha}^{(t+1)})-D(\boldsymbol{\alpha}^{(t)}) \right) \geq \eta(1-\Theta) \left(sG(\boldsymbol{\alpha}^{(t)})-\frac{\rho}{2\lambda}s^2 Q^{(t)} \right), \nonumber
	\end{equation}
	where
	\begin{equation*}
	\begin{aligned}
	Q^{(t)} = & - \frac{\lambda\mu(1-s)}{\rho s}\sum_{i=1}^{m}\frac{1}{n_i}\left\|\mathbf{u}_{[i]}^{(t)} - \boldsymbol{\alpha}_{[i]}^{(t)} \right\|^2 + \sum_{i=1}^{m} \left(\mathbf{u}_{[i]}^{(t)} - \boldsymbol{\alpha}_{[i]}^{(t)}\right)^T \mathbf{K} \left(\mathbf{u}_{[i]}^{(t)} - \boldsymbol{\alpha}_{[i]}^{(t)} \right),
	\end{aligned}
	\end{equation*}
	with \\
	\begin{equation*}-{u_j^i}^{(t)} \in \partial l_j^i\left({\mathbf{w}_i(\boldsymbol{\alpha}^{(t)})}^T \mathbf{x}_j^i\right),\end{equation*} where $\partial l_j^i(z)$ denotes the set of subgradients of $l_j^i(\cdot)$ at $z$.
\end{lemma}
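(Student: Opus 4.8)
The plan is to follow the now-standard template for primal–dual convergence of CoCoA-style distributed SDCA, but carefully tracking the multi-task kernel matrix $\mathbf{K}$ (which carries the task-covariance entries $\sigma_{ii'}$) and the per-task scaling $1/n_i$. First I would use Lemma~\ref{dual_lemma}: for the aggregated update $\boldsymbol{\alpha}^{(t+1)} = \boldsymbol{\alpha}^{(t)} + \eta\sum_{i=1}^m \Delta\boldsymbol{\alpha}_{[i]}$, we have
\begin{equation*}
D(\boldsymbol{\alpha}^{(t+1)}) - D(\boldsymbol{\alpha}^{(t)}) \ge \eta\sum_{i=1}^m \Bigl(\mathcal{D}_i^\rho(\Delta\boldsymbol{\alpha}_{[i]};\mathbf{w}_i(\boldsymbol{\alpha}^{(t)}),\boldsymbol{\alpha}_{[i]}^{(t)}) - \mathcal{D}_i^\rho(\mathbf{0};\mathbf{w}_i(\boldsymbol{\alpha}^{(t)}),\boldsymbol{\alpha}_{[i]}^{(t)})\bigr),
\end{equation*}
since $\mathcal{D}_i^\rho(\mathbf{0};\cdot) = \tfrac{1}{m}D(\boldsymbol{\alpha}^{(t)})$ and these sum to $D(\boldsymbol{\alpha}^{(t)})$. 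Then I would invoke Assumption~\ref{local_approx_assumption} to replace each local improvement by $(1-\Theta)$ times the improvement attained by the \emph{optimal} local update $\Delta\boldsymbol{\alpha}_{[i]}^*$, and further lower-bound that optimum by plugging in \emph{any} feasible candidate direction. The natural candidate is $\Delta\boldsymbol{\alpha}_{[i]}^\star = s(\mathbf{u}_{[i]}^{(t)} - \boldsymbol{\alpha}_{[i]}^{(t)})$ for $s\in[0,1]$, where $\mathbf{u}^{(t)}$ is built from the subgradients $-u_j^i \in \partial l_j^i(\mathbf{w}_i(\boldsymbol{\alpha}^{(t)})^T\mathbf{x}_j^i)$.

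The core of the argument is then to expand $\sum_i \mathcal{D}_i^\rho(s(\mathbf{u}_{[i]}-\boldsymbol{\alpha}_{[i]});\cdot)$ and show it is at least $(1-\eta)\,\tfrac{1}{m}$-weighted $D(\boldsymbol{\alpha}^{(t)})$-free of... more precisely, that it equals $\tfrac{1}{m}D(\boldsymbol{\alpha}^{(t)})$ plus $s\,G(\boldsymbol{\alpha}^{(t)}) - \tfrac{\rho}{2\lambda}s^2 Q^{(t)}$ after using strong convexity. Here I would split the work: (i) the conjugate term $-\tfrac{1}{n_i}\sum_j {l_j^i}^*(-\alpha_j^i - s(u_j^i-\alpha_j^i))$ is handled via $\mu$-strong convexity of ${l_j^i}^*$ (equivalently $(1/\mu)$-smoothness of $l_j^i$, by the stated well-known Lemma), giving the term $-\tfrac{1}{n_i}\bigl[(1-s){l_j^i}^*(-\alpha_j^i) + s\,{l_j^i}^*(-u_j^i)\bigr] + \tfrac{\mu}{2}\tfrac{s(1-s)}{n_i}(u_j^i-\alpha_j^i)^2$, and then using the Fenchel–Young equality ${l_j^i}^*(-u_j^i) = -u_j^i\,(\mathbf{w}_i^T\mathbf{x}_j^i) - l_j^i(\mathbf{w}_i^T\mathbf{x}_j^i)$ valid because $-u_j^i$ is a subgradient; (ii) the linear term $-\tfrac{1}{n_i}\sum_j s(u_j^i-\alpha_j^i)\mathbf{w}_i^T\phi(\mathbf{x}_j^i)$ combines with the duality-gap pieces; (iii) the two quadratic-in-$\mathbf{K}$ terms $-\tfrac{1}{2\lambda m}\boldsymbol{\alpha}^T\mathbf{K}\boldsymbol{\alpha}$ and $-\tfrac{\rho}{2\lambda}s^2(\mathbf{u}_{[i]}-\boldsymbol{\alpha}_{[i]})^T\mathbf{K}(\mathbf{u}_{[i]}-\boldsymbol{\alpha}_{[i]})$ are collected, using the identity $\mathbf{w}_i(\boldsymbol{\alpha})^T\phi(\mathbf{x}_j^i) = \tfrac{1}{\lambda}(\mathbf{K}\boldsymbol{\alpha})_{I_j^i} \cdot$ (appropriate $n_i$ factor) from the primal–dual correspondence \eqref{primal_dual_feasible}, so that the cross terms telescope into $s$ times $P(\mathbf{W}(\boldsymbol{\alpha}^{(t)})) - D(\boldsymbol{\alpha}^{(t)}) = s\,G(\boldsymbol{\alpha}^{(t)})$. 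Summing over $i$ and collecting the leftover negative terms gives exactly $-\tfrac{\rho}{2\lambda}s^2 Q^{(t)}$ with $Q^{(t)}$ as stated. Finally multiply through by $\eta(1-\Theta)$ and take expectation to obtain \eqref{convergence_lemma_ineq}.

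The main obstacle will be bookkeeping the factors of $n_i$ and $\sigma_{ii'}$ correctly when relating $\mathbf{w}_i(\boldsymbol{\alpha})^T\phi(\mathbf{x}_j^i)$ to entries of $\mathbf{K}\boldsymbol{\alpha}$: because $\mathbf{K}$ already absorbs $\tfrac{\sigma_{ii'}}{n_i n_{i'}}\langle\phi(\mathbf{x}_j^i),\phi(\mathbf{x}_{j'}^{i'})\rangle$ while \eqref{primal_dual_feasible} has its own $\tfrac{1}{n_{i'}}$ and $\sigma_{ii'}$, one has to verify that $\tfrac{1}{n_i}\sum_j \Delta\alpha_j^i\,\mathbf{w}_i(\boldsymbol{\alpha})^T\phi(\mathbf{x}_j^i) = \tfrac{1}{\lambda}\boldsymbol{\alpha}^T\mathbf{K}\,\Delta\boldsymbol{\alpha}_{[i]}$ and that $\sum_i \Delta\boldsymbol{\alpha}_{[i]}^T\mathbf{K}\Delta\boldsymbol{\alpha}_{[i]}$ appears with the right coefficient — the asymmetry between "sum over all $i'$" inside $\mathbf{w}_i$ and "only block $i$" in the local quadratic is what forces the particular form of $\rho_{\min}$ and, here, the particular $Q^{(t)}$. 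A secondary subtlety is that in the $L$-Lipschitz (non-smooth) regime $\mu$ may be taken to $0$, so I would state the lemma's proof so that the term $-\tfrac{\lambda\mu(1-s)}{\rho s}\sum_i \tfrac{1}{n_i}\|\mathbf{u}_{[i]}^{(t)}-\boldsymbol{\alpha}_{[i]}^{(t)}\|^2$ simply vanishes, keeping the inequality valid (this is why it enters $Q^{(t)}$ with a minus sign, making $Q^{(t)}$ only larger when $\mu=0$ and the bound still correct). Everything else is Fenchel-conjugate algebra and convexity, which I would relegate to the appendix.
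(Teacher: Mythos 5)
Your proposal is correct and follows essentially the same route as the paper's proof: combine Lemma~\ref{dual_lemma} with Assumption~\ref{local_approx_assumption} to reduce the dual increase to the optimal local improvements, lower-bound those by plugging in the candidate $s(\mathbf{u}_{[i]}^{(t)}-\boldsymbol{\alpha}_{[i]}^{(t)})$, and then use $\mu$-strong convexity of ${l_j^i}^*$ together with the Fenchel--Young equality at the subgradient and the duality-gap identity to collect the terms into $sG(\boldsymbol{\alpha}^{(t)})-\frac{\rho}{2\lambda}s^2Q^{(t)}$. The only nit is your parenthetical claim that $\mathcal{D}_i^\rho(\mathbf{0};\cdot)=\frac{1}{m}D(\boldsymbol{\alpha}^{(t)})$, which is not literally true term-by-term (only the quadratic piece carries the $1/m$), but the fact you actually use --- that these quantities sum to $D(\boldsymbol{\alpha}^{(t)})$ --- is correct.
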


\begin{lemma}\label{Q_upp_bound}
	$\forall i,j$, if ${l_j^i}(\cdot)$ is L-Lipschitz continuous , then for all $t$,
	$Q^{(t)} \leq 4L^2 \pi$, where
	\[\pi=\sum_{i=1}^{m}\pi_i n_i, \;\mbox{ and }\;
	\pi_i = \underset{\boldsymbol{\alpha}_{[i]} \in \mathbb{R}^{n_i}}{\text{max}} \frac{\|\boldsymbol{\alpha}_{[i]}^T \mathbf{K} \boldsymbol{\alpha}_{[i]}\|^2}{\|\boldsymbol{\alpha}_{[i]}\|^2}.\]
\end{lemma}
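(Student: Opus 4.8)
The plan is to bound $Q^{(t)}$ by discarding the (nonpositive) first term and controlling the quadratic form in the second term. Observe that in the definition of $Q^{(t)}$ the coefficient $-\frac{\lambda\mu(1-s)}{\rho s}$ is nonpositive for $s\in[0,1]$, $\mu,\lambda,\rho>0$, so the first sum $-\frac{\lambda\mu(1-s)}{\rho s}\sum_{i}\frac{1}{n_i}\|\mathbf{u}_{[i]}^{(t)}-\boldsymbol{\alpha}_{[i]}^{(t)}\|^2$ can only decrease $Q^{(t)}$; hence
\begin{equation*}
Q^{(t)} \;\le\; \sum_{i=1}^{m}\bigl(\mathbf{u}_{[i]}^{(t)}-\boldsymbol{\alpha}_{[i]}^{(t)}\bigr)^T \mathbf{K}\bigl(\mathbf{u}_{[i]}^{(t)}-\boldsymbol{\alpha}_{[i]}^{(t)}\bigr).
\end{equation*}
It then remains to bound each term $(\mathbf{u}_{[i]}-\boldsymbol{\alpha}_{[i]})^T\mathbf{K}(\mathbf{u}_{[i]}-\boldsymbol{\alpha}_{[i]})$. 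Since $\mathbf{u}_{[i]}-\boldsymbol{\alpha}_{[i]}$ is supported on the $i$-th coordinate block, by the definition of $\pi_i$ we have $(\mathbf{u}_{[i]}-\boldsymbol{\alpha}_{[i]})^T\mathbf{K}(\mathbf{u}_{[i]}-\boldsymbol{\alpha}_{[i]}) \le \pi_i\|\mathbf{u}_{[i]}-\boldsymbol{\alpha}_{[i]}\|^2$ (reading $\pi_i$ as the maximal Rayleigh-type quotient of $\mathbf{K}$ restricted to block $i$; the squared-norm notation in the statement should be understood in this operator sense).

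Next I would bound $\|\mathbf{u}_{[i]}^{(t)}-\boldsymbol{\alpha}_{[i]}^{(t)}\|^2$ entrywise. For each coordinate $j$, we have $-u_j^{i(t)}\in\partial l_j^i(\cdot)$, and since $l_j^i$ is $L$-Lipschitz, every subgradient satisfies $|u_j^{i(t)}|\le L$. Similarly, the dual iterates $\alpha_j^{i(t)}$ produced by the algorithm stay in the domain of ${l_j^i}^*$, which for an $L$-Lipschitz loss is contained in $[-L,L]$ (the conjugate of an $L$-Lipschitz function has bounded domain $\subseteq[-L,L]$); starting from $\boldsymbol{\alpha}^{(0)}=\mathbf{0}$ and taking convex-combination updates with $\eta\le 1$ preserves this. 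Hence $|u_j^{i(t)}-\alpha_j^{i(t)}|\le 2L$ for every coordinate, giving $\|\mathbf{u}_{[i]}^{(t)}-\boldsymbol{\alpha}_{[i]}^{(t)}\|^2\le 4L^2 n_i$. Combining the three bounds yields $Q^{(t)}\le \sum_{i=1}^m \pi_i\cdot 4L^2 n_i = 4L^2\pi$, as claimed.

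The main obstacle is the coordinate-wise boundedness argument: one must argue carefully that the dual variables $\alpha_j^{i(t)}$ remain in $\operatorname{dom}{l_j^i}^*\subseteq[-L,L]$ throughout the algorithm — this requires checking that the Local SDCA coordinate-maximization steps never leave the domain (a maximizer of $\mathcal{D}_i^\rho$ over $\delta_j^i\in\mathbb{R}$ automatically lies in the domain because ${l_j^i}^*$ is $+\infty$ outside it) and that the aggregation step $\boldsymbol{\alpha}_{[i]}^{(t)}\gets\boldsymbol{\alpha}_{[i]}^{(t-1)}+\eta\Delta\boldsymbol{\alpha}_{[i]}$ with $\eta\in[0,1]$ keeps the iterate a convex combination of points in the (convex) domain. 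The rest is a routine chain of inequalities; care is only needed to state $\pi_i$ consistently with the restricted-block operator norm of $\mathbf{K}$ used in the second inequality.
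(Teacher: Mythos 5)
Your proposal is correct and follows essentially the same route as the paper's proof: drop the first term of $Q^{(t)}$ (the paper does so by noting $\mu=0$ for non-smooth convex losses, you by noting it is nonpositive), bound the block quadratic form by $\pi_i\|\mathbf{u}_{[i]}^{(t)}-\boldsymbol{\alpha}_{[i]}^{(t)}\|^2$, and bound each coordinate of $\mathbf{u}^{(t)}-\boldsymbol{\alpha}^{(t)}$ by $2L$ via the subgradient bound and the fact that ${l_j^i}^*$ is $+\infty$ outside $[-L,L]$ (the paper's Lemma~\ref{technical_L}). Your added care about why the iterates remain in $\operatorname{dom}{l_j^i}^*$, and your reading of $\pi_i$ as a Rayleigh quotient, match the intended argument.
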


When all $\phi(\mathbf{x}_j^i)$ are normalized to $\|\phi(\mathbf{x}_j^i)\|^2 \!\leq\! 1$, we have
$\pi_i \!\leq\! \frac{\sigma_{ii}}{n_i}$, and therefore
$Q^{(t)} \!\leq\! 4L^2\sum_{i=1}^{m}\sigma_{ii}$.

Now, we are ready to present the convergence theorems for smooth loss functions and non-smooth general convex loss functions in Theorem~\ref{convergence_smooth} and Theorem~\ref{convergence_general}, respectively.

\begin{theorem}\label{convergence_smooth}
	Consider $\mathbf{W}$-step in Algorithm \ref{DDCAMTL} with $\boldsymbol{\alpha}^{(0)} \!=\! 0$. Assume that $l_j^i(\cdot)$ are ($1/\mu$)-smooth for all $(i,j)$. Let $ i^* \!\!=\! \underset{i}{\text{argmax}}\frac{-\lambda\mu(1-s)}{\rho s n_i} + \pi_i$. To obtain $\mathbb{E}[D(\boldsymbol{\alpha}^*) \!-\! D(\boldsymbol{\alpha}^{(t)})] \leq \epsilon_D$, it suffices to have $t$ number of iterations with
	\begin{equation*}t \geq \frac{1}{\eta(1-\Theta)}\frac{\lambda \mu + \rho n_{i^*} \pi_{i^*}}{\lambda\mu} \textnormal{log}\frac{m}{\epsilon_D}.\end{equation*}
	
	To obtain expected duality gap \begin{small}$ \mathbb{E}[P(\mathbf{W}(\boldsymbol{\alpha}^{(t)})) \!-\! D(\boldsymbol{\alpha}^{(t)})] \!\leq\! \epsilon_G$\end{small}, it suffices to have $t$ number of iterations with
	\begin{equation*} 
	t \geq \frac{1}{\eta(1\!-\!\Theta)}\frac{\lambda \mu \!+\! \rho n_{i^*} \pi_{i^*}}{\lambda\mu} \textnormal{log}\bigg( \frac{m }{\eta(1\!-\!\Theta)} \frac{\lambda \mu \!+\! \rho n_{i^*} \pi_{i^*}}{\lambda\mu} \frac{1}{\epsilon_G} \bigg).\end{equation*}
\end{theorem}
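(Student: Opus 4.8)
The plan is to build the convergence result from Lemma~\ref{conv_lemma}, which already gives a one-step lower bound on the expected dual increase in terms of the duality gap $G(\boldsymbol{\alpha}^{(t)})$ and the quantity $Q^{(t)}$. The first step is to lower-bound $Q^{(t)}$ in the smooth case. Since $l_j^i$ is $(1/\mu)$-smooth its conjugate is $\mu$-strongly convex, and the point is that in Lemma~\ref{conv_lemma} we are free to choose $s \in [0,1]$; by choosing $s$ so that the first (negative, $-\lambda\mu(1-s)/(\rho s)$-weighted) term and the $\mathbf{K}$-quadratic term combine favorably, one can replace $Q^{(t)}$ by an expression controlled by $\pi_{i^*}$, using the definition of $i^*$ as the index maximizing $-\lambda\mu(1-s)/(\rho s n_i) + \pi_i$. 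Concretely I would bound
\[
Q^{(t)} \le \Big( \tfrac{-\lambda\mu(1-s)}{\rho s n_{i^*}} + \pi_{i^*} \Big)\sum_{i=1}^m \big\| \mathbf{u}_{[i]}^{(t)} - \boldsymbol{\alpha}_{[i]}^{(t)} \big\|^2 n_{i^*}
\]
or a similar form, and then fold the $\sum_i \|\mathbf{u}_{[i]} - \boldsymbol{\alpha}_{[i]}\|^2$ factor into the duality gap estimate. The hard part here is the careful tracking of the $n_i$-scalings and the choice of $s$ that makes the bracketed coefficient nonpositive (or small), so that the $s^2 Q^{(t)}$ term can be absorbed.

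The second step is to turn the one-step bound into a recursion for the dual suboptimality $\epsilon_D^{(t)} := \mathbb{E}[D(\boldsymbol{\alpha}^*) - D(\boldsymbol{\alpha}^{(t)})]$. Using $G(\boldsymbol{\alpha}^{(t)}) \ge D(\boldsymbol{\alpha}^*) - D(\boldsymbol{\alpha}^{(t)}) = \epsilon_D^{(t)}$ (weak duality and the fact that the gap upper-bounds the primal and hence dual suboptimality), and choosing $s$ to be a fixed constant determined by $\lambda\mu$, $\rho$, $n_{i^*}$, $\pi_{i^*}$, Lemma~\ref{conv_lemma} yields $\epsilon_D^{(t+1)} \le (1 - c)\,\epsilon_D^{(t)}$ with $c = \eta(1-\Theta)\,\tfrac{\lambda\mu}{\lambda\mu + \rho n_{i^*}\pi_{i^*}}$. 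Iterating the geometric decay from $t=0$ gives $\epsilon_D^{(t)} \le (1-c)^t \epsilon_D^{(0)}$, and bounding the initial gap: since $\boldsymbol{\alpha}^{(0)} = 0$ and the losses are normalized, $\epsilon_D^{(0)} \le D(\boldsymbol{\alpha}^*) - D(0) \le P(\mathbf{0}) - D(\mathbf{0})$, which for the normalized loss setting is bounded by $m$ (one per task, after the $1/n_i$ averaging). Then $(1-c)^t m \le \epsilon_D$ whenever $t \ge \tfrac{1}{c}\log\tfrac{m}{\epsilon_D}$, using $\log\tfrac{1}{1-c} \ge c$, which is exactly the claimed iteration count $t \ge \tfrac{1}{\eta(1-\Theta)}\tfrac{\lambda\mu + \rho n_{i^*}\pi_{i^*}}{\lambda\mu}\log\tfrac{m}{\epsilon_D}$.

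The third step handles the duality gap bound. Here one cannot simply iterate, because Lemma~\ref{conv_lemma} bounds the dual increase by the gap, not the other way around. The standard trick (following the single-task primal-dual analyses cited) is: rearrange Lemma~\ref{conv_lemma} to get $\eta(1-\Theta) s\, G(\boldsymbol{\alpha}^{(t)}) \le \mathbb{E}[D(\boldsymbol{\alpha}^{(t+1)}) - D(\boldsymbol{\alpha}^{(t)})] + \eta(1-\Theta)\tfrac{\rho}{2\lambda}s^2 Q^{(t)}$, then sum this inequality over a window of iterations from $t_0$ to $T$; the telescoping dual-increase terms collapse to $\mathbb{E}[D(\boldsymbol{\alpha}^{(T+1)}) - D(\boldsymbol{\alpha}^{(t_0)})] \le \epsilon_D^{(t_0)}$, so the average (or minimum) gap over the window is bounded by roughly $\epsilon_D^{(t_0)}$ plus a $Q$-term. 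Choosing $t_0$ to be the first-stage iteration count so that $\epsilon_D^{(t_0)}$ is already small (of order $\epsilon_G$ times the right constant), and $s$ accordingly, gives the gap bound with the extra logarithmic factor $\log\big(\tfrac{m}{\eta(1-\Theta)}\tfrac{\lambda\mu + \rho n_{i^*}\pi_{i^*}}{\lambda\mu}\tfrac{1}{\epsilon_G}\big)$, matching the statement.

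The main obstacle I anticipate is the $Q^{(t)}$ analysis in the first step: getting the $s$-dependence right so that the negative strong-convexity term in $Q^{(t)}$ exactly cancels the growth of the $\mathbf{K}$-quadratic term, and extracting the clean constant $\rho n_{i^*}\pi_{i^*}$ rather than a messier task-dependent quantity. The telescoping-window argument in the third step is also delicate because one must pick the window start $t_0$ and the parameter $s$ jointly so that both the residual dual suboptimality and the accumulated $Q$-error are of order $\epsilon_G$; this is where the nested logarithm in the final bound comes from, and it requires re-invoking the first (smooth dual-convergence) result at the point $t_0$.
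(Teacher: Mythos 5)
Your first two steps follow the paper's argument almost exactly: you bound $Q^{(t)}$ by $\sum_i\bigl(-\tfrac{\lambda\mu(1-s)}{\rho s n_i}+\pi_i\bigr)\|\mathbf{u}_{[i]}^{(t)}-\boldsymbol{\alpha}_{[i]}^{(t)}\|^2$, pick $s=\tfrac{\lambda\mu}{\lambda\mu+\rho n_{i^*}\pi_{i^*}}$ so that the worst coefficient vanishes and $Q^{(t)}\le 0$, then run the geometric recursion with $G(\boldsymbol{\alpha}^{(t)})\ge D(\boldsymbol{\alpha}^*)-D(\boldsymbol{\alpha}^{(t)})$ and $D(\boldsymbol{\alpha}^*)-D(\boldsymbol{\alpha}^{(0)})\le m$. (Your displayed bound on $Q^{(t)}$ carries a spurious factor of $n_{i^*}$ outside the sum, but since you hedge with ``or a similar form'' and the intended cancellation is the right one, I read that as a typo rather than an error.)

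The genuine gap is in your third step. The theorem asks for $\mathbb{E}[P(\mathbf{W}(\boldsymbol{\alpha}^{(t)}))-D(\boldsymbol{\alpha}^{(t)})]\le\epsilon_G$ at the iterate $t$ itself, but the telescoping-window argument you propose only controls the \emph{average} (or minimum) gap over a window $[t_0,T]$ --- that is the machinery the paper reserves for the non-smooth case (Theorem~\ref{convergence_general}), where the conclusion is correspondingly stated for the averaged iterate $\bar{\boldsymbol{\alpha}}$. In the smooth case no summing is needed and the claim about the final iterate follows pointwise: since $Q^{(t)}\le 0$, Lemma~\ref{conv_lemma} gives
\begin{equation*}
\eta(1-\Theta)\,s\,G(\boldsymbol{\alpha}^{(t)})\;\le\;\mathbb{E}\bigl[D(\boldsymbol{\alpha}^{(t+1)})-D(\boldsymbol{\alpha}^{(t)})\bigr]\;\le\;\mathbb{E}\bigl[D(\boldsymbol{\alpha}^{*})-D(\boldsymbol{\alpha}^{(t)})\bigr],
\end{equation*}
because the one-step dual increase can never exceed the remaining dual suboptimality. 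Hence $G(\boldsymbol{\alpha}^{(t)})\le\epsilon_G$ as soon as $\mathbb{E}[D(\boldsymbol{\alpha}^*)-D(\boldsymbol{\alpha}^{(t)})]\le\eta(1-\Theta)s\,\epsilon_G$, and substituting $\epsilon_D=\eta(1-\Theta)s\,\epsilon_G$ into your first-part iteration count produces the nested logarithm directly --- it does not come from balancing a window start $t_0$ against an accumulated $Q$-error, as you anticipate. As written, your step three would either prove a weaker statement (about an average iterate) or require an extra argument to pass from the window average back to the final iterate; replace it with the one-line pointwise bound above.
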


\begin{theorem}\label{convergence_general}
	Let ${l_j^i}(\cdot)$ be L-Lipschitz continuous and $\epsilon_G > 0$ be the duality gap. Then after $T$ iterations in $\mathbf{W}$-step of Algorithm \ref{DDCAMTL}, when
	\begin{eqnarray}
	T   & \geq & T_0 + \textnormal{max} \left\{\ceil[\Big]{\frac{1}{\eta(1-\Theta)}},\frac{4L^2 \pi \rho}{\lambda\epsilon_G\eta(1-\Theta) }  \right\}, \nonumber \\
	T_0 & \geq & t_0 + \bigg(\frac{2}{\eta(1-\Theta)}\left(\frac{8L^2\pi\rho}{\lambda \epsilon_G} -1\right) \bigg)_{\!\!+}, \nonumber \\
	t_0 & \geq & \textnormal{max}\bigg(0,\ceil[\Big] {\frac{1}{\eta(1-\Theta)}\textnormal{log} \bigg( \frac{2\lambda m}{4L^2\pi\rho} \bigg)} \bigg), \nonumber
	\end{eqnarray}
	we have $\mathbb{E}[P(\mathbf{w}(\bar{\boldsymbol{\alpha}})) \!-\! D(\bar{\boldsymbol{\alpha}})] \!\leq\! \epsilon_G$, where $\bar{\boldsymbol{\alpha}}$ is the average $ \boldsymbol{\alpha}$ over $T_0\!+\!1$ to $T$ iterations,
	$\bar{\boldsymbol{\alpha}} \!=\! \frac{1}{T-T_0}\sum_{t=T_0}^{T-1} \boldsymbol{\alpha}^{(t)}$.
\end{theorem}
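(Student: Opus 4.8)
The plan is to run everything off the single one-step inequality of Lemma~\ref{conv_lemma}, specialized to the non-smooth regime. An $L$-Lipschitz loss has a conjugate that is convex but not strongly convex, so I set $\mu=0$ in Lemma~\ref{conv_lemma}; the term $-\lambda\mu(1-s)/(\rho s)$ in $Q^{(t)}$ then drops out, $Q^{(t)}=\sum_i(\mathbf{u}_{[i]}^{(t)}-\boldsymbol{\alpha}_{[i]}^{(t)})^T\mathbf{K}(\mathbf{u}_{[i]}^{(t)}-\boldsymbol{\alpha}_{[i]}^{(t)})\ge 0$, and Lemma~\ref{Q_upp_bound} gives $Q^{(t)}\le 4L^2\pi$. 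Writing $\epsilon_D^{(t)}:=D(\boldsymbol{\alpha}^*)-D(\boldsymbol{\alpha}^{(t)})$, Lemma~\ref{conv_lemma} becomes, for every $s\in[0,1]$,
\[
\mathbb{E}\big[\epsilon_D^{(t)}-\epsilon_D^{(t+1)}\big]\;\ge\;\eta(1-\Theta)\Big(s\,\mathbb{E}\big[G(\boldsymbol{\alpha}^{(t)})\big]-\tfrac{2\rho L^2\pi}{\lambda}\,s^2\Big).
\]
Using weak duality $G(\boldsymbol{\alpha}^{(t)})\ge\epsilon_D^{(t)}$ gives the contraction $\mathbb{E}[\epsilon_D^{(t+1)}]\le\mathbb{E}[\epsilon_D^{(t)}]-\eta(1-\Theta)(s\,\mathbb{E}[\epsilon_D^{(t)}]-\frac{2\rho L^2\pi}{\lambda}s^2)$, while dividing by $s$ gives the ``gap form'' $\mathbb{E}[G(\boldsymbol{\alpha}^{(t)})]\le\frac{1}{s\eta(1-\Theta)}\mathbb{E}[\epsilon_D^{(t)}-\epsilon_D^{(t+1)}]+\frac{2\rho L^2\pi}{\lambda}s$. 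I also record the crude start $\epsilon_D^{(0)}\le P(\mathbf{0})-D(\mathbf{0})=\sum_i\frac{1}{n_i}\sum_j\big(l_j^i(0)+{l_j^i}^*(0)\big)\le m$, using $\boldsymbol{\alpha}^{(0)}=\mathbf{0}$ and the usual normalization $l_j^i(0)+{l_j^i}^*(0)\le1$.

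The argument then splits into the three phases matching the indices $t_0\le T_0\le T$. \emph{Phase I} (iterations $1,\dots,t_0$): fix $s=1$, so the contraction reads $\mathbb{E}[\epsilon_D^{(t+1)}]\le(1-\eta(1-\Theta))\mathbb{E}[\epsilon_D^{(t)}]+\eta(1-\Theta)\frac{2\rho L^2\pi}{\lambda}$ with fixed point $\frac{2\rho L^2\pi}{\lambda}$; unrolling, bounding $1-x\le e^{-x}$, and using $\epsilon_D^{(0)}\le m$ shows that after $t_0=\lceil\frac{1}{\eta(1-\Theta)}\log\frac{2\lambda m}{4L^2\pi\rho}\rceil$ iterations $\mathbb{E}[\epsilon_D^{(t_0)}]\le\frac{4L^2\pi\rho}{\lambda}$. \emph{Phase II} (iterations $t_0,\dots,T_0$): a fixed $s$ cannot drop $\epsilon_D$ below $\frac{2\rho L^2\pi}{\lambda}$, so I instead maximize $s\,\mathbb{E}[\epsilon_D^{(t)}]-\frac{2\rho L^2\pi}{\lambda}s^2$ over $s$, which in the range $\mathbb{E}[\epsilon_D^{(t)}]\le\frac{4L^2\pi\rho}{\lambda}$ is attained at $s_t=\lambda\,\mathbb{E}[\epsilon_D^{(t)}]/(4\rho L^2\pi)\le1$ — used only in the analysis, via a deterministic upper-bound schedule $\beta_t\ge\mathbb{E}[\epsilon_D^{(t)}]$ so no unobservable quantity enters the algorithm — and hence $\mathbb{E}[\epsilon_D^{(t+1)}]\le\mathbb{E}[\epsilon_D^{(t)}]-\frac{\eta(1-\Theta)\lambda}{8\rho L^2\pi}(\mathbb{E}[\epsilon_D^{(t)}])^2$. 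The elementary fact that $a_{t+1}\le a_t-c\,a_t^2$ implies $1/a_t\ge1/a_{t_0}+c(t-t_0)$ then yields $\mathbb{E}[\epsilon_D^{(T_0)}]\le\frac{8\rho L^2\pi}{\lambda(2+\eta(1-\Theta)(T_0-t_0))}$, and the stated requirement $T_0-t_0\ge\frac{2}{\eta(1-\Theta)}(\frac{8L^2\pi\rho}{\lambda\epsilon_G}-1)_+$ forces $\mathbb{E}[\epsilon_D^{(T_0)}]\le\epsilon_G/2$.

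\emph{Phase III} (iterations $T_0,\dots,T-1$, the averaging phase): fix $s=\min\{1,\lambda\epsilon_G/(4\rho L^2\pi)\}$ so that $\frac{2\rho L^2\pi}{\lambda}s\le\epsilon_G/2$. Summing the gap form over $t=T_0,\dots,T-1$ telescopes the dual-suboptimality differences, giving $\frac{1}{T-T_0}\sum_{t=T_0}^{T-1}\mathbb{E}[G(\boldsymbol{\alpha}^{(t)})]\le\frac{\mathbb{E}[\epsilon_D^{(T_0)}]}{s\,\eta(1-\Theta)(T-T_0)}+\frac{2\rho L^2\pi}{\lambda}s$. Because $\boldsymbol{\alpha}\mapsto\mathbf{w}(\boldsymbol{\alpha})$ is linear, $\boldsymbol{\alpha}\mapsto P(\mathbf{w}(\boldsymbol{\alpha}))$ is convex while $D$ is concave, so $G$ is convex; Jensen then gives $\mathbb{E}[G(\bar{\boldsymbol{\alpha}})]\le\frac{1}{T-T_0}\sum_{t=T_0}^{T-1}\mathbb{E}[G(\boldsymbol{\alpha}^{(t)})]$. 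Substituting $\mathbb{E}[\epsilon_D^{(T_0)}]\le\epsilon_G/2$ from Phase II, together with $T-T_0\ge\max\{\lceil\frac{1}{\eta(1-\Theta)}\rceil,\frac{4L^2\pi\rho}{\lambda\epsilon_G\eta(1-\Theta)}\}$ and a short case split on whether $s=1$ or $s<1$, makes the first term $\le\epsilon_G/2$ as well, whence $\mathbb{E}[G(\bar{\boldsymbol{\alpha}})]\le\epsilon_G$.

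The main obstacle is exactly what forces this three-level structure: Phase I alone reaches only dual suboptimality $O(\rho L^2\pi/\lambda)$, and feeding that straight into the averaging bound would produce a $1/\epsilon_G^2$ iteration count, so the polynomial-decay Phase II is indispensable for pushing $\mathbb{E}[\epsilon_D^{(T_0)}]$ down to $O(\epsilon_G)$ and recovering the correct $1/\epsilon_G$ rate. All multi-task-specific quantities — the inter-task covariances $\sigma_{ii'}$ and the per-task sample sizes $n_i$ — are absorbed into $\pi$ (through $\mathbf{K}$) and $\rho$ before Phase I begins, so once $Q^{(t)}\le4L^2\pi$ is in hand the remainder is the standard primal--dual SDCA averaging machinery; the only delicacies left are carrying constants through the ceilings and the $(\cdot)_+$ operations, and handling the randomness of $\epsilon_D^{(t)}$ via Jensen and the deterministic $\beta_t$-schedule.
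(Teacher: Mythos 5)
Your proof is correct and follows essentially the same route as the paper's: the one-step inequality of Lemma~\ref{conv_lemma} with $\mu=0$ and the bound $Q^{(t)}\le 4L^2\pi$ from Lemma~\ref{Q_upp_bound}, a geometric burn-in with $s=1$ reaching dual suboptimality $4L^2\pi\rho/\lambda$ after $t_0$ rounds, an $O(1/(t-t_0))$ decay phase bringing it to $\epsilon_G/2$ by $T_0$, and finally Jensen on the convex gap plus the telescoped gap form over the averaged iterates. The only differences are cosmetic -- the paper runs the middle phase as an induction with the explicit schedule $s=\frac{1}{1+\frac{\eta}{2}(1-\Theta)(t-t_0)}$ and the averaging phase with $s=\frac{1}{(T-T_0)\eta(1-\Theta)}$, whereas you optimize $s$ per step and invoke the $a_{t+1}\le a_t-c\,a_t^2$ recursion -- and both arrive at identical constants.
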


Note that regarding the primal-dual convergence analysis, our framework is not restricted to use the SDCA method as the local solver. Any other local optimization methods that achieve a $\Theta$-approximate solution could be used to achieve primal-dual convergence for the global problem. Our analysis shows that the outer iteration $T$ depends on $\Theta$, i.e., how local solution approximates the optimal local solution. This implies the trade-off between local computation ($\Theta$) and rounds of communication ($T$). We will discuss it in details in the experiments section.

\subsection{Effect of Task Relationships on Primal-Dual Convergence Rate}\label{effect_task_relationship}
Finally, in this section, we analyze how task relationships affect the primal-dual convergence rate in our algorithm. Previously from~\eqref{rhodef}, we know that the parameter $\rho$ must be not smaller than $\rho_{\text{min}}$.
We have the upper bound for $\rho_{\text{min}}$ given by Lemma~\ref{rho_upp_bound}.
\begin{lemma}\label{rho_upp_bound}
	$\rho_{\text{min}}$ is upper bounded by $\eta \times \underset{i}{\text{max}} \sum_{i'=1}^{m} \frac{|\sigma_{ii'}|}{\sigma_{ii}}$.
\end{lemma}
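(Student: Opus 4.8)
The plan is to exhibit an explicit feasible "test" quantity that makes the Rayleigh-type quotient defining $\rho_{\min}$ easy to bound, and then reduce everything to a block-structured estimate of $\boldsymbol{\alpha}^T\mathbf{K}\boldsymbol{\alpha}$ in terms of the diagonal blocks $\boldsymbol{\alpha}_{[i]}^T\mathbf{K}\boldsymbol{\alpha}_{[i]}$. Recall from \eqref{rhodef} that
\[
\rho_{\min} = \eta \max_{\boldsymbol{\alpha}\in\mathbb{R}^n} \frac{\boldsymbol{\alpha}^T\mathbf{K}\boldsymbol{\alpha}}{\sum_{i=1}^m \boldsymbol{\alpha}_{[i]}^T\mathbf{K}\boldsymbol{\alpha}_{[i]}}.
\]
First I would write $\boldsymbol{\alpha}^T\mathbf{K}\boldsymbol{\alpha} = \sum_{i,i'} \boldsymbol{\alpha}_{[i]}^T\mathbf{K}\boldsymbol{\alpha}_{[i']}$ (a consistent block decomposition, since the cross terms of $\mathbf{K}$ are exactly the blocks $(I_j^i,I_{j'}^{i'})$ with $i\neq i'$). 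The key structural observation is that, by the definition of $\mathbf{K}$, the $(i,i')$ block factors through the task covariance: writing $\mathbf{b}_i = \frac{1}{n_i}\sum_{j} \alpha_j^i \phi(\mathbf{x}_j^i)$, one has $\boldsymbol{\alpha}_{[i]}^T\mathbf{K}\boldsymbol{\alpha}_{[i']} = \sigma_{ii'}\langle \mathbf{b}_i,\mathbf{b}_{i'}\rangle$, so in particular $\boldsymbol{\alpha}_{[i]}^T\mathbf{K}\boldsymbol{\alpha}_{[i]} = \sigma_{ii}\|\mathbf{b}_i\|^2$.

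The main step is then a Cauchy–Schwarz / AM–GM estimate on the cross terms. For $i\neq i'$,
\[
\sigma_{ii'}\langle \mathbf{b}_i,\mathbf{b}_{i'}\rangle \le |\sigma_{ii'}|\,\|\mathbf{b}_i\|\,\|\mathbf{b}_{i'}\| \le \frac{|\sigma_{ii'}|}{2}\Bigl(\|\mathbf{b}_i\|^2 + \|\mathbf{b}_{i'}\|^2\Bigr),
\]
and by symmetry of $\boldsymbol{\Sigma}$ the sum of cross terms is bounded by $\sum_{i}\|\mathbf{b}_i\|^2 \sum_{i'\neq i}|\sigma_{ii'}|$. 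Combining with the diagonal part, $\boldsymbol{\alpha}^T\mathbf{K}\boldsymbol{\alpha} \le \sum_i \|\mathbf{b}_i\|^2 \sum_{i'=1}^m |\sigma_{ii'}|$. Since the denominator is $\sum_i \sigma_{ii}\|\mathbf{b}_i\|^2$, I would factor $\sum_{i'}|\sigma_{ii'}| = \sigma_{ii}\cdot\frac{\sum_{i'}|\sigma_{ii'}|}{\sigma_{ii}}$ and bound the ratio termwise: the quotient of the two weighted sums (with weights $\sigma_{ii}\|\mathbf{b}_i\|^2 \ge 0$) is at most $\max_i \frac{\sum_{i'}|\sigma_{ii'}|}{\sigma_{ii}}$. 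Multiplying by $\eta$ gives the claimed bound. (Here I use that $\sigma_{ii} = \boldsymbol{\alpha}_{[i]}^T\mathbf{K}\boldsymbol{\alpha}_{[i]}/\|\mathbf{b}_i\|^2 > 0$ because $\boldsymbol{\Sigma}=\boldsymbol{\Omega}^{-1}\succ 0$, which also handles the degenerate case $\mathbf{b}_i=\mathbf{0}$ by simply dropping that block.)

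The only mild obstacle I anticipate is bookkeeping the block-index identification — verifying that $\boldsymbol{\alpha}_{[i]}^T\mathbf{K}\boldsymbol{\alpha}_{[i']} = \frac{\sigma_{ii'}}{n_i n_{i'}}\sum_{j,j'}\alpha_j^i\alpha_{j'}^{i'}\langle\phi(\mathbf{x}_j^i),\phi(\mathbf{x}_{j'}^{i'})\rangle = \sigma_{ii'}\langle\mathbf{b}_i,\mathbf{b}_{i'}\rangle$ directly from the stated $(I_j^i,I_{j'}^{i'})$-entry of $\mathbf{K}$ — but this is purely mechanical. An alternative, slicker route avoids $\mathbf{b}_i$ entirely: write $\mathbf{K} = \mathbf{D}^{1/2}(\boldsymbol{\Sigma}\otimes\text{(Gram pieces)})\mathbf{D}^{1/2}$ schematically and note that the worst-case ratio is governed by the largest generalized eigenvalue of $\boldsymbol{\Sigma}$ against its diagonal, which by Gershgorin is at most $\max_i \sum_{i'}|\sigma_{ii'}|/\sigma_{ii}$; but the elementary Cauchy–Schwarz argument above is cleaner to write out and self-contained, so that is the one I would present.
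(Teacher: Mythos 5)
Your proposal is correct and follows essentially the same route as the paper's proof: decompose $\boldsymbol{\alpha}^T\mathbf{K}\boldsymbol{\alpha}$ into blocks $\boldsymbol{\alpha}_{[i]}^T\mathbf{K}\boldsymbol{\alpha}_{[i']}=\sigma_{ii'}\langle\mathbf{b}_i,\mathbf{b}_{i'}\rangle$, apply the AM--GM bound $|\sigma_{ii'}|\,\|\mathbf{b}_i\|\,\|\mathbf{b}_{i'}\|\le\tfrac{1}{2}|\sigma_{ii'}|(\|\mathbf{b}_i\|^2+\|\mathbf{b}_{i'}\|^2)$, symmetrize, and bound the resulting weighted-sum ratio by $\max_i\sum_{i'}|\sigma_{ii'}|/\sigma_{ii}$. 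The paper presents exactly this Cauchy--Schwarz/AM--GM argument (not the Gershgorin alternative you mention), so no further comparison is needed.
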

\begin{proof}
	\begin{equation*}\label{alpha_K_alpha}
	\begin{aligned}
	{\boldsymbol{\alpha}}^T \mathbf{K} \boldsymbol{\alpha}
	& = \sum_{i=1}^{m} {{\boldsymbol{\alpha}}_{[i]}^T \mathbf{K} \sum_{i'=1}^{m} \boldsymbol{\alpha}_{[i']}}\\
	& = \sum_{i=1}^{m} \sum_{i'=1}^{m} \boldsymbol{\alpha}_{[i]}^T \mathbf{K}\boldsymbol{\alpha}_{[i']}\\
	& = \sum_{i=1}^{m} \sum_{i'=1}^{m} \sigma_{ii'}  \langle \frac{1}{n_i} \sum_{i=1}^{n_i} \alpha_j^i \mathbf{x}_j^i, \frac{1}{n_{i'}}  \sum_{i'=1}^{n_{i'}} \alpha_j^{i'} \mathbf{x}_j^{i'} \rangle \\
	& \leq \sum_{i=1}^{m} \sum_{i'=1}^{m} \frac{1}{2} |\sigma_{ii'}| \left( \frac{1}{{n_i}^2} \left\| \sum_{i=1}^{n_i} \alpha_j^i \mathbf{x}_j^i\right\|^2   \!+\! \frac{1}{{n_{i'}}^2} \left\| \sum_{i'=1}^{n_{i'}} \alpha_j^{i'} \mathbf{x}_j^{i'}\right\|^2      \right) \\
	& = \sum_{i=1}^{m} \sum_{i'=1}^{m} \frac{1}{2} \bigg( \frac{|\sigma_{ii'}|}{\sigma_{ii}} \boldsymbol{\alpha}_{[i]}^T \mathbf{K}\boldsymbol{\alpha}_{[i]} + \frac{|\sigma_{ii'}|}{\sigma_{i'i'}} \boldsymbol{\alpha}_{[i']}^T \mathbf{K}\boldsymbol{\alpha}_{[i']} \bigg)\\
	& = \sum_{i=1}^{m} \sum_{i'=1}^{m} \frac{|\sigma_{ii'}|}{\sigma_{ii}} \boldsymbol{\alpha}_{[i]}^T \mathbf{K}\boldsymbol{\alpha}_{[i]}.
	\end{aligned}
	\end{equation*}
	It follows that
	\begin{equation*}
	\begin{small}
	{\text{max}} \frac{{\boldsymbol{\alpha}}^T \mathbf{K} \boldsymbol{\alpha}} {\sum_{i=1}^{m} {\boldsymbol{\alpha}}_{[i]}^T \mathbf{K} \boldsymbol{\alpha}_{[i]}} \leq \underset{i}{\text{max}} \sum_{i'=1}^{m} \frac{|\sigma_{ii'}|}{\sigma_{ii}}.
	\end{small}
	\end{equation*}
\end{proof}

This upper bound on $\rho_{\text{min}}$ can be interpreted as the maximum sum of relative task covariance between task $i$ and all other tasks. Consider two extreme conditions of the upper bound:
\begin{itemize}
	\item Every task is equally correlated. In this case, the precision matrix $\boldsymbol{\Omega}$ is a Laplacian matrix defined on a fully connected graph with $0/1$ weight. Then the task covariance matrix $\boldsymbol{\Sigma}=\boldsymbol{\Omega}^{-1}$ has equal elements. Therefore, the upper bound of $\rho_{\text{min}}$ becomes $\eta \times m$, where $m$ is the number of tasks.
	\item Every task has no correlation with each other. Under this condition, as $\boldsymbol{\Sigma}=\boldsymbol{\Omega}^{-1}$ is learned from the uncorrelated \{$\mathbf{w}_i$\}'s, the absolute values of its diagonal elements dominate others. Therefore, the upper bound of $\rho_{\text{min}}$ becomes close to $\eta$.
\end{itemize}

From Theorems \ref{convergence_smooth} and \ref{convergence_general}, the smaller $\rho_{\text{min}}$ is, the faster the primal-dual convergence rate is.
This is coherent with the MTL intuition. When the tasks have no or very weak correlation with each other, there is no or very little interaction between the updates from each task. Each task's weight vector could be updated almost independently. And thus the convergence rate will be faster in this case. On the contrary, when the tasks have strong correlation with each other, there will be relatively strong interaction between the updates from all tasks. As a result, the interaction between each other's updates will impact the convergence rate to become slower compared to the former situation. Looking from another angle, $\rho_{\text{min}}$ could be interpreted as a measure of the separability of the objective function in~\eqref{generaldual}. Smaller $\rho_{\text{min}}$ means that the objective function is easier to be separated and distributed. Therefore, the primal-dual convergence rate will become faster.

\section{Experiments}\label{sec:expriment}

\subsection{Implementation Details and Setup}
We implement DMTRL on Petuum~\citep{xing2015petuum}, which is a distributed machine learning platform.\footnote{Note that our proposed method could be implemented on other distributed machine learning platforms as well.} And we run it on a local cluster consisting of 4 machines with 16 worker cores each. For datasets whose number of tasks is less than the total number of cores, we assign each task to one worker core. Otherwise, we equally distribute tasks over the available cores and each core run the local subproblem update sequentially task by task. Due to limitation of resources, we are not able to distribute each task on one machine. However, the experimental results presented later show good convergence performance and promise for distributing over more machines. In all the experiments, we set the aggregation scaling parameter $\eta\!=\!1$, and $\rho$ is set to $\underset{i}{\text{max}} \sum_{i'\!=1}^{m} \!\!\frac{|\sigma_{ii'}|}{\sigma_{ii}}$ in each $\boldsymbol{\Omega}$-step according to Lemma~\ref{rho_upp_bound}. Regarding $\phi(\cdot)$ in DMTRL, we use a linear mapping.
We compare our method with three baselines:

\begin{itemize}
	\item Single Task Learning (\textbf{STL}): each task is solved independently as a single empirical risk minimization problem.
	
	\item \textbf{Centralized MTRL}: all tasks are gathered in one machine and MTRL is implemented centrally as described in~\citep{zhang2010convex}. This baseline can be considered as a gold standard solution for learning task relationships for MTL, but it fails to work in a distributed computation manner.
	
	\item Single-machine SDCA (\textbf{SSDCA}): all tasks are centralized in one machine where SDCA is performed over all coordinates of $\boldsymbol{\alpha}$. This method could handle the case when there is too much centralized data that \textbf{Centralized MTRL} could not handle. It could be regarded as a scalable single machine solution to MTRL.
\end{itemize}

We conduct extensive experiments on the following synthetic and benchmark datasets.
\begin{itemize}
	\item \textbf{Synthetic 1}: we generate a synthetic dataset for binary classification with 16 tasks with feature dimension 100. Weight vectors of three ``parent'' tasks $\{\mathbf{w_1},\mathbf{w_6},\mathbf{w_{11}}\}$ are first randomly initiated. Then other tasks' weight vectors (``child'' tasks) are initialized by choosing one randomly from $\{\mathbf{w_1},-\mathbf{w_1},\\\mathbf{w_6},-\mathbf{w_6},\mathbf{w_{11}}, -\mathbf{w_{11}}\}$ and adding some random noise to the parameters. The negative sign is to simulate tasks with negative relationships. The instances for each task are randomly generated. Labels are generated using the logistic regression model. The averaged number of training instances per task is 1,894, while the averaged number of test instances per task is 811. The total number of instances equals 43,280.
	
	\item \textbf{Synthetic 2}: another synthetic dataset is generated using the same data as the first one but with different task weight parameters such that there are more task correlations than the first one. For \textbf{Synthetic 2}, $\rho$ (\eqref{rhodef} in Lemma~\ref{dual_lemma}) equals to $12.9457$ while $\rho = 6.2418$ for \textbf{Synthetic 1}. We generate this dataset to compare the primal-dual convergence rates of $\mathbf{W}$-step under different situations of task correlations.
	
	\item \textbf{School}: this is a regression dataset which contains examination scores of 15,362 students from 139 schools. Each school corresponds to a task. By adding 1 to the end of all data to account for the bias term, each data point has 28 features. The averaged number of training instances per task is 83 and the averaged number of testing instances per task is 28. For training and testing samples, we use the splits given by~\citep{argyriou2008convex}.
	
	\item \textbf{MNIST}: this is a large hand-written digits dataset with 10 classes. It contains 60,000 training and 10,000 testing instances. The data points have a feature dimension of 784.  We treat each task as an one v.s. all binary classification task in our experimental setting and draw equal number of instances from other classes randomly and assign negative labels. Thus, we arrive at training instances of 120,000 and testing instances 20,000 in total for 10 tasks.
	
	\item \textbf{MDS}~\citep{blitzer2007biographies}: this is a dataset of product reviews on 25 domains (apparel, books, DVD, etc.) crawled from Amazon.com. We delete three domains with less than 100 instances and make it a multi-task learning problem with 22 tasks. Each task is a sentiment classification task that classifies a review as negative or positive. The number of instances per task varies from 314 to 20,751, with average data size of 4,150. Training and testing samples are obtained using a 70\%-30\% split.
\end{itemize}

The statistics about the above 5 datasets is summarized in Table~\ref{dataset-table}. For all datasets, we perform experiments on 10 random splits and report the averaged results. We use hinge loss for classification problems and squared loss for regression problems.

\begin{table}[h!]
	\caption{Statistics of the datasets}
	\label{dataset-table}
	\begin{center}
		\begin{tabular}{lcccc}
			\hline
			\hline
			Dataset & \# Tasks & \# Instances & Dims & Sparsity(\%) \\
			\hline
			Synthetic (1 \& 2) & 16 & 43,280 & 100 & 100 \\
			School & 139 & 15,362  & 28     & 32.14 \\
			MNIST  & 10  & 140,000 & 784    & 19.14 \\
			MDS    & 22  & 91,290  & 10,000 & 0.9   \\
			\hline
		\end{tabular}
	\end{center}
\end{table}

\subsection{Results on Synthetic Datasets}
Our first experiment is designed to test whether task relationships can be well recovered by our proposed DMTRL in a distributed computation manner. Figures~\ref{fig:Visualization of Learnt Correlation} and Figures~\ref{fig:Visualization of True Correlation} show the comparison between the learned task correlation matrix and the ground-truth on Synthetic 1. We can see that DMTRL is able to capture the correlation between tasks accurately, and the discrepancy between the learned correlation and the ground-truth is within reasonable amount.
\begin{figure}[h!]
	\begin{center}
		\subfigure[Task correlations learned by DMTRL]{\label{fig:Visualization of Learnt Correlation}\includegraphics[trim = 12cm 0cm 9cm 0cm, clip,width=0.315\columnwidth]{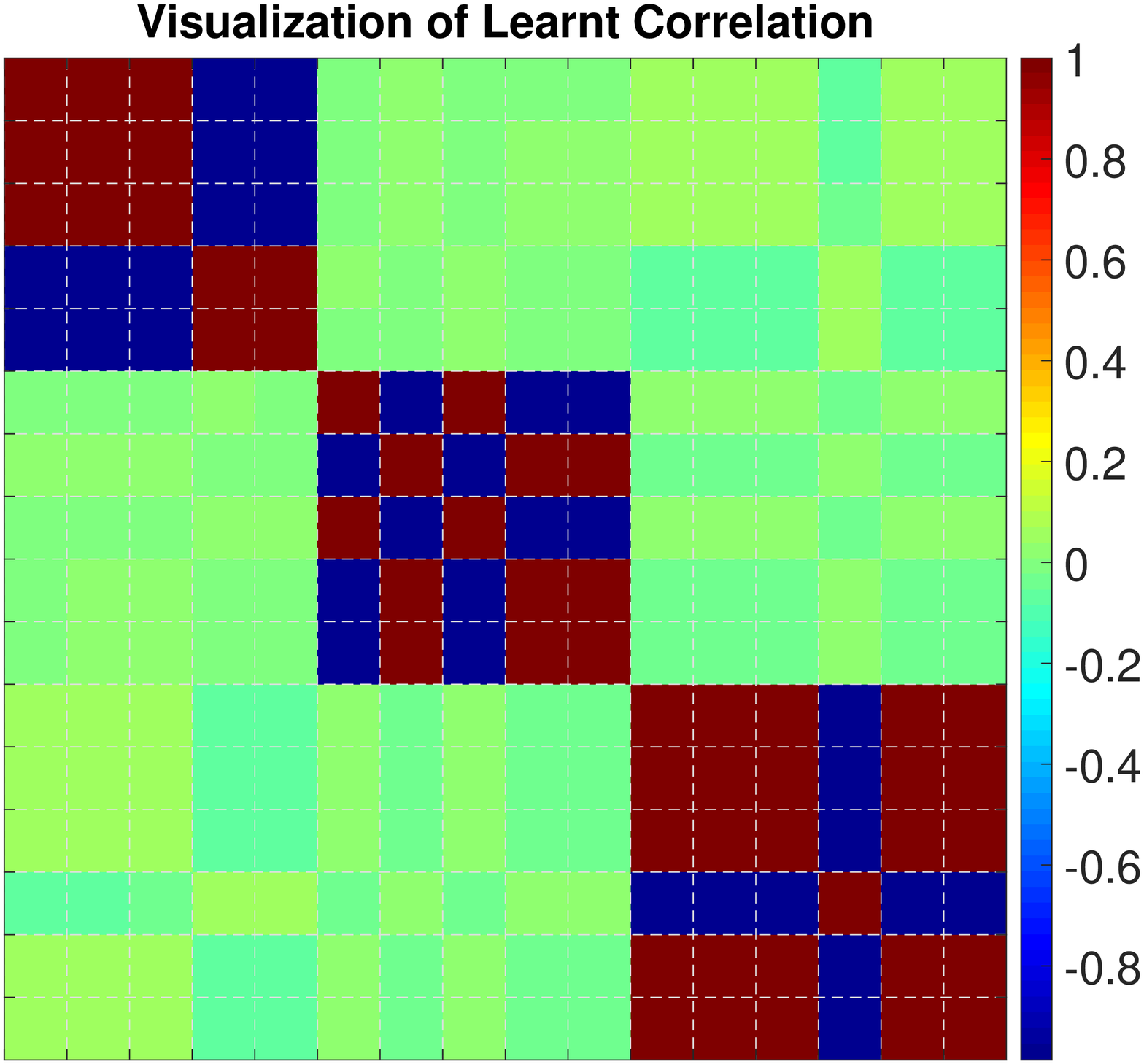}}\hspace{2mm}
		\subfigure[Ground truth task correlations]{\label{fig:Visualization of True Correlation}\includegraphics[trim =  12cm 0cm 9cm 0cm, clip,width=0.315\columnwidth]{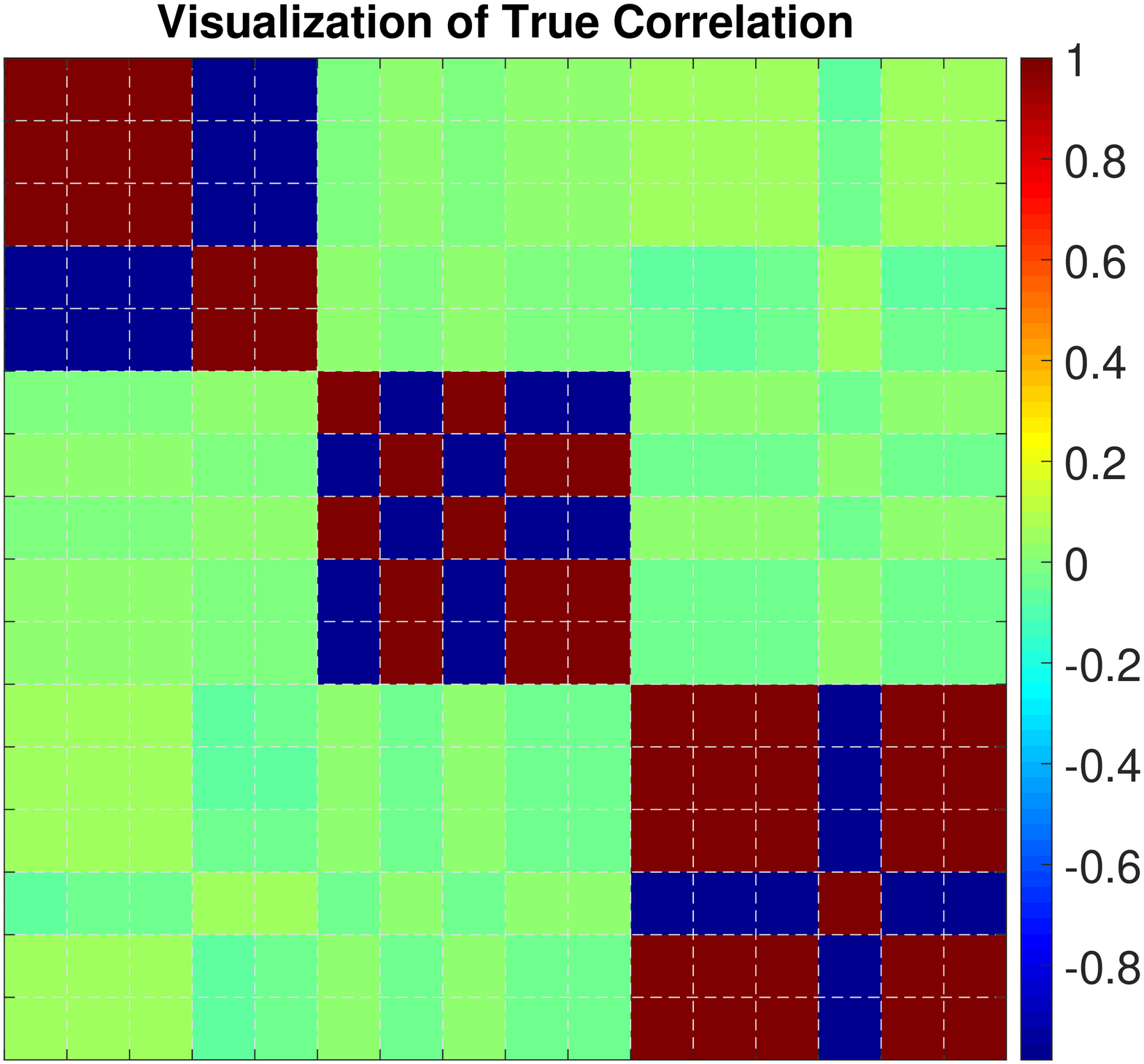}}
		\caption{Learned correlation v.s. ground-truth correlation.}\label{fig:synthetic_results1}
	\end{center}
\end{figure}

Our second experiment is designed to test under different situation of tasks correlations, i.e., different values of $\rho$, how our proposed algorithm converges. Figure~\ref{fig:commu_syn_1_2_compare} shows the comparison results of primal-dual convergence rate on Synthetic 1 and Synthetic 2. Convergence rate is slower when there are more task correlations (Synthetic 2) given same data. This verifies our discussion of the impact of task relationships on primal-dual convergence rate.

\begin{figure}[h!]
	\centering
	\includegraphics[trim = 11cm 0cm 12cm 0cm, clip, width=0.4\columnwidth]{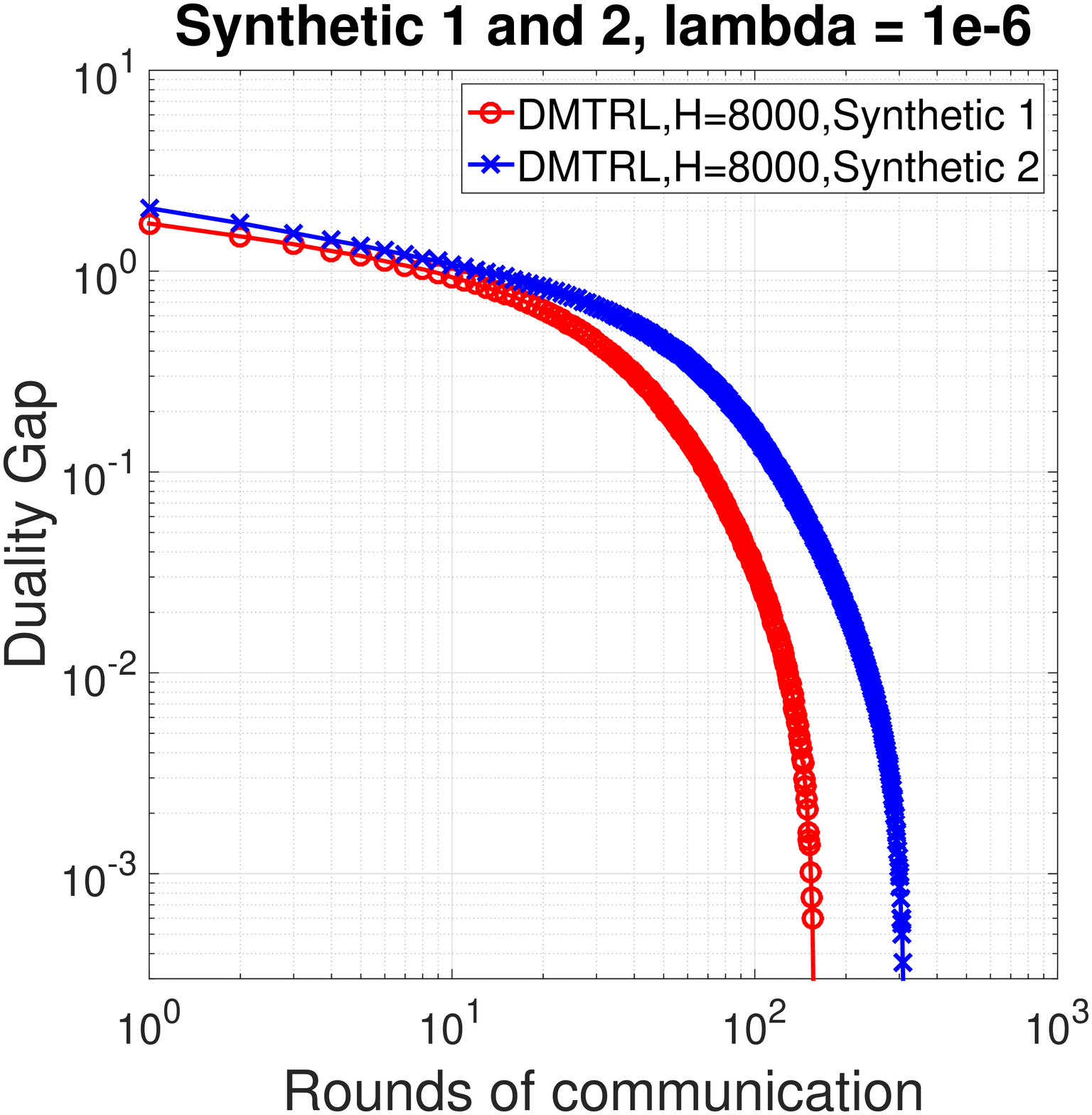}
	\caption{Convergence on different task correlations}\label{fig:commu_syn_1_2_compare}
\end{figure}
\begin{figure}[h!]
	\begin{center}
		\subfigure[Duality gap v.s. Time]{\label{fig:dualGap_syn_1e-6}\includegraphics[trim = 10cm 0cm 12cm 0cm,clip,width=0.315\columnwidth]{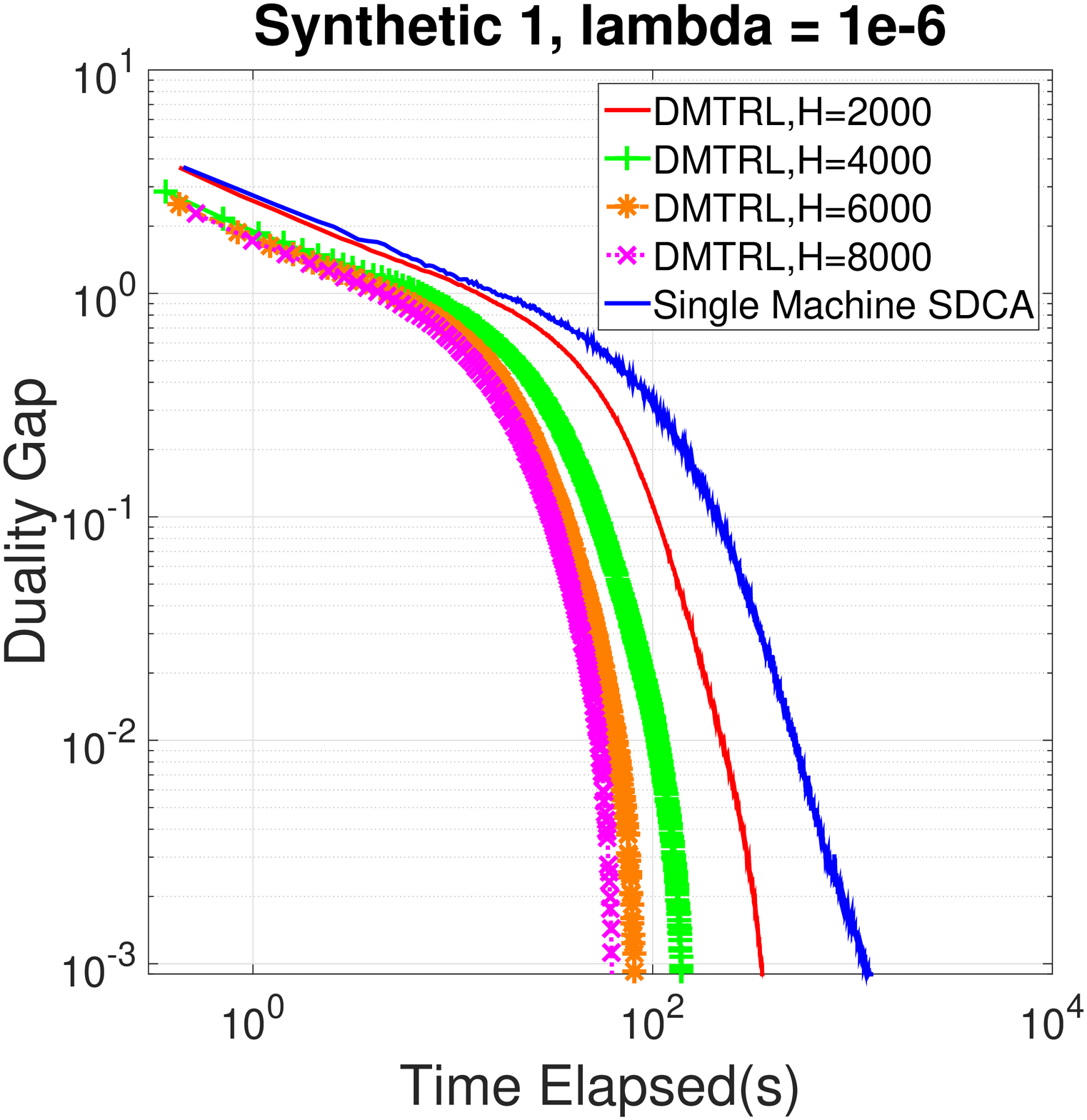}}
		\subfigure[Duality gap v.s. Comm.]{\label{fig:commu_syn_1e-6}\includegraphics[trim = 10cm 0cm 12cm 0cm,clip,width=0.315\columnwidth]{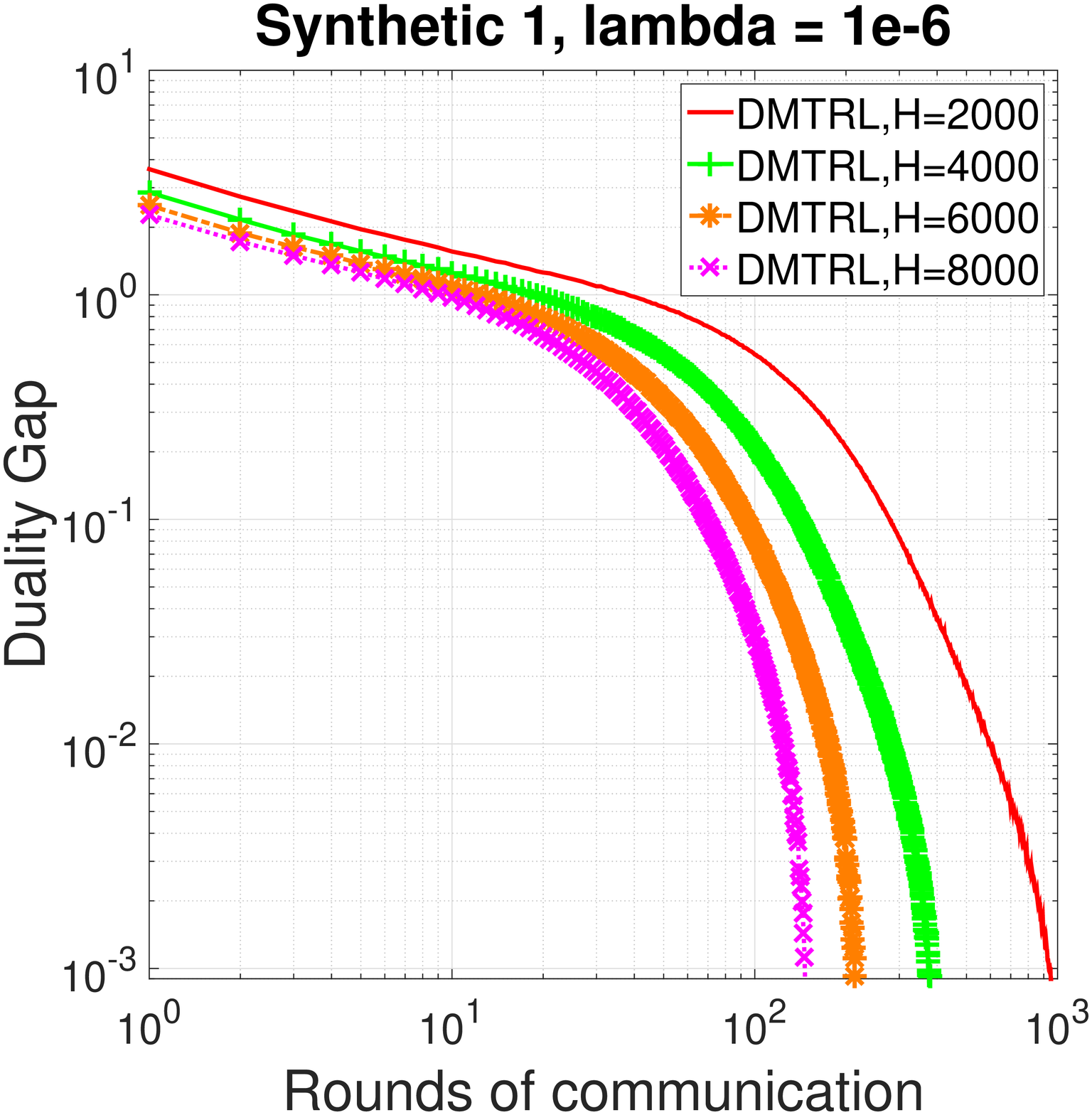}}
		\subfigure[Prediction v.s. Comm.]{\label{fig:pred_syn_1e-6}\includegraphics[trim = 10cm 0cm 12cm 0cm,clip,width=0.315\columnwidth]{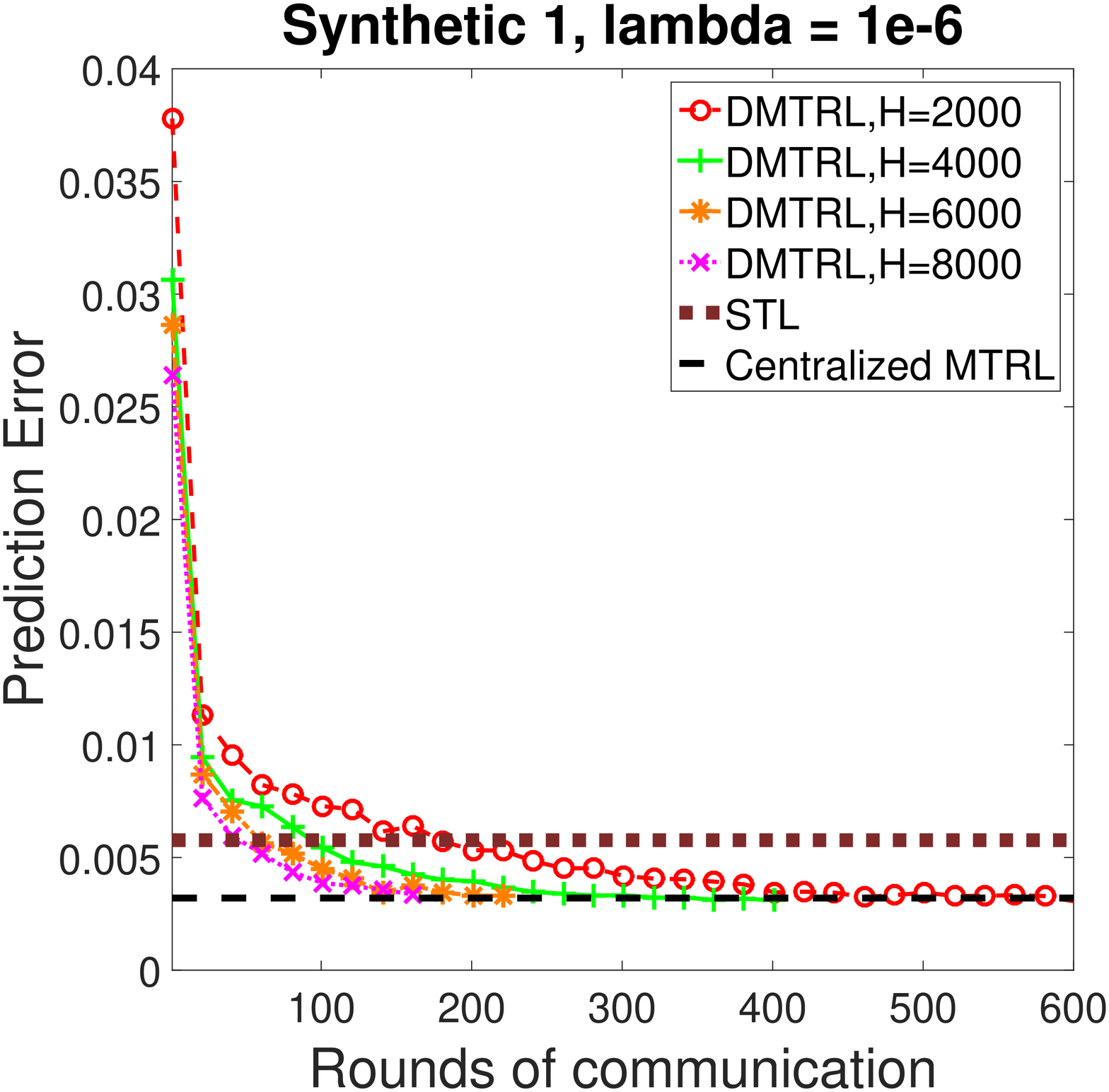}}
		\\
		\subfigure[Duality gap v.s. Time]{\label{fig:dualGap_syn_1e-5}\includegraphics[trim = 10cm 0cm 12cm 0cm,clip,width=0.315\columnwidth]{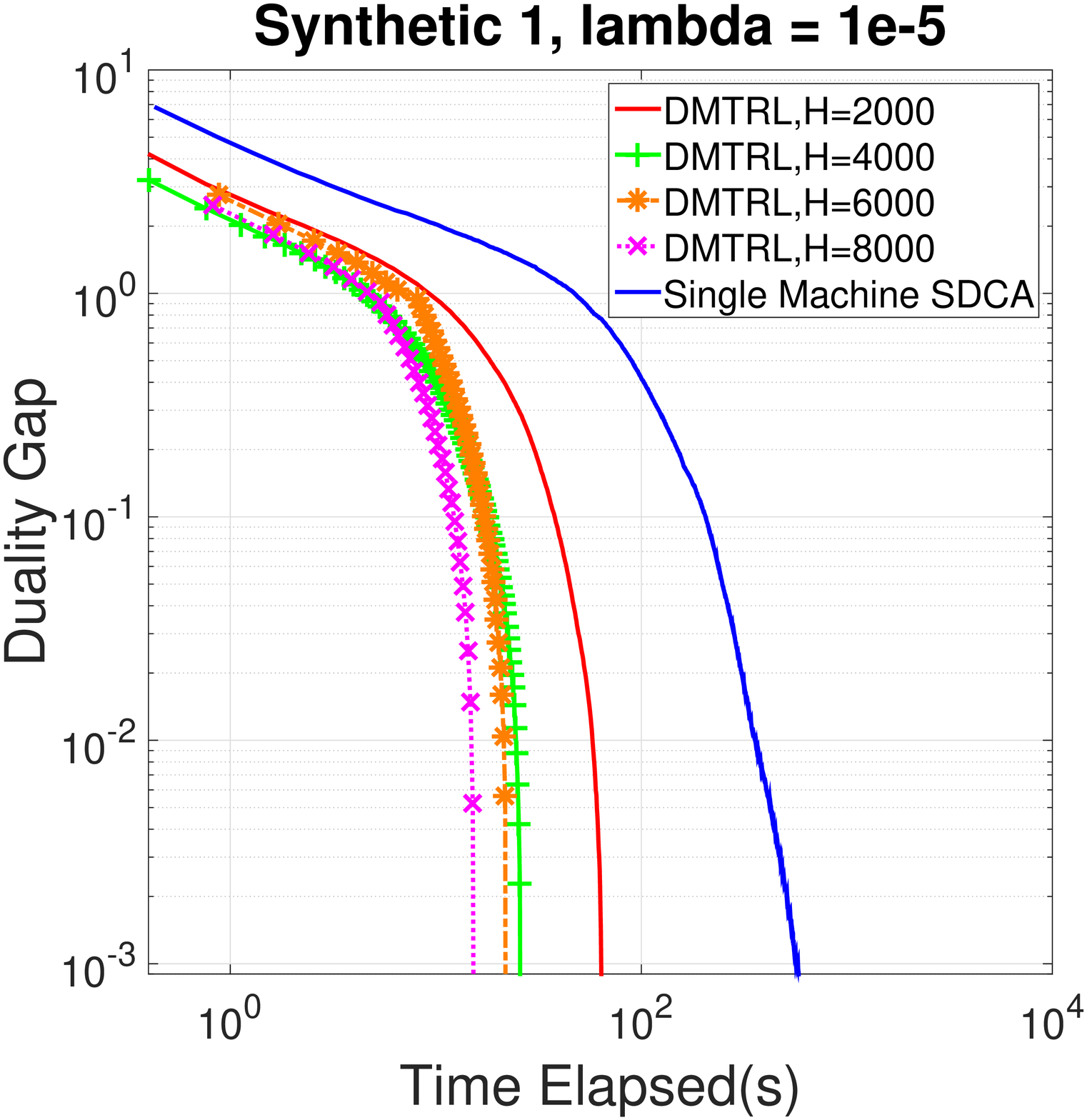}}
		\subfigure[Duality gap v.s. Comm.]{\label{fig:commu_syn_1e-5}\includegraphics[trim = 10cm 0cm 12cm 0cm,clip,width=0.315\columnwidth]{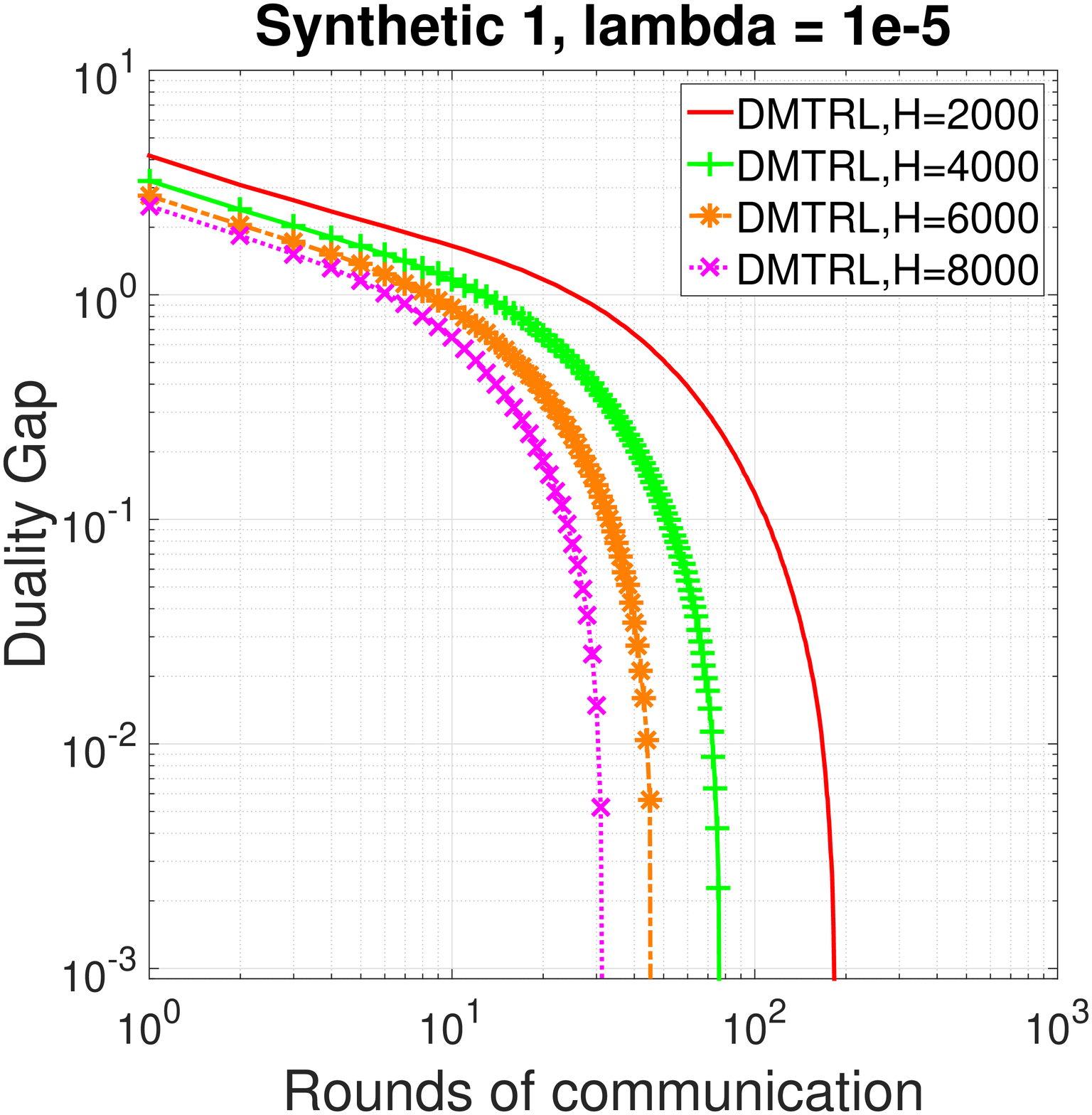}}
		\subfigure[Prediction v.s. Comm.]{\label{fig:pred_syn_1e-5}\includegraphics[trim = 10cm 0cm 12cm 0cm,clip,width=0.315\columnwidth]{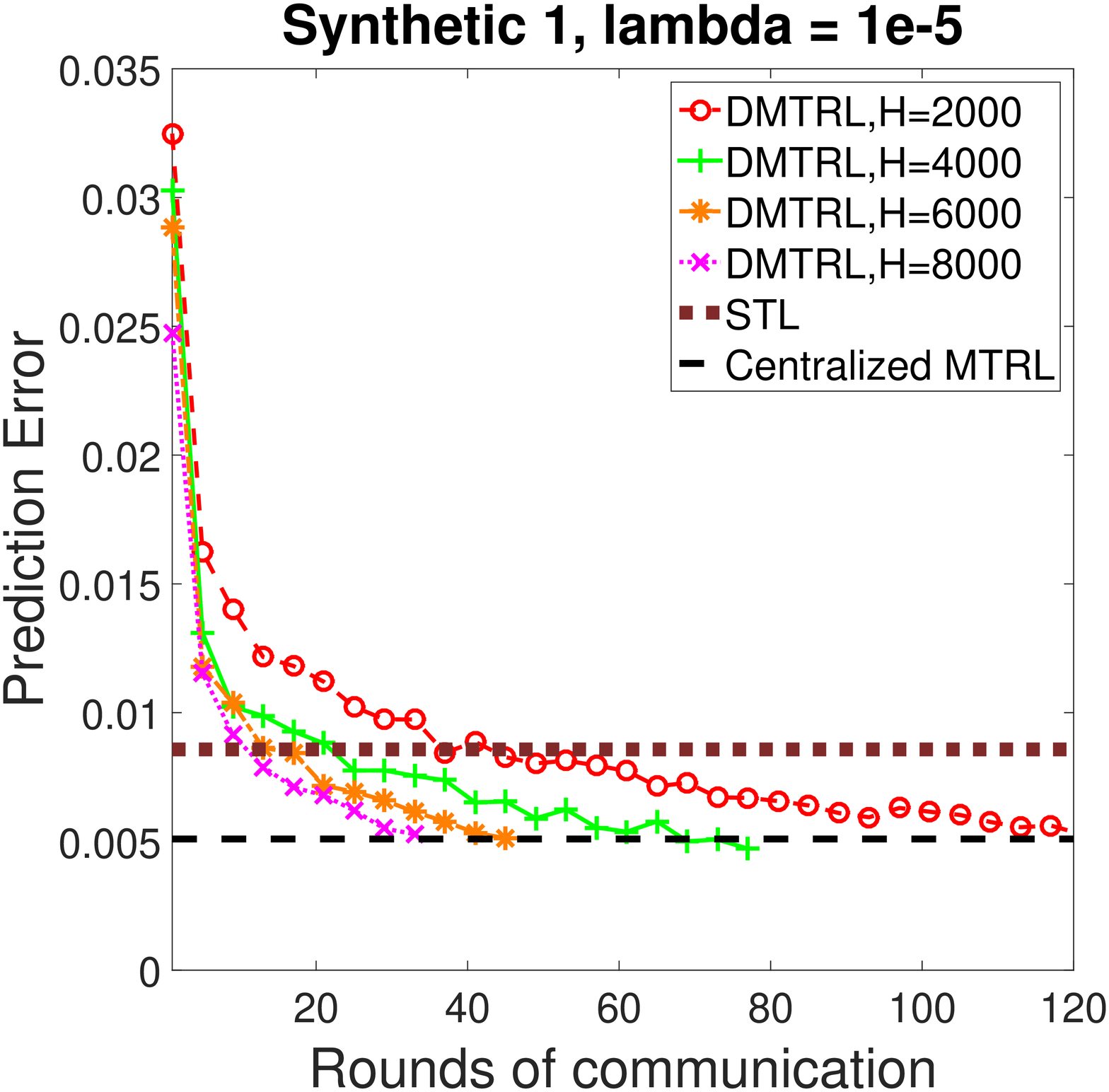}}
		\caption{Experimental results on Synthetic 1}\label{fig:syn_experiment}
	\end{center}
\end{figure}
Our third experiment is to test the performance of our proposed algorithm in terms of convergence time, communication cost, and classification accuracy. Figures~\ref{fig:dualGap_syn_1e-6}-\ref{fig:pred_syn_1e-6} show the experimental results of duality gap v.s. elapsed time, duality gap v.s. rounds of communication, and prediction error v.s. rounds of communication on Synthetic 1, where $\lambda$ in \eqref{formulation} in the MTL formulation is set to be $\lambda=10^{-6}$. Figure~\ref{fig:dualGap_syn_1e-6} shows comparison results in terms of convergence performance in $\mathbf{W}$-step between DMTRL and Single-machine SDCA, where $H$ denotes the number of iteration in local SDCA. Superior performance of DMTRL verifies that besides distributed, speed-up comes as a by-product of our algorithm in the case of solving multi-task learning in large-scale. From Figure~\ref{fig:commu_syn_1e-6}, we observe that with the increase of local computation, the number of communication rounds reduces. This is inline with the theoretical convergence analysis. When number of local SDCA iterations increases, each subproblem arrives at a better approximate solution with smaller $\Theta$ and thus the iterations of global update needed to reach convergence is reduced. From Figure~\ref{fig:pred_syn_1e-6}, we observe that if $H$ is larger, then fewer communication rounds is needed for DMTRL to converge to the optimal solution, which is as the same as the solution obtained by Centralized MTRL. We also conduct experiments with a different value of $\lambda$ ($\lambda=10^{-5}$), with results shown in Figures~\ref{fig:dualGap_syn_1e-5}-\ref{fig:pred_syn_1e-5}, where similar results are observed.

\subsection{Results on Real-world Datasets}
On the three real-world datasets, we focus on testing the prediction performance of DMTRL compared with the baselines. Table~\ref{tab:performance-school} and~\ref{tab:performance_realworld} report prediction performance of DMTRL with comparison to STL and Centralized MTRL. We use RMSE and explained variance as used in~\citep{argyriou2008convex} to measure the performance on School, and use averaged prediction error rate to measure the performance on MDS and MNIST. Note that Centralized MTRL fails to generate results on MDS and MNIST because of the out-of-memory issue when calculating the kernel matrix (each machine is of 16GB RAM). We also report the prediction performance of DMTRL against the number of rounds of communications with comparison with STL, Single-machine SDCA, and Centralized MTRL in Figure~\ref{fig:pred_error_realworld}.

\begin{table}[h!]
	\caption{Comparison performance in terms of RMSE and explained variance on \textbf{School}}
	\label{tab:performance-school}
	\begin{center}
		\begin{tabular}{lcccr}
			\hline
			\hline
			Method & RMSE & Explained Variance \\
			\hline
			DMTRL  & 10.23 $\pm$ 0.21  & 26.9 $\pm$  1.6\%\\
			Centralized MTRL & 10.23 $\pm$ 0.21  & 26.9 $\pm$  1.7\%\\
			STL & 11.10 $\pm$ 0.21 & 23.5 $\pm$  1.9\%\\
			\hline
		\end{tabular}
	\end{center}
\end{table}

\begin{table}[h!]
	\caption{Comparison performance in terms of error rate on \textbf{MNIST} and \textbf{MDS}}
	\label{tab:performance_realworld}
	\begin{center}
		\begin{tabular}{lccc}
			\hline
			\hline
			Data set & DMTL              & Centralized MTRL & STL \\
			\hline
			MNIST    & 5.2 $\pm$ 0.12\%  & Nil   & 5.2 $\pm$ 0.11\%\\
			\hline
			MDS      & 12.6 $\pm$ 0.09\% & Nil   & 16.0 $\pm$ 0.1\% \\
			\hline
		\end{tabular}
	\end{center}
\end{table}

\begin{figure}[h!]
	\begin{center}
		\subfigure[Prediction Error v.s. Communication] {\label{fig:pred_error_realworld}\includegraphics[trim =  10.5cm 0cm 12cm 0cm, clip, width=0.315\columnwidth]{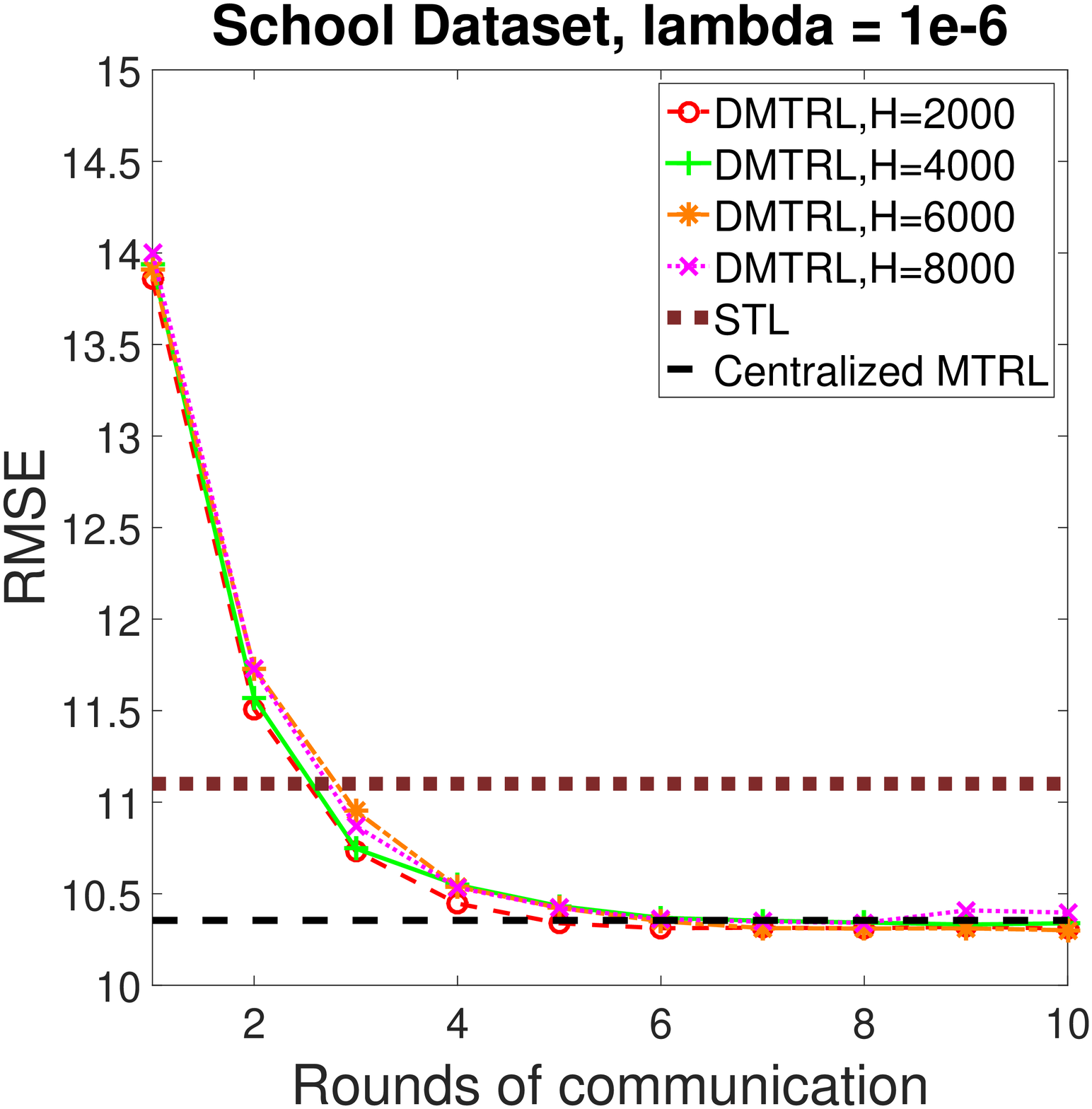}
			\includegraphics[trim = 10.5cm 0cm 12cm 0cm, clip, width=0.315\columnwidth]{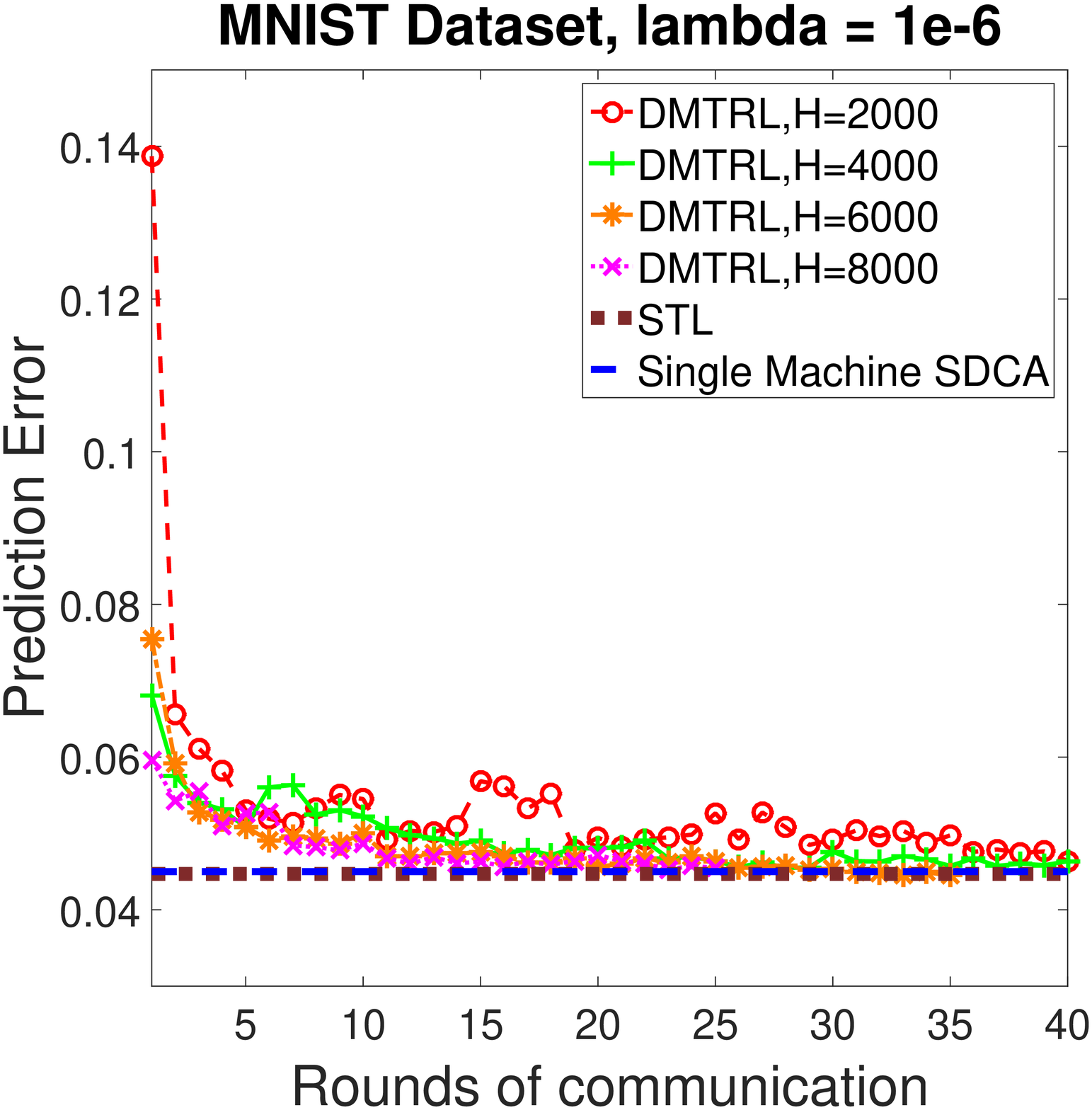}
			\includegraphics[trim = 10.5cm 0cm 12cm 0cm, clip, width=0.315\columnwidth]{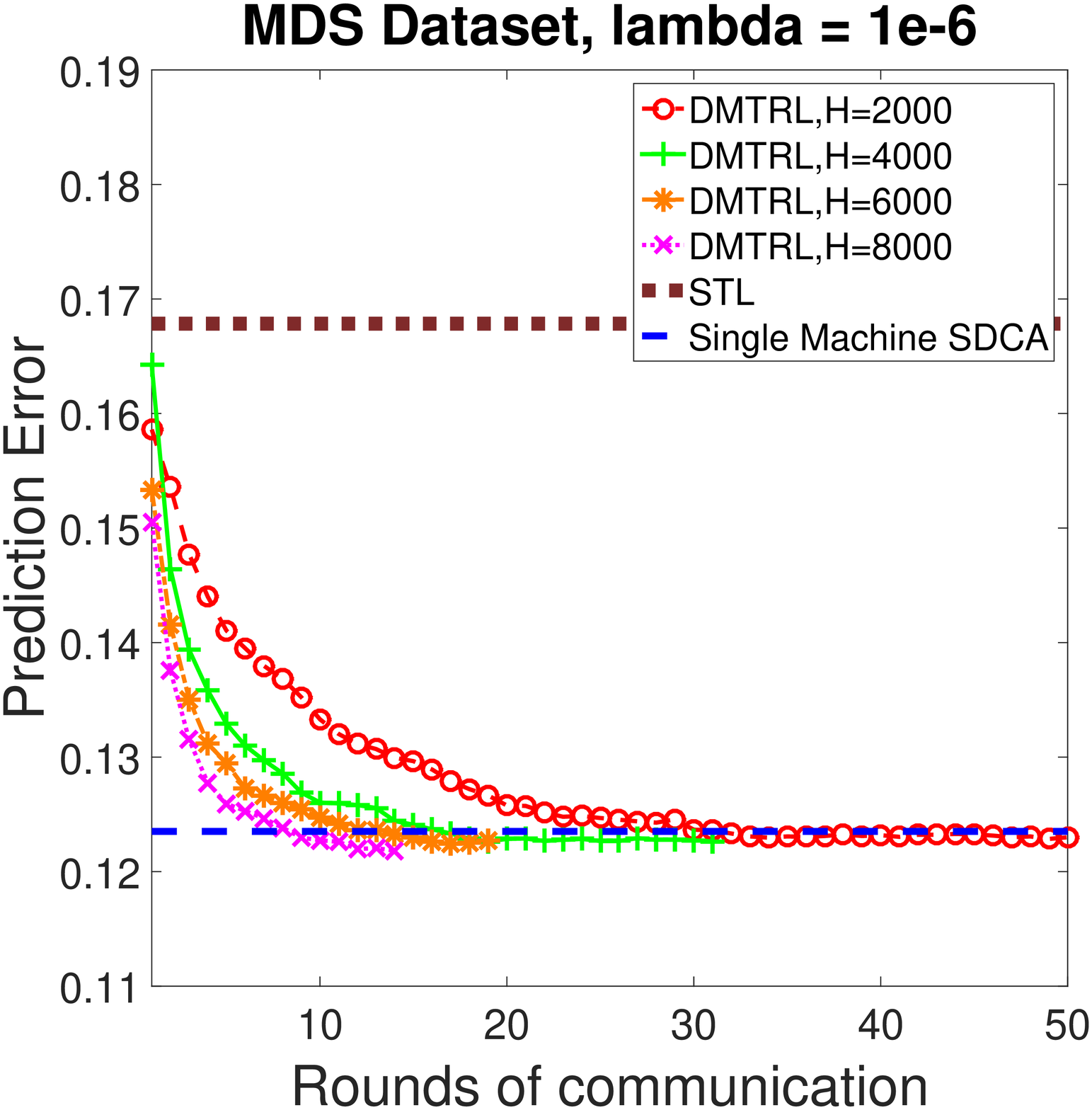}}
		\\
		\subfigure[Duality Gap v.s. Time] {\label{fig:dualGap_realworld}\includegraphics[trim = 10cm 0cm 12cm 0cm, clip, width=0.315\columnwidth]{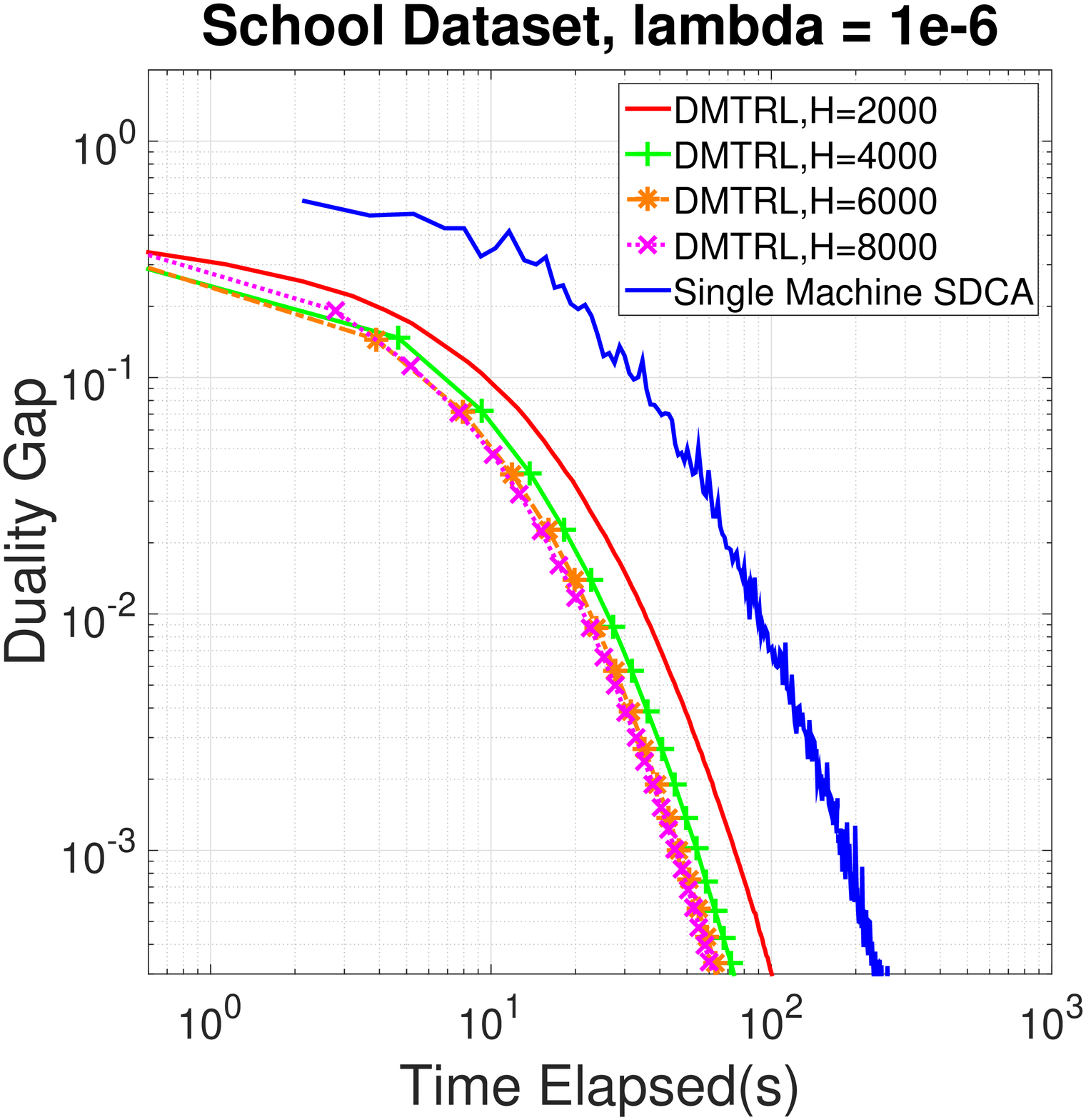}
			\includegraphics[trim = 10cm 0cm 12cm 0cm, clip, width=0.315\columnwidth]{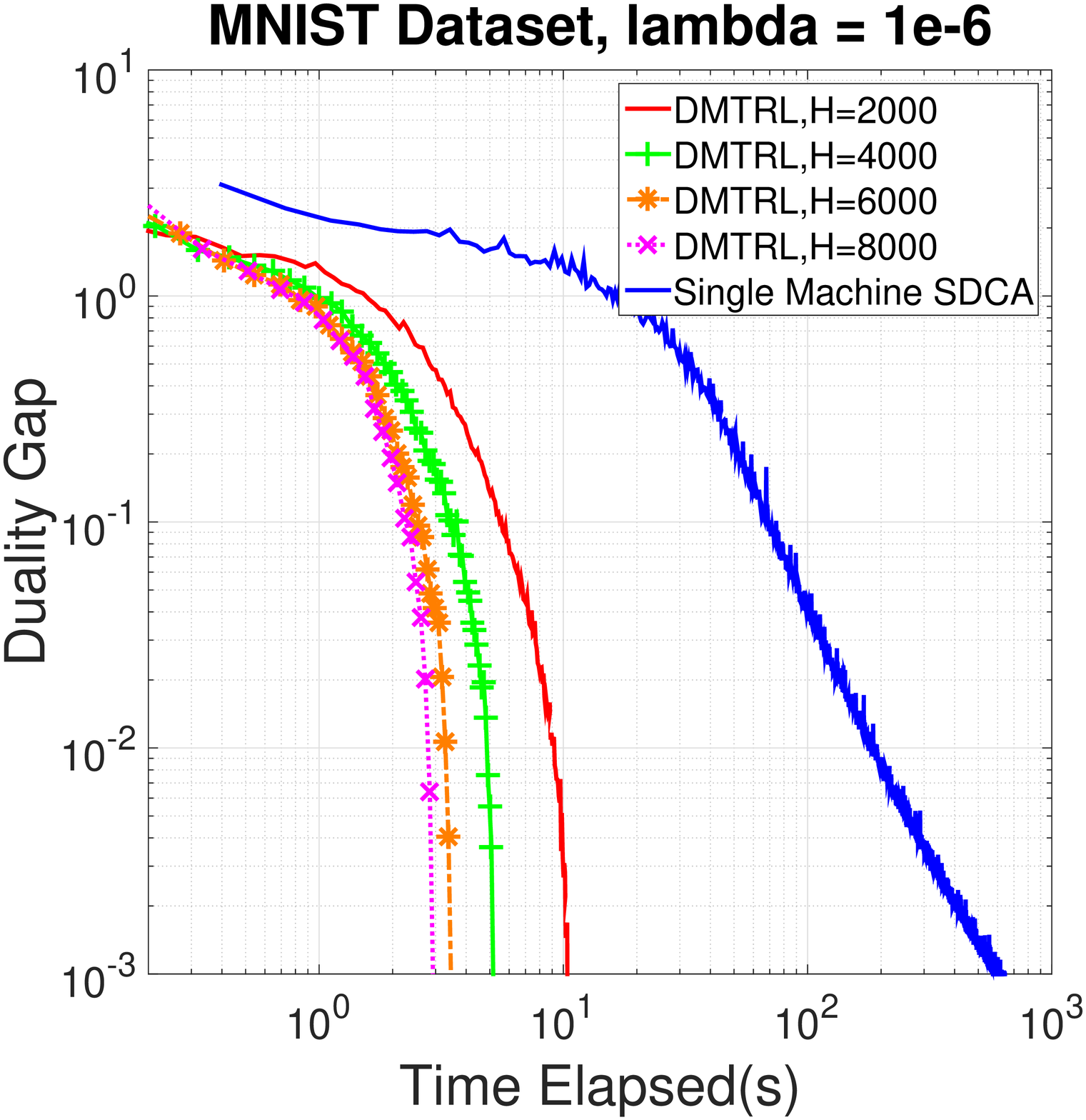}
			\includegraphics[trim = 10cm 0cm 12cm 0cm, clip,width=0.315\columnwidth]{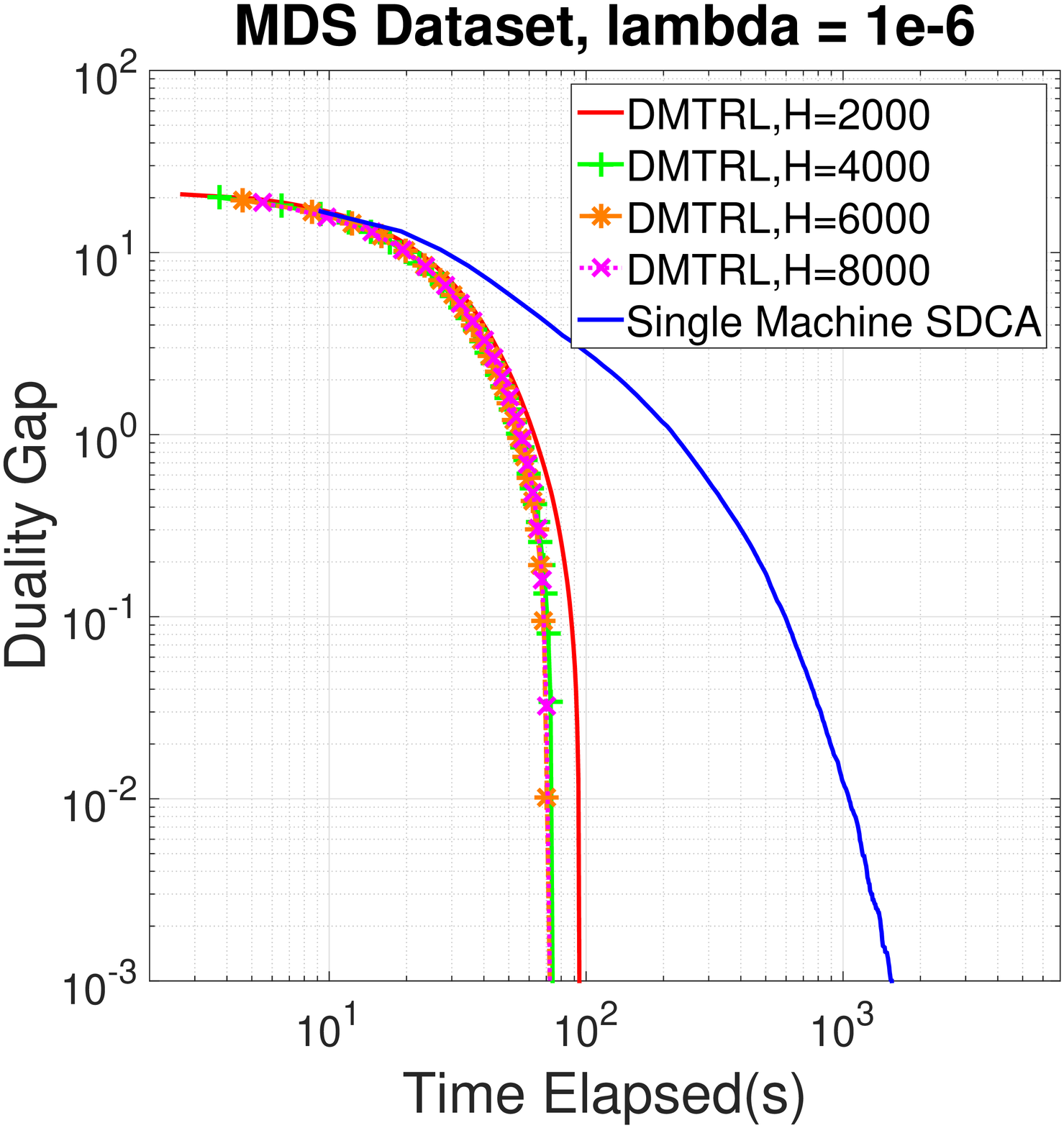}}
		\\
		\subfigure[Duality Gap v.s. Communication] {\label{fig:commu_real_world}\includegraphics[trim = 10cm 0cm 12cm 0cm,clip, width=0.315\columnwidth]{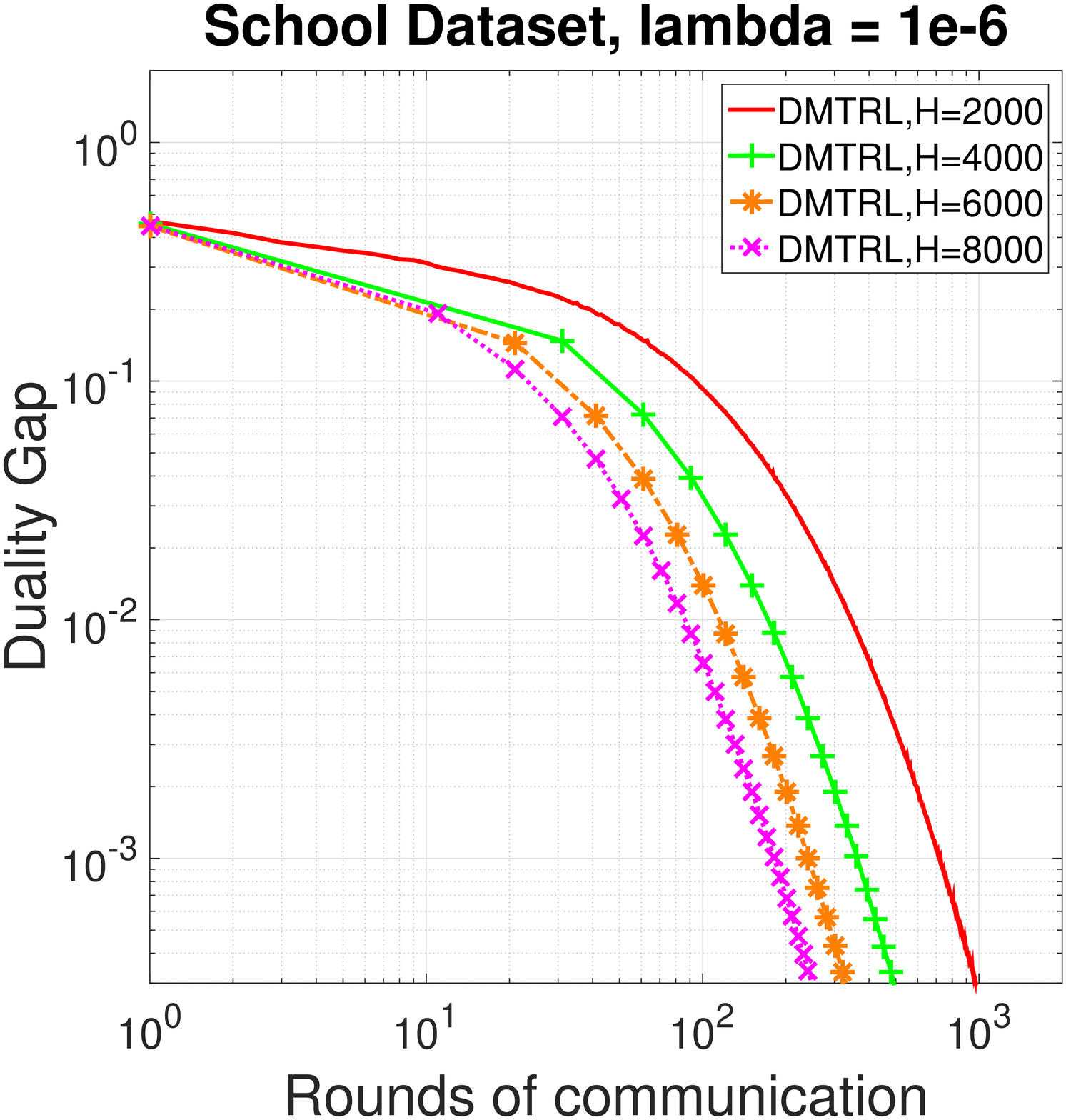}
			\includegraphics[trim = 10cm 0cm 12cm 0cm, clip, width=0.315\columnwidth]{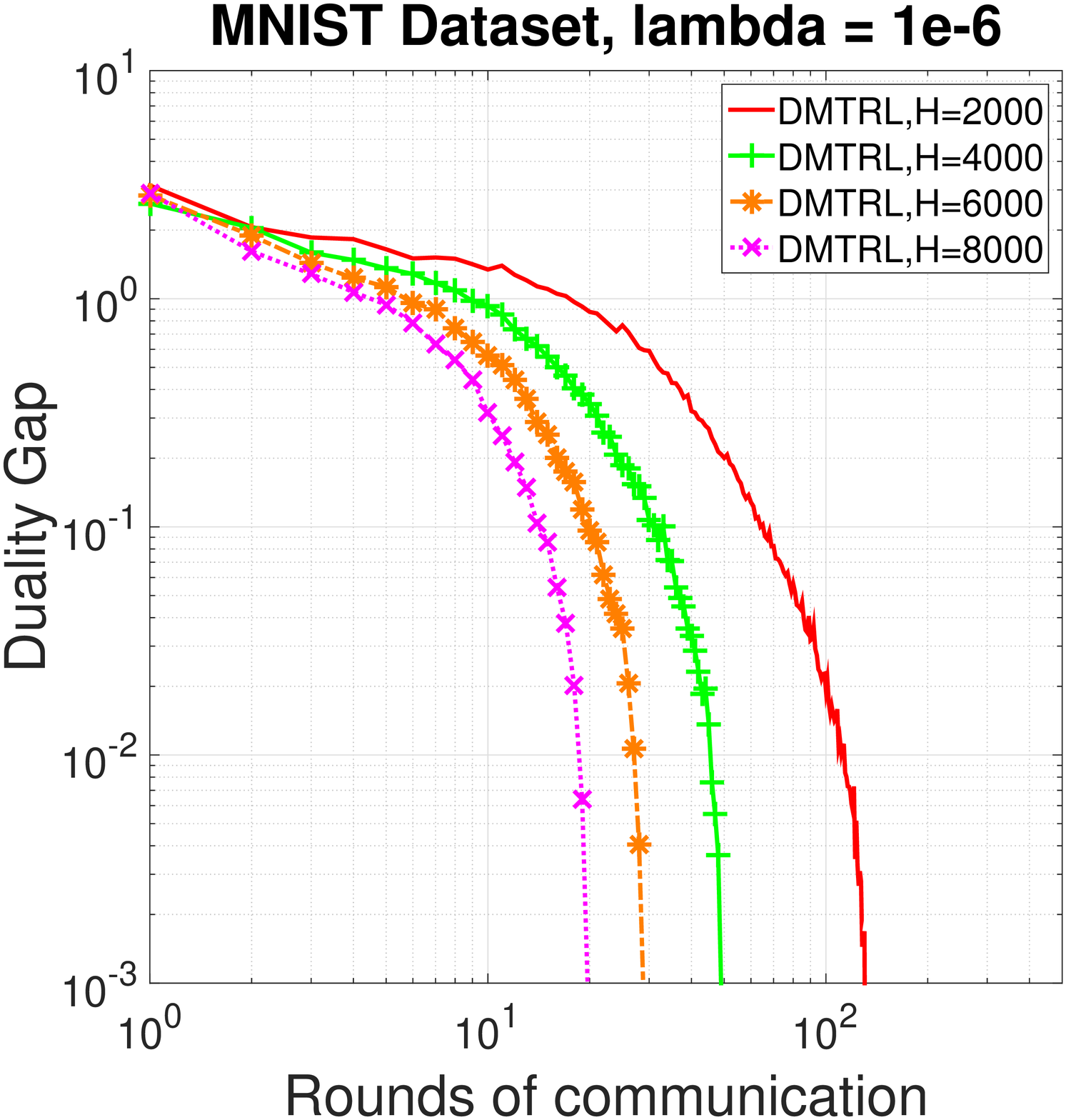}
			\includegraphics[trim = 10cm 0cm 12cm 0cm, clip, width=0.315\columnwidth]{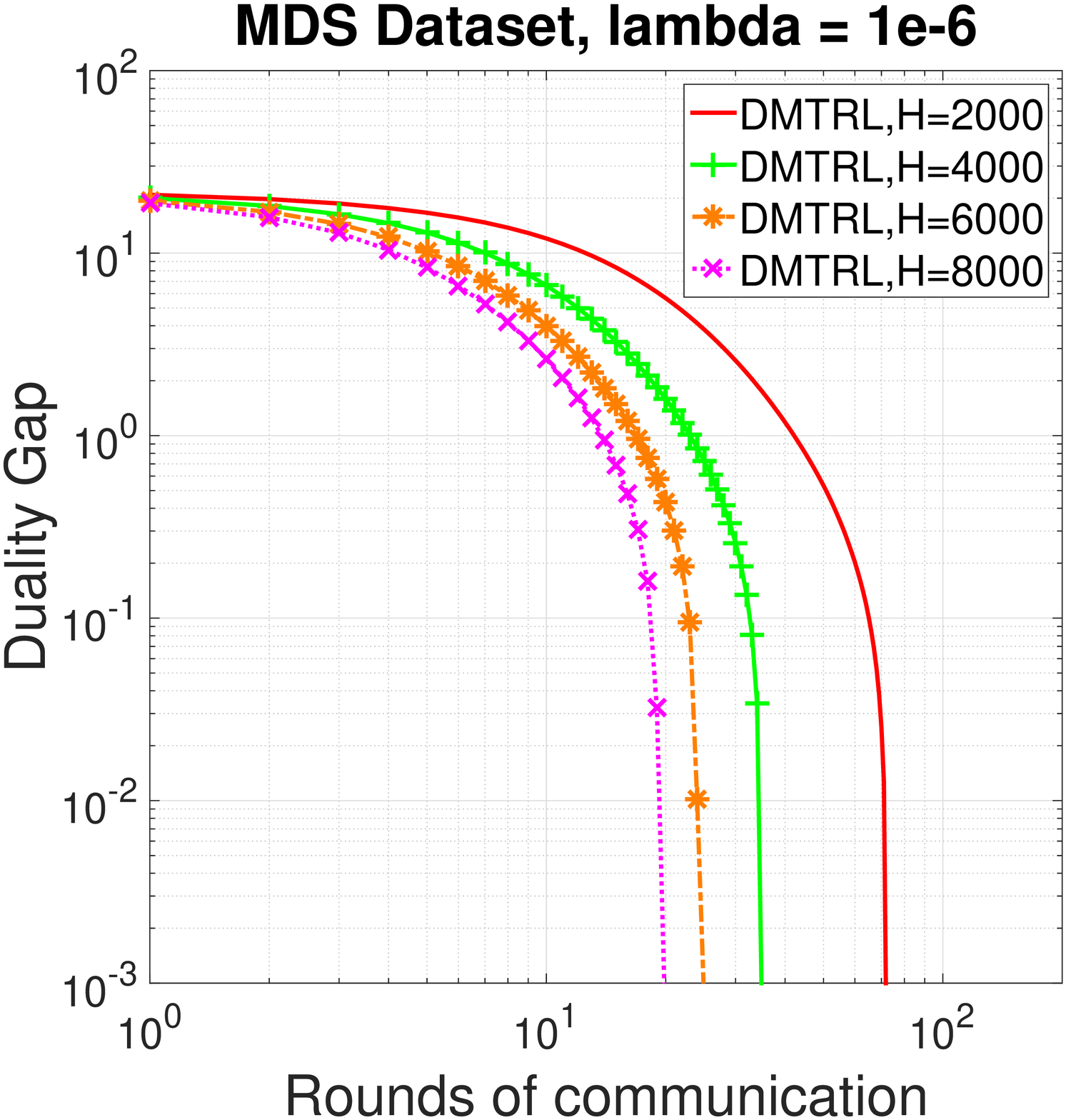}}
		\caption{Experimental results on real-world datasets}\label{fig:real_world}
	\end{center}
\end{figure}

The results from the tables and figures show that DMTRL converges to the same prediction error as Centralized MTRL. DMTRL outperforms STL significantly except on MNIST, which shows the advantage of DMTRL by leveraging related tasks to improve generalization performance. It is reasonable because in our experimental setting, each task in MNIST has around 12,000 instances for training, which is relatively large. As MNIST is a relatively easy learning task, such amount of training data is sufficient for STL to perform well. However this does not imply that DMTRL could not improve generalization performance when total amount of data is large. In the MDS case, there are in total 91,290 instances, with number of instances per task varying from 314 to 20,751. Experiment results show that performance of DMTRL outperforms STL by a significant amount. In this case, since some tasks do not have sufficient training data, DMTRL helps improve the prediction performance by leveraging task relationships in multi-task learning. We also note that for MDS, the number of instances of different tasks differs by a fairly large amount. This means with the same number of local SDCA iterations per task, the task with largest instance number will have a worse $\Theta$-approximate solution compared to others. Although convergence is still guaranteed, this hinders the overall primal-dual convergence performance. Thus it remains an open research issue on how to balance the data in each local worker to achieve better convergence for our future work.

Finally, we also report the experimental results of DMTRL about duality gap v.s. elapsed time and duality gap v.s. rounds of communication on the three real-world datasets in Figures~\ref{fig:dualGap_realworld}-\ref{fig:commu_real_world}. The observations from the figures are similar to those found on the synthetic datasets.

\section{Conclusion and Future Work}

In this paper, we present a novel distributed framework for multi-task relationship learning, denoted by DMTRL. With the proposed framework, data of different tasks can be geo-distributedly stored in local machines, and multiple tasks can be learned jointly without centralizing data of different tasks to a master machine. We provide theoretical convergence analysis for DMTRL with both smooth and non-smooth convex loss functions. To verify the effectiveness of DMTRL, we carefully design and conduct extensive experiments on both synthetic and real-world datasets to test the convergence and prediction accuracy of DMTRL in comparison with the baseline methods. In our future work, we aim to extend our framework to the setting that allows asynchronous communication. We also aim to conduct study on how to achieve better convergence when data are imbalanced over different tasks.

\section*{Acknowledgments}
This work is supported by NTU Singapore Nanyang Assistant Professorship (NAP) grant M4081532.020 and Singapore MOE AcRF Tier-2 grant MOE2016-T2-2-060.

\bibliography{arxivbib}

\begin{thebibliography}{34}
\providecommand{\natexlab}[1]{#1}
\providecommand{\url}[1]{\texttt{#1}}
\expandafter\ifx\csname urlstyle\endcsname\relax
  \providecommand{\doi}[1]{doi: #1}\else
  \providecommand{\doi}{doi: \begingroup \urlstyle{rm}\Url}\fi

\bibitem[Ahmed et~al.(2014)Ahmed, Das, and Smola]{AhmedDS14}
A.~Ahmed, A.~Das, and A.~J. Smola.
\newblock Scalable hierarchical multitask learning algorithms for conversion
  optimization in display advertising.
\newblock In \emph{WSDM}, pages 153--162, 2014.

\bibitem[Ando and Zhang(2005)]{ando2005framework}
R.~K. Ando and T.~Zhang.
\newblock A framework for learning predictive structures from multiple tasks
  and unlabeled data.
\newblock \emph{JMLR}, 6:\penalty0 1817--1853, 2005.

\bibitem[Argyriou et~al.(2008)Argyriou, Evgeniou, and
  Pontil]{argyriou2008convex}
A.~Argyriou, T.~Evgeniou, and M.~Pontil.
\newblock Convex multi-task feature learning.
\newblock \emph{Machine Learning}, 73\penalty0 (3):\penalty0 243--272, 2008.

\bibitem[Balcan et~al.(2012)Balcan, Blum, Fine, and Mansour]{BalcanBFM12}
M.~Balcan, A.~Blum, S.~Fine, and Y.~Mansour.
\newblock Distributed learning, communication complexity and privacy.
\newblock In \emph{COLT}, pages 26.1--26.22, 2012.

\bibitem[Baytas et~al.(2016)Baytas, Yan, Jain, and Zhou]{baytas16async}
I.~M. Baytas, M.~Yan, A.~K. Jain, and J.~Zhou.
\newblock Asynchronous multi-task learning.
\newblock In \emph{ICDM}, pages 11--20, 2016.

\bibitem[Blitzer et~al.(2007)Blitzer, Dredze, Pereira,
  et~al.]{blitzer2007biographies}
J.~Blitzer, M.~Dredze, F.~Pereira, et~al.
\newblock Biographies, bollywood, boom-boxes and blenders: Domain adaptation
  for sentiment classification.
\newblock In \emph{ACL}, pages 440--447, 2007.

\bibitem[Boyd et~al.(2011)Boyd, Parikh, Chu, Peleato, and
  Eckstein]{boyd2011distributed}
S.~Boyd, N.~Parikh, E.~Chu, B.~Peleato, and J.~Eckstein.
\newblock Distributed optimization and statistical learning via the alternating
  direction method of multipliers.
\newblock \emph{Foundations and Trends{\textregistered} in Machine Learning},
  3\penalty0 (1):\penalty0 1--122, 2011.

\bibitem[Caruana(1997)]{caruana1997multitask}
R.~Caruana.
\newblock Multitask learning.
\newblock \emph{Machine learning}, 28\penalty0 (1):\penalty0 41--75, 1997.

\bibitem[Cavallanti et~al.(2010)Cavallanti, Cesa-Bianchi, and
  Gentile]{cavallanti2010online}
G.~Cavallanti, N.~Cesa-Bianchi, and C.~Gentile.
\newblock Linear algorithms for online multitask classification.
\newblock \emph{JMLR}, 11:\penalty0 2901--2934, 2010.

\bibitem[Dinuzzo et~al.(2011)Dinuzzo, Pillonetto, and
  De~Nicolao]{dinuzzo2011client}
F.~Dinuzzo, G.~Pillonetto, and G.~De~Nicolao.
\newblock Client--server multitask learning from distributed datasets.
\newblock \emph{IEEE Transactions on Neural Networks}, 22\penalty0
  (2):\penalty0 290--303, 2011.

\bibitem[Evgeniou and Pontil(2004)]{evgeniou2004regularized}
T.~Evgeniou and M.~Pontil.
\newblock Regularized multi--task learning.
\newblock In \emph{KDD}, pages 109--117. ACM, 2004.

\bibitem[Evgeniou et~al.(2005)Evgeniou, Micchelli, and
  Pontil]{evgeniou2005kernelMTL}
T.~Evgeniou, C.~A. Micchelli, and M.~Pontil.
\newblock Learning multiple tasks with kernel methods.
\newblock \emph{JMLR}, pages 615--637, 2005.

\bibitem[Forero et~al.(2010)Forero, Cano, and Giannakis]{forero2010consensus}
P.~A. Forero, A.~Cano, and G.~B. Giannakis.
\newblock Consensus-based distributed support vector machines.
\newblock \emph{JMLR}, 11:\penalty0 1663--1707, 2010.

\bibitem[Jacob et~al.(2009)Jacob, Vert, and Bach]{jacob2009clustered}
L.~Jacob, J.-p. Vert, and F.~R. Bach.
\newblock Clustered multi-task learning: A convex formulation.
\newblock In \emph{NIPS}, pages 745--752, 2009.

\bibitem[Kato et~al.(2008)Kato, Kashima, Sugiyama, and Asai]{kato2008multi}
T.~Kato, H.~Kashima, M.~Sugiyama, and K.~Asai.
\newblock Multi-task learning via conic programming.
\newblock In \emph{Advances in Neural Information Processing Systems}, pages
  737--744, 2008.

\bibitem[Li et~al.(2014)Li, Andersen, Park, Smola, Ahmed, Josifovski, Long,
  Shekita, and Su]{LiAPSAJLSS14}
M.~Li, D.~G. Andersen, J.~W. Park, A.~J. Smola, A.~Ahmed, V.~Josifovski,
  J.~Long, E.~J. Shekita, and B.~Su.
\newblock Scaling distributed machine learning with the parameter server.
\newblock In \emph{OSDI}, pages 583--598, 2014.

\bibitem[Ma et~al.(2015)Ma, Smith, Jaggi, Jordan, Richt{\'{a}}rik, and
  Tak{\'{a}}c]{MaSJJRT15}
C.~Ma, V.~Smith, M.~Jaggi, M.~I. Jordan, P.~Richt{\'{a}}rik, and
  M.~Tak{\'{a}}c.
\newblock Adding vs. averaging in distributed primal-dual optimization.
\newblock In \emph{ICML}, pages 1973--1982, 2015.

\bibitem[Meng et~al.(2015)Meng, Bradley, Yavuz, Sparks, Venkataraman, Liu,
  Freeman, Tsai, Amde, Owen, et~al.]{meng2015mllib}
X.~Meng, J.~Bradley, B.~Yavuz, E.~Sparks, S.~Venkataraman, D.~Liu, J.~Freeman,
  D.~Tsai, M.~Amde, S.~Owen, et~al.
\newblock Mllib: Machine learning in apache spark.
\newblock \emph{arXiv preprint arXiv:1505.06807}, 2015.

\bibitem[Newman et~al.(2009)Newman, Asuncion, Smyth, and
  Welling]{newman2009distributed}
D.~Newman, A.~Asuncion, P.~Smyth, and M.~Welling.
\newblock Distributed algorithms for topic models.
\newblock \emph{JMLR}, 10:\penalty0 1801--1828, 2009.

\bibitem[Obozinski et~al.(2010)Obozinski, Taskar, and
  Jordan]{obozinski2010joint}
G.~Obozinski, B.~Taskar, and M.~I. Jordan.
\newblock Joint covariate selection and joint subspace selection for multiple
  classification problems.
\newblock \emph{Statistics and Computing}, 20\penalty0 (2):\penalty0 231--252,
  2010.

\bibitem[Pan and Yang(2010)]{PanY09TKDE}
S.~J. Pan and Q.~Yang.
\newblock A survey on transfer learning.
\newblock \emph{IEEE TKDE}, 22\penalty0 (10):\penalty0 1345--1359, October
  2010.

\bibitem[Rahimi and Recht(2007)]{rahimi2007random}
A.~Rahimi and B.~Recht.
\newblock Random features for large-scale kernel machines.
\newblock In \emph{NIPS}, pages 1177--1184, 2007.

\bibitem[Richt{\'a}rik and Tak{\'a}{\v{c}}(2013)]{richtarik2013distributed}
P.~Richt{\'a}rik and M.~Tak{\'a}{\v{c}}.
\newblock Distributed coordinate descent method for learning with big data.
\newblock \emph{arXiv preprint arXiv:1310.2059}, 2013.

\bibitem[Richt{\'a}rik and Tak{\'a}{\v{c}}(2015)]{Richtarik2015}
P.~Richt{\'a}rik and M.~Tak{\'a}{\v{c}}.
\newblock Parallel coordinate descent methods for big data optimization.
\newblock \emph{Mathematical Programming}, pages 1--52, 2015.

\bibitem[Saha et~al.(2011)Saha, Rai, Venkatasubramanian, and
  Daume]{saha2011online}
A.~Saha, P.~Rai, S.~Venkatasubramanian, and H.~Daume.
\newblock Online learning of multiple tasks and their relationships.
\newblock In \emph{AISTATS}, pages 643--651, 2011.

\bibitem[Shalev-Shwartz and Zhang(2013)]{shalev2013stochastic}
S.~Shalev-Shwartz and T.~Zhang.
\newblock Stochastic dual coordinate ascent methods for regularized loss.
\newblock \emph{JMLR}, 14\penalty0 (1):\penalty0 567--599, 2013.

\bibitem[Shamir et~al.(2014)Shamir, Srebro, and Zhang]{ShamirS014}
O.~Shamir, N.~Srebro, and T.~Zhang.
\newblock Communication-efficient distributed optimization using an approximate
  newton-type method.
\newblock In \emph{ICML}, pages 1000--1008, 2014.

\bibitem[Tappenden et~al.(2015)Tappenden, Tak{\'a}{\v{c}}, and
  Richt{\'a}rik]{tappenden2015complexity}
R.~Tappenden, M.~Tak{\'a}{\v{c}}, and P.~Richt{\'a}rik.
\newblock On the complexity of parallel coordinate descent.
\newblock \emph{arXiv preprint arXiv:1503.03033}, 2015.

\bibitem[Wang et~al.(2016{\natexlab{a}})Wang, Kolar, and
  Srebro]{wang2015distributedmtl}
J.~Wang, M.~Kolar, and N.~Srebro.
\newblock Distributed multi-task learning.
\newblock In \emph{AISTATS}, pages 751--760, 2016{\natexlab{a}}.

\bibitem[Wang et~al.(2016{\natexlab{b}})Wang, Kolar, and
  Srebro]{wang2016distributed}
J.~Wang, M.~Kolar, and N.~Srebro.
\newblock Distributed multi-task learning with shared representation.
\newblock \emph{arXiv preprint arXiv:1603.02185}, 2016{\natexlab{b}}.

\bibitem[Xing et~al.(2015)Xing, Ho, Dai, Kim, Wei, Lee, Zheng, Xie, Kumar, and
  Yu]{xing2015petuum}
E.~P. Xing, Q.~Ho, W.~Dai, J.~K. Kim, J.~Wei, S.~Lee, X.~Zheng, P.~Xie,
  A.~Kumar, and Y.~Yu.
\newblock Petuum: a new platform for distributed machine learning on big data.
\newblock \emph{IEEE TBD}, 1\penalty0 (2):\penalty0 49--67, 2015.

\bibitem[Yang(2013)]{yang2013nips}
T.~Yang.
\newblock Trading computation for communication: Distributed stochastic dual
  coordinate ascent.
\newblock In \emph{NIPS}, pages 629--637. 2013.

\bibitem[Zhang(2015)]{zhangyuPMTL}
Y.~Zhang.
\newblock Parallel multi-task learning.
\newblock In \emph{ICDM}, pages 629--638, 2015.

\bibitem[Zhang and Yeung(2010)]{zhang2010convex}
Y.~Zhang and D.~Yeung.
\newblock A convex formulation for learning task relationships in multi-task
  learning.
\newblock In \emph{UAI}, pages 733--442, 2010.

\end{thebibliography}
\bibliographystyle{plainnat}
\appendix
\onecolumn
\newpage
\Large
\textbf{Appendix}
\normalsize

\section{Technical Lemmas}\label{sec:lemmas}
\begin{lemma}\label{technical_L}
(Lemma 21 in~\citep{shalev2013stochastic}) Let $l: \mathbb{R} \rightarrow \mathbb{R} $ be an L-Lipschitz function. Then for any real value $\alpha$ s.t. $|\alpha| > L$ we have that ${l}^*(\alpha) = \infty$.\\
\end{lemma}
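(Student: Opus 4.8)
The plan is to work straight from the definition of the convex conjugate, $l^*(\alpha) = \sup_{x \in \mathbb{R}} \bigl( \alpha x - l(x) \bigr)$, and to exhibit, for each $\alpha$ with $|\alpha| > L$, a direction along which the quantity inside the supremum diverges to $+\infty$. The only structural input needed is the one-sided consequence of $L$-Lipschitzness: applying Definition~1 to the pair $x, 0$ gives $l(x) \le l(0) + L|x|$ for every $x \in \mathbb{R}$. I would state this bound once, up front, and reuse it in both cases.

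First I would handle $\alpha > L$. Restricting the supremum to $x > 0$ and inserting the bound above yields $\alpha x - l(x) \ge \alpha x - l(0) - Lx = (\alpha - L)x - l(0)$. Since $\alpha - L > 0$, letting $x \to +\infty$ makes this lower bound tend to $+\infty$, so $l^*(\alpha) = +\infty$.

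The case $\alpha < -L$ is symmetric: restrict the supremum to $x < 0$, so that $|x| = -x$, and obtain $\alpha x - l(x) \ge \alpha x - l(0) + Lx = (\alpha + L)x - l(0)$; now $\alpha + L < 0$ and taking $x \to -\infty$ again forces the expression to $+\infty$. The two cases together exhaust all $\alpha$ with $|\alpha| > L$, which is the assertion of the lemma.

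There is essentially no real obstacle here — it is a two-line estimate, and the proof carries over verbatim from the single-task literature (indeed the statement is quoted as Lemma~21 of~\citep{shalev2013stochastic}). The only point requiring a little care is the bookkeeping of signs: keeping track of whether one pushes $x \to +\infty$ or $x \to -\infty$, and matching that to the sign of $\alpha$, so that the coefficient $(\alpha \mp L)$ multiplying $x$ has the sign that makes the product blow up.
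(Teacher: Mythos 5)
Your proof is correct: the one-sided Lipschitz bound $l(x) \le l(0) + L|x|$ combined with the definition of the conjugate immediately forces the supremum to diverge along $x \to +\infty$ when $\alpha > L$ and along $x \to -\infty$ when $\alpha < -L$, and the sign bookkeeping is handled properly in both cases. The paper itself offers no proof of this lemma --- it simply cites Lemma~21 of \citep{shalev2013stochastic} --- so your self-contained two-line argument is a welcome addition rather than a deviation; it is exactly the standard argument one would expect behind the cited result.
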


\begin{lemma}\label{technical_D_0}
For all $\boldsymbol{\alpha}$, $D(\boldsymbol{\alpha})\leq P(\mathbf{W}^*)\leq P(\mathbf{0}) \leq m$. And, $D(\mathbf{0}) \geq 0$.
\end{lemma}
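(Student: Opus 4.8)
\textbf{Proof proposal for Lemma~\ref{technical_D_0}.}

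The plan is to establish the three weak-duality-type inequalities $D(\boldsymbol{\alpha}) \le P(\mathbf{W}^*)$, $P(\mathbf{W}^*) \le P(\mathbf{0})$, and $P(\mathbf{0}) \le m$, together with $D(\mathbf{0}) \ge 0$, one at a time. The first inequality is just weak duality applied to the pair $(P,D)$ from Theorem~\ref{dual_form_theorem}: for any feasible dual point $\boldsymbol{\alpha}$ and any primal point $\mathbf{W}$, $D(\boldsymbol{\alpha}) \le P(\mathbf{W})$; specializing $\mathbf{W}$ to the optimum $\mathbf{W}^*$ gives $D(\boldsymbol{\alpha}) \le P(\mathbf{W}^*)$. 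Since the paper has already introduced the duality gap $G(\boldsymbol{\alpha}) = P(\mathbf{W}(\boldsymbol{\alpha})) - D(\boldsymbol{\alpha}) \ge 0$, one may alternatively invoke that directly, but I would spell out the elementary Fenchel–Young argument: for each $(i,j)$, ${l_j^i}^*(-\alpha_j^i) \ge -\alpha_j^i (\mathbf{w}_i^T\phi(\mathbf{x}_j^i)) - l_j^i(\mathbf{w}_i^T\phi(\mathbf{x}_j^i))$, and summing these with the appropriate $1/n_i$ weights and adding the quadratic/regularizer terms (using the primal-dual correspondence to rewrite $\tfrac{1}{2\lambda}\boldsymbol{\alpha}^T\mathbf{K}\boldsymbol{\alpha}$ in terms of $\tfrac{\lambda}{2}\mathrm{tr}(\mathbf{W}\boldsymbol{\Omega}\mathbf{W}^T)$) yields $D(\boldsymbol{\alpha}) \le P(\mathbf{W})$ for all $\mathbf{W}$.

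The second inequality $P(\mathbf{W}^*) \le P(\mathbf{0})$ is immediate from optimality of $\mathbf{W}^*$: $\mathbf{W} = \mathbf{0}$ is a feasible choice in the minimization defining $P$, so the minimum is no larger than its value at $\mathbf{0}$. For the third inequality $P(\mathbf{0}) \le m$, evaluate the objective at $\mathbf{W} = \mathbf{0}$: the regularizer term $\tfrac{\lambda}{2}\mathrm{tr}(\mathbf{0}\,\boldsymbol{\Omega}\,\mathbf{0}^T)$ vanishes, leaving $P(\mathbf{0}) = \sum_{i=1}^m \tfrac{1}{n_i}\sum_{j=1}^{n_i} l_j^i(0, y_j^i)$. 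The implicit normalization here (consistent with the single-task analyses cited, e.g.~\citep{shalev2013stochastic,MaSJJRT15}) is that each loss satisfies $l_j^i(0, y_j^i) \le 1$; hence each inner average is at most $1$ and the sum over the $m$ tasks is at most $m$. I would state this normalization explicitly as the hypothesis under which the bound holds (it is satisfied by hinge loss, logistic loss, and the squared loss after the usual rescaling). Finally, $D(\mathbf{0}) \ge 0$: plugging $\boldsymbol{\alpha} = \mathbf{0}$ into \eqref{generaldual} kills the quadratic term, so $D(\mathbf{0}) = -\sum_{i=1}^m \tfrac{1}{n_i}\sum_{j=1}^{n_i} {l_j^i}^*(0)$, and $-{l_j^i}^*(0) = -\sup_z(0\cdot z - l_j^i(z)) = \inf_z l_j^i(z) \ge 0$ provided the losses are nonnegative, which again holds for all standard choices.

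The only genuine subtlety — and the place I would be most careful — is pinning down the normalization conventions that make "$\le m$" literally true, since the paper states the lemma without hypotheses; the clean way is to record that all the standard losses used in the experiments satisfy $l_j^i(0,\cdot) \in [0,1]$ and $l_j^i \ge 0$, so that both $P(\mathbf{0}) \le m$ and $D(\mathbf{0}) \ge 0$ follow, and everything else is routine convex-duality bookkeeping. No hard estimate is involved; the "main obstacle" is merely bookkeeping of constants and being explicit about which boundedness assumption on the loss is in force.
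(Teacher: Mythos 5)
Your proposal is correct and follows essentially the same route as the paper: weak duality for the first inequality, optimality of $\mathbf{W}^*$ for the second, the normalization $l_j^i(0)\le 1$ for the third, and $-{l_j^i}^*(0)=\inf_z l_j^i(z)\ge 0$ for the last. The normalization hypotheses you flag as needing to be made explicit are in fact stated by the paper at the start of its appendix (namely $l_j^i(a)\ge 0$ for all $a$ and $l_j^i(0)\le 1$), so your care on that point matches the paper's own treatment.
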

\begin{proof}
The first inequality is due to weak duality theorem, the second inequality comes from the optimality of $\mathbf{W}^*$ and the third one is because of the assumption that $l_j^i(0) \leq 1$.

For the last inequality, we have $-{l_j^i}^*(0)= - \underset{z}{\text{max}}(0 - {l_j^i}(z)) = \underset{z}{\text{min}}\, {l_j^i}(z) \geq 0$.Therefore,
\begin{equation}
  D(\mathbf{0}) = - \sum_{i=1}^{m}\frac{1}{n_i}\sum_{j=1}^{n_i}{l_j^i}^*(0) \geq 0
\end{equation}
\end{proof}

\section{Technical Proofs}\label{sec:proofs}
Without loss of generality, to simplify the statements of the theorems used for convergence analysis, we have the following assumptions on loss functions in \eqref{formulation}:
\begin{center}
\begin{enumerate*}
  \item For all $(i,j)$,  $l_j^i(a) \geq 0$, $\forall a$, \;\; and \;\;
  \item For all $(i,j)$, $l_j^i(0) \leq 1$.
\end{enumerate*}
\end{center}

\textbf{Note}: The following proofs are for the \textbf{linear} kernel case.
Given an explicit feature mapping function $\phi()$, the proofs could be adapted to the kernelized version by just changing $\mathbf{x}_j^i$ to $\phi(\mathbf{x}_j^i)$.

\subsection{Proof of Theorem \ref{dual_form_theorem}}
\begin{proof}
Problem (\ref{formulation}) given fixed $\boldsymbol{\Omega}$ can be rewritten as:
\begin{equation*}
\begin{aligned}
& \underset{\mathbf{W,z}}{\text{min}}
& & \sum_{i=1}^{m} \frac{1}{n_i}\sum_{j=1}^{n_i}l_j^i(-z_j^i )+ \frac{\lambda}{2} tr(\mathbf{W}\boldsymbol{\Omega}\mathbf{W}^T)
\\
& \text{s.t.}
& & \mathbf{w}_i^T \mathbf{x}_j^i + z_j^i = 0\\
\label{eq:solvew2}
\end{aligned}
\end{equation*}
By introducing Lagrangian multipliers $-\frac{1}{n_i}\alpha_j^i $, we have the Lagrangian function defined as:
\begin{equation*}
\begin{aligned}
L(\mathbf{W},\mathbf{z},\boldsymbol{\alpha})
& = \sum_{i=1}^{m} \frac{1}{n_i}\sum_{j=1}^{n_i}l_j^i(-z_j^i )+ \frac{\lambda}{2} tr(\mathbf{W}\boldsymbol{\Omega}\mathbf{W}^T)+ \sum_{i=1}^{m}\sum_{j=1}^{n_i}-\frac{1}{n_i}\alpha_j^i(\mathbf{w}_i^T \mathbf{x}_j^i + z_j^i )
\label{eq:Lagrangian}
\end{aligned}
\end{equation*}

The Lagrangian dual function is defined to be:
\begin{equation*}
\begin{aligned}
g(\boldsymbol{\alpha}) =  \underset{\mathbf{W,z}}{\text{inf}}
L(\mathbf{W},\mathbf{z},\boldsymbol{\alpha})
& = \underset{\mathbf{W,z}}{\text{inf}}\ (\sum_{i=1}^{m} \frac{1}{n_i}\sum_{j=1}^{n_i}l_j^i(-z_j^i )+ \frac{\lambda}{2} tr(\mathbf{W}\boldsymbol{\Omega}\mathbf{W}^T)+ \sum_{i=1}^{m}\sum_{j=1}^{n_i}-\frac{1}{n_i}\alpha_j^i(\mathbf{w}_i^T \mathbf{x}_j^i + z_j^i ))
\\
& =
\sum_{i=1}^{m} \frac{1}{n_i}\sum_{j=1}^{n_i} \underset{z_j^i}{\text{inf}}\bigg(l_j^i(-z_j^i) - \alpha_j^i z_j^i) \bigg) +
\underset{\mathbf{W}}{\text{inf}} \bigg(\frac{\lambda}{2} tr(\mathbf{W}\boldsymbol{\Omega}\mathbf{W}^T) - \sum_{i=1}^{m}\sum_{j=1}^{n_i}\frac{1}{n_i}\alpha_j^i\mathbf{w}_i^T \mathbf{x}_j^i \bigg)
\\
& \stackrel{\eqref{solve_w_hat}}{=}\sum_{i=1}^{m} \frac{1}{n_i}\sum_{j=1}^{n_i} -{l_j^i}^*(-\alpha_j^i) -
\frac{\lambda}{2}tr(\hat{\mathbf{W}}\boldsymbol{\Omega}\hat{\mathbf{W}}^T)
\\
& \stackrel{\eqref{rewrite_regulizer}}{=} - \sum_{i=1}^{m} \frac{1}{n_i}\sum_{j=1}^{n_i} {l_j^i}^*(-\alpha_j^i) - \frac{1}{2\lambda} \boldsymbol{\alpha}^T \mathbf{K} \boldsymbol{\alpha}
\label{eq:Lagrangiandual}
\end{aligned}
\end{equation*}

where the second last equality comes from the fact that when the infinum takes at $\hat{\mathbf{W}}$, we have:
\begin{eqnarray}
\frac{d}{d\mathbf{w_i}}L
 &=& \lambda \sum_{k=1}^{m}\hat{\mathbf{w_k}}\boldsymbol{\Omega}_{ik}
- \sum_{j=1}^{n_i}\frac{1}{n_i}\alpha_j^i \mathbf{x}_j^i = 0 \label{solve_w_hat}\\
& \Leftrightarrow &
\lambda \sum_{k=1}^{m}{\hat{\mathbf{w}}}_i^T\hat{\mathbf{w_k}}\boldsymbol{\Omega}_{ik}
- \sum_{j=1}^{n_i}\frac{1}{n_i}\alpha_j^i {\hat{\mathbf{w}}}_i^T\mathbf{x}_j^i = 0  \label{w_hat}\\
& \Leftrightarrow &
\lambda \sum_{i=1}^{m}\sum_{k=1}^{m}{\hat{\mathbf{w}}}_i^T\hat{\mathbf{w_k}}\boldsymbol{\Omega}_{ik}
= \sum_{i=1}^{m}\sum_{j=1}^{n_i}\frac{1}{n_i}\alpha_j^i {\hat{\mathbf{w}}}_i^T\mathbf{x}_j^i \nonumber \\
& \Leftrightarrow &
\lambda tr(\hat{\mathbf{W}}^T\hat{\mathbf{W}}\boldsymbol{\Omega}) =
\lambda tr(\hat{\mathbf{W}}\boldsymbol{\Omega}\hat{\mathbf{W}}^T)
= \sum_{i=1}^{m}\sum_{j=1}^{n_i}\frac{1}{n_i}\alpha_j^i {\hat{\mathbf{w}}}_i^T\mathbf{x}_j^i \nonumber
\end{eqnarray}

Let's make $\mathbf{B}$ with its $i$th column being $\sum_{j=1}^{n_i}\frac{1}{n_i}\alpha_j^i \mathbf{x}_j^i$. From \eqref{solve_w_hat} we have:
\begin{equation}
\begin{aligned}
& &\lambda \hat{\mathbf{W}} \boldsymbol{\Omega} &= \mathbf{B} \\
& \Leftrightarrow & \hat{\mathbf{W}} &= \frac{1}{\lambda}\mathbf{B} \boldsymbol{\Omega}^{-1} \\
& \Leftrightarrow & \hat{\mathbf{W}}^T \hat{\mathbf{W}} \boldsymbol{\Omega} &= \frac{1}{\lambda^2} \boldsymbol{\Omega}^{-1}\mathbf{B}^T \mathbf{B} \\
& \Leftrightarrow & \lambda tr(\hat{\mathbf{W}}^T \hat{\mathbf{W}} \boldsymbol{\Omega}) &= \frac{1}{\lambda} tr(\boldsymbol{\Sigma}\mathbf{B}^T \mathbf{B})\\
& & &= \frac{1}{\lambda}\sum_{i=1}^{m} \sum_{i'=1}^{m} (\sum_{j=1}^{n_i} \frac{\alpha_j^i}{n_i} \mathbf{x}_j^i)^T(\sum_{j'=1}^{n_{I'}} \frac{\alpha_{j'}^{i'}}{n_{i'}} \mathbf{x}_{j'}^{i'}) \sigma_{ii'}\\
& & & = \frac{1}{\lambda} \boldsymbol{\alpha}^T \mathbf{K} \boldsymbol{\alpha}
\label{rewrite_regulizer}
\end{aligned}
\end{equation}

Next, we will show the primal-dual variable optimal point correspondence is given by
\begin{equation}
\mathbf{w}^*_i = \frac{1}{\lambda} \sum_{i'=1}^{m}\sum_{j'=1}^{n_{i'}}\frac{{\alpha_{j'}^{i'}}^*}{n_{i'}}{\mathbf{x}_{j'}^{i'}} \sigma_{ii'}
\end{equation}

Given optimal $\boldsymbol{\alpha}^*$ and $\mathbf{W}^*$ and strong duality that $P(\mathbf{W}^*) = D(\boldsymbol{\alpha}^*)$, we know that $\mathbf{W}^*$ minimizes $L(\mathbf{W},\boldsymbol{\alpha}^*)$. Then, from \eqref{w_hat} and \eqref{rewrite_regulizer}, we have that,
\begin{equation}
\begin{aligned}
\sum_{i=1}^{m}\sum_{j=1}^{n_i}\frac{{\alpha_j^i}^*}{n_i}{\mathbf{x}_j^i}^T {{\mathbf{w}^*}}_i & =
\lambda tr(\mathbf{W}^* \boldsymbol{\Omega}{\mathbf{W}^*}^T ) = \frac{1}{\lambda} \sum_{i=1}^{m}\sum_{j=1}^{n_i}\frac{{\alpha_j^i}^*}{n_i}{\mathbf{x}_j^i}^T \sum_{i'=1}^{m}\sum_{j'=1}^{n_{i'}}\frac{{\alpha_{j'}^{i'}}^*}{n_{i'}}{\mathbf{x}_{j'}^{i'}} \sigma_{ii'}
\label{w_hat_2}
\end{aligned}
\end{equation}
Note that when $\mathbf{w}^*_i = \frac{1}{\lambda} \sum_{i'=1}^{m}\sum_{j'=1}^{n_{i'}}\frac{{\alpha_{j'}^{i'}}^*}{n_{i'}}{\mathbf{x}_{j'}^{i'}} \sigma_{ii'}$,
equation \eqref{w_hat_2} holds. Since $\hat{\mathbf{W}}$ given $\mathbf{B}$ and $\mathbf{\Omega}$ is unique, we can get the coclusion that:
\begin{equation}
\begin{aligned}
\mathbf{w}^*_i = \frac{1}{\lambda} \sum_{i'=1}^{m}\sum_{j'=1}^{n_{i'}}\frac{{\alpha_{j'}^{i'}}^*}{n_{i'}}{\mathbf{x}_{j'}^{i'}} \sigma_{ii'}
\end{aligned}
\end{equation}
\end{proof}
\subsection{Proof of Lemma \ref{dual_lemma}}
\begin{proof}
We have that
\begin{equation*}
D(\boldsymbol{\alpha} + \eta \sum_{i=1}^{m} \Delta \boldsymbol{\alpha}_{[i]}) = \underbrace{ - \sum_{i=1}^{m} \frac{1}{n_i} \sum_{j=1}^{n_i} {l_j^i}^* (-\alpha_j^i - \eta \Delta \alpha_j^i )}_A \underbrace{-\frac{1}{2\lambda} (\boldsymbol{\alpha} + \eta \sum_{i=1}^{m} \Delta \boldsymbol{\alpha}_{[i]})^T \mathbf{K} (\boldsymbol{\alpha} + \eta \sum_{i=1}^{m} \Delta \boldsymbol{\alpha}_{[i]})}_B
\end{equation*}
Let us bound terms $A$ and $B$ seperately.
\begin{equation}\label{bound_A}
\begin{aligned}
A &= - \sum_{i=1}^{m} \frac{1}{n_i} \sum_{j=1}^{n_i} {l_j^i}^* (-(1-\eta)\alpha_j^i - \eta (\alpha_j^i +\Delta \alpha_j^i) )
& \geq - \sum_{i=1}^{m} \frac{1}{n_i} \sum_{j=1}^{n_i}  (1-\eta){l_j^i}^*(-\alpha_j^i) + \eta{l_j^i}^*(- (\alpha_j^i +\Delta \alpha_j^i) )
\end{aligned}
\end{equation}
The last inequality comes from Jensen's inequality.
Next we will bound $B$. Using the seperability measurement $\rho$ defined in \eqref{rhodef}.
\begin{equation}\label{bound_B}
\begin{aligned}
B &= - \frac{1}{2\lambda} \bigg( \boldsymbol{\alpha}^T\mathbf{K}\boldsymbol{\alpha} + 2\eta \sum_{i=1}^{m} \Delta\boldsymbol{\alpha}_{[i]}^T \mathbf{K} \boldsymbol{\alpha} + \eta^2( (\sum_{i=1}^{m}\Delta \boldsymbol{\alpha}_{[i]})^T \mathbf{K}(\sum_{i=1}^{m}\Delta \boldsymbol{\alpha}_{[i]})  ) \bigg) \\
& \geq - \frac{1}{2\lambda} \bigg( \boldsymbol{\alpha}^T\mathbf{K}\boldsymbol{\alpha} + 2\eta \sum_{i=1}^{m} \Delta\boldsymbol{\alpha}_{[i]}^T \mathbf{K} \boldsymbol{\alpha} + \eta \rho \sum_{i=1}^{m} \Delta\boldsymbol{\alpha}_{[i]}^T \mathbf{K} \Delta\boldsymbol{\alpha}_{[i]} \bigg)
\end{aligned}
\end{equation}
Combining \eqref{bound_A} and \eqref{bound_B}, we have
\begin{equation*}
\begin{aligned}
D(\boldsymbol{\alpha} + \eta \sum_{i=1}^{m} \Delta \boldsymbol{\alpha}_{[i]})  & \geq \underbrace{(1-\eta)\bigg( - \sum_{i=1}^{m} \frac{1}{n_i} \sum_{j=1}^{n_i}  {l_j^i}^*(-\alpha_j^i) - \frac{1}{2\lambda} \boldsymbol{\alpha}^T\mathbf{K}\boldsymbol{\alpha} \bigg)}_{(1-\eta)D(\boldsymbol{\alpha})} \\ & + \eta \sum_{i=1}^{m} \underbrace{ \bigg( \frac{1}{n_i} \sum_{j=1}^{n_i}{l_j^i}^*(- (\alpha_j^i +\Delta \alpha_j^i) ) - \frac{1}{2\lambda m} \boldsymbol{\alpha}^T\mathbf{K}\boldsymbol{\alpha} - \frac{1}{\lambda}\Delta\boldsymbol{\alpha}_{[i]}^T \mathbf{K} \boldsymbol{\alpha} - \frac{1}{2\lambda}\rho \Delta\boldsymbol{\alpha}_{[i]}^T \mathbf{K} \Delta\boldsymbol{\alpha}_{[i]} \bigg)}_{\mathcal{D}_i^\rho(\Delta\boldsymbol{\alpha}_{[i]};\mathbf{w}_i(\boldsymbol{\alpha}),{\boldsymbol{\alpha}}_{[i]})} \\
& \geq (1-\eta)D(\boldsymbol{\alpha}) + \eta \sum_{i=1}^{m} \mathcal{D}_i^\rho(\Delta\boldsymbol{\alpha}_{[i]};\mathbf{w}_i(\boldsymbol{\alpha}),{\boldsymbol{\alpha}}_{[i]})
\end{aligned}
\end{equation*}
\end{proof}

\subsection{Proof of Theorem \ref{local_convergence_smooth}}
\begin{proof}
When functions $l_j^i$ are $(1/\mu)$-smooth, it is well known that ${l_j^i}^*$ are $\mu$ strongly convex. Let us define function $F(\boldsymbol{\boldsymbol{\nu}}) = -\mathcal{D}_i^\rho(\boldsymbol{\nu}_{[i]};\mathbf{w}_i(\boldsymbol{\alpha}),{\boldsymbol{\alpha}}_{[i]}) $.
we have $F(\boldsymbol{\boldsymbol{\nu}}) = g(\boldsymbol{\nu})+f(\boldsymbol{\nu})$
 , with $g(\boldsymbol{\nu}) = \frac{1}{n_i}\sum_{j=1}^{n_i}{l_j^i}^*(-\alpha_j^i - \nu_j^i)$
and $f(\boldsymbol{\nu}) = \frac{1}{2\lambda m}{\boldsymbol{\alpha}}^T \mathbf{K} {\boldsymbol{\alpha}} + \frac{1}{n_i}\sum_{j=1}^{n_i} \nu_j^i {\mathbf{w}_i(\boldsymbol{\alpha})}^T \mathbf{x}_j^i + \frac{\rho \sigma_{ii}}{2\lambda {n_i}^2} {\|\mathbf{A} \boldsymbol{\nu}_{[i]} \|}^2$,where $\mathbf{A}$ is the data feature matrix that include datapoints from all tasks.

Since functions ${l_j^i}^*$ are $\mu$ strongly convex, $g(\boldsymbol{\nu})$ is strongly convex with convexity parameter $\frac{\mu}{n_i}$.
The gradient of function$f(\boldsymbol{\nu})$ is coordinate-wise Lipschitz continuous with Lipschitz constant $\frac{\rho \sigma_{ii}}{\lambda {n_i}^2}q_{\text{max}} $.
From the proof of Theorem 20 in~\citep{Richtarik2015}, we have that
\begin{eqnarray}
\mathbb{E}[\mathcal{D}_i^\rho(\Delta\boldsymbol{\alpha}_{[i]}^*;\mathbf{w}_i(\boldsymbol{\alpha}),{\boldsymbol{\alpha}}_{[i]}) \!\!\!\!&\!\! - \!\!&\!\!\!\!
   \mathcal{D}_i^\rho(\Delta\boldsymbol{\alpha}_{[i]}^{(h+1)};\mathbf{w}_i(\boldsymbol{\alpha}),{\boldsymbol{\alpha}}_{[i]})] \nonumber \\
  \!\!\!\!&\!\! \leq \!\!&\!\!\!\! \bigg( 1 - \frac{1}{n_i}\frac{\frac{\mu\lambda n_i}{\rho \sigma_{ii} q_{\text{max}}}}{ {1+\frac{\mu \lambda n_i}{\rho \sigma_{ii} q_{\text{max}}}} } \bigg) \big( \mathcal{D}_i^\rho(\Delta\boldsymbol{\alpha}_{[i]}^*;\mathbf{w}_i(\boldsymbol{\alpha}),{\boldsymbol{\alpha}}_{[i]})-
   \mathcal{D}_i^\rho(\Delta\boldsymbol{\alpha}_{[i]}^{(h)};\mathbf{w}_i(\boldsymbol{\alpha}),{\boldsymbol{\alpha}}_{[i]}) \big) \nonumber \\
  \!\!\!\!&\!\!   =  \!\!&\!\!\!\! \bigg( 1- \frac{1}{n_i}\frac{\mu\lambda n_i}{\rho\sigma_{ii} q_{\text{max}}+\mu\lambda n_i}\bigg)
   \big( \mathcal{D}_i^\rho(\Delta\boldsymbol{\alpha}_{[i]}^*;\mathbf{w}_i(\boldsymbol{\alpha}),{\boldsymbol{\alpha}}_{[i]})-
   \mathcal{D}_i^\rho(\Delta\boldsymbol{\alpha}_{[i]}^{(h)};\mathbf{w}_i(\boldsymbol{\alpha}),{\boldsymbol{\alpha}}_{[i]}) \big) \nonumber
\end{eqnarray}

Therefore, we obtain that
\begin{eqnarray}
  \mathbb{E}[\mathcal{D}_i^\rho(\Delta\boldsymbol{\alpha}_{[i]}^*;\mathbf{w}_i(\boldsymbol{\alpha}),{\boldsymbol{\alpha}}_{[i]}) \!\!\!\!&\!\! - \!\!&\!\!\!\!
   \mathcal{D}_i^\rho(\Delta\boldsymbol{\alpha}_{[i]}^{(h+1)};\mathbf{w}_i(\boldsymbol{\alpha}),{\boldsymbol{\alpha}}_{[i]})] \nonumber \\
   \!\!\!\!&\!\! \leq \!\!&\!\!\!\!  \bigg( 1- \frac{1}{n_i}\frac{\mu\lambda n_i}{\rho\sigma_{ii} q_{\text{max}}+\mu\lambda n_i}\bigg)^h
   \big( \mathcal{D}_i^\rho(\Delta\boldsymbol{\alpha}_{[i]}^*;\mathbf{w}_i(\boldsymbol{\alpha}),{\boldsymbol{\alpha}}_{[i]})-
   \mathcal{D}_i^\rho(\mathbf{0};\mathbf{w}_i(\boldsymbol{\alpha}),{\boldsymbol{\alpha}}_{[i]}) \big) \nonumber
\end{eqnarray}
If $H$ is chosen to be $H \geq \text{log}(\frac{1}{\Theta}) \frac{\rho\sigma_{ii} q_{\text{max}}+\mu\lambda n_i}{\mu\lambda }$, we will have $\bigg( 1- \frac{1}{n_i}\frac{\mu\lambda n_i}{\rho\sigma_{ii} q_{\text{max}}+\mu\lambda n_i}\bigg)^H \leq \Theta$ to complete the proof.
\end{proof}

\subsection{Proof of Theorem \ref{local_convergence_lipschitz}}
\begin{proof}
Similar to proof of Theorem \eqref{local_convergence_smooth}, $F(\boldsymbol{\boldsymbol{\nu}})= g(\boldsymbol{\nu})+f(\boldsymbol{\nu})$ is defined. In this case, functions ${l_j^i}^*$ are not guaranteed to be strongly convex. $f(\boldsymbol{\nu})$ is still coordinate-wise Lipschitz continuous with Lipschitz constant $\frac{\rho \sigma_{ii}}{\lambda {n_i}^2}q_{\text{max}} $. From Theorem 3 in~\citep{tappenden2015complexity}, we have
\begin{eqnarray}
   \mathbb{E}[\mathcal{D}_i^\rho(\Delta\boldsymbol{\alpha}_{[i]}^*;\mathbf{w}_i(\boldsymbol{\alpha}),{\boldsymbol{\alpha}}_{[i]}) \!\!\!\!&\!\! - \!\!&\!\!\!\!
   \mathcal{D}_i^\rho(\Delta\boldsymbol{\alpha}_{[i]}^{(h)};\mathbf{w}_i(\boldsymbol{\alpha}),{\boldsymbol{\alpha}}_{[i]})] \nonumber \\
   \!\!\!\!&\!\! \leq \!\!&\!\!\!\! \frac{n_i}{n_i+h} \bigg(\mathcal{D}_i^\rho(\Delta\boldsymbol{\alpha}_{[i]}^*;\mathbf{w}_i(\boldsymbol{\alpha}),{\boldsymbol{\alpha}}_{[i]})-
   \mathcal{D}_i^\rho(\mathbf{0};\mathbf{w}_i(\boldsymbol{\alpha}),{\boldsymbol{\alpha}}_{[i]})+ \frac{1}{2} \frac{\rho \sigma_{ii}}{\lambda {n_i}^2}q_{\text{max}} \| \Delta\boldsymbol{\alpha}_{[i]}^* \|^2 \bigg) \nonumber
\end{eqnarray}
If $H \geq n_i\bigg( \frac{1-\Theta}{\Theta} + \frac{\rho \sigma_{ii}q_{\text{max}} \| \Delta\boldsymbol{\alpha}_{[i]}^* \|^2}{2\Theta \lambda {n_i}^2 \big(\mathcal{D}_i^\rho(\Delta\boldsymbol{\alpha}_{[i]}^*;.)-\mathcal{D}_i^\rho(\mathbf{0};.)\big)} \bigg)$, we will have Assumption \eqref{local_approx_assumption} holds as desired.
\end{proof}

\subsection{Proof of Lemma \ref{conv_lemma}}
\begin{proof}
Following the derivation of proof of \textbf{Lemma 5} in~\citep{MaSJJRT15}, we arrive at the inequality
\begin{equation*}
\begin{aligned}
\mathbb{E}(D(\boldsymbol{\alpha})-D(\boldsymbol{\alpha} + \eta \sum_{i=1}^{m} \Delta \boldsymbol{\alpha}_{[i]}) ) \leq \eta(1-\Theta)\underbrace{\bigg( D(\boldsymbol{\alpha}) - \sum_{i=1}^{m} \mathcal{D}_i^\rho(\Delta\boldsymbol{\alpha}_{[i]}^*;\mathbf{w}_i(\boldsymbol{\alpha}),{\boldsymbol{\alpha}}_{[i]} \bigg)}_C
\end{aligned}
\end{equation*}
where
\begin{equation} \label{C_upp_bound}
\begin{aligned}
C =& - \sum_{i=1}^{m} \frac{1}{n_i} \sum_{j=1}^{n_i}  {l_j^i}^*(-\alpha_j^i) \cancel{ - \frac{1}{2\lambda} \boldsymbol{\alpha}^T\mathbf{K}\boldsymbol{\alpha}}\\
& +\sum_{i=1}^m\bigg( \frac{1}{n_i}\sum_{j=1}^{n_i}{l_j^i}^*(-\alpha_j^i - {\Delta\alpha_j^i}^*) \cancel{+ \frac{1}{2\lambda m}{\boldsymbol{\alpha}}^T \mathbf{K} {\boldsymbol{\alpha}}} + \frac{1}{n_i}\sum_{j=1}^{n_i} {\Delta\alpha_j^i}^* {\mathbf{w}_i(\boldsymbol{\alpha})}^T \mathbf{x}_j^i +  \frac{\rho}{2\lambda}  {\Delta\boldsymbol{\alpha}_{[i]}^*}^T \mathbf{K} \Delta\boldsymbol{\alpha}_{[i]}^* \bigg) \\
 \leq & \sum_{i=1}^m\bigg( \frac{1}{n_i}\sum_{j=1}^{n_i}{l_j^i}^*(-\alpha_j^i - s(u_j^i - \alpha_j^i)) - {l_j^i}^*(-\alpha_j^i) \bigg) \\
& + \sum_{i=1}^m \bigg(  \frac{1}{n_i}\sum_{j=1}^{n_i} s(u_j^i - \alpha_j^i) {\mathbf{w}_i(\boldsymbol{\alpha})}^T \mathbf{x}_j^i + \frac{\rho}{2\lambda}  {s^2(\mathbf{u} - \boldsymbol{\alpha})_{[i]}}^T \mathbf{K} {(\mathbf{u} - \boldsymbol{\alpha})_{[i]}} \bigg)\\
 \leq &\sum_{i=1}^m\bigg( \frac{1}{n_i}\sum_{j=1}^{n_i} s {l_j^i}^*(-u_j^i) +(1-s) {l_j^i}^*(-\alpha_j^i) -\frac{\mu}{2}(1-s)s(u_j^i-\alpha_j^i)^2 -{l_j^i}^*(-\alpha_j^i) + s(u_j^i - \alpha_j^i) {\mathbf{w}_i(\boldsymbol{\alpha})}^T \mathbf{x}_j^i \bigg)  \\&+  \sum_{i=1}^m \frac{\rho}{2\lambda}  {s^2(\mathbf{u} - \boldsymbol{\alpha})_{[i]}}^T \mathbf{K} {(\mathbf{u} - \boldsymbol{\alpha})_{[i]}}\\
 = &\sum_{i=1}^m \bigg( \frac{1}{n_i} \sum_{j=1}^{n_i}  s{l_j^i}^*(-u_j^i) +s u_j^i {\mathbf{w}_i (\boldsymbol{\alpha})}^T \mathbf{x}_j^i  - s {l_j^i}^*(-\alpha_j^i) - s \alpha_j^i {\mathbf{w}_i(\boldsymbol{\alpha})}^T \mathbf{x}_j^i -\frac{\mu}{2}(1-s)s(u_j^i-\alpha_j^i)^2 \bigg) \\
&+  \sum_{i=1}^m \frac{\rho}{2\lambda}  {s^2(\mathbf{u} - \boldsymbol{\alpha})_{[i]}}^T \mathbf{K} {(\mathbf{u} - \boldsymbol{\alpha})_{[i]}}
\end{aligned}
\end{equation}
From the definition of convex conjugate, we have
\begin{equation}\label{conv_conj}
  {l_j^i}^*(-u_j^i) = - u_j^i {\mathbf{w}_i(\boldsymbol{\alpha})}^T \mathbf{x}_j^i - {l_j^i}({\mathbf{w}_i(\boldsymbol{\alpha})}^T \mathbf{x}_j^i)
\end{equation}
And we could write the duality gap as
\begin{equation}\label{dual_gap}
\begin{aligned}
G(\boldsymbol{\alpha}) := P(\mathbf{W}(\boldsymbol{\alpha})) - D(\boldsymbol{\alpha}) &=
\sum_{i=1}^{m} \frac{1}{n_i}({l_j^i}({\mathbf{w}_i(\boldsymbol{\alpha})}^T \mathbf{x}_j^i)  + {l_j^i}^*(-\alpha_j^i) ) + \lambda tr(\mathbf{W}(\boldsymbol{\alpha}) \boldsymbol{\Omega} \mathbf{W}(\boldsymbol{\alpha})^T) \\
& = \sum_{i=1}^{m} \frac{1}{n_i}({l_j^i}({\mathbf{w}_i(\boldsymbol{\alpha})}^T \mathbf{x}_j^i)  + {l_j^i}^*(-\alpha_j^i) +\alpha_j^i {\mathbf{w}_i(\boldsymbol{\alpha})}^T \mathbf{x}_j^i )
\end{aligned}
\end{equation}
Plugging \eqref{conv_conj} and \eqref{dual_gap} into \eqref{C_upp_bound}, we have
\begin{equation}\label{C_upp_boundtwo}
\begin{aligned}
C  \leq &\sum_{i=1}^m \bigg( \frac{1}{n_i} \sum_{j=1}^{n_i} - s ( {l_j^i}({\mathbf{w}_i(\boldsymbol{\alpha})}^T \mathbf{x}_j^i) + {l_j^i}^*(-\alpha_j^i) + \alpha_j^i {\mathbf{w}_i(\boldsymbol{\alpha})}^T \mathbf{x}_j^i) -\frac{\mu}{2}(1-s)s(u_j^i-\alpha_j^i)^2 \bigg) \\ &+  \sum_{i=1}^m \frac{\rho \sigma_{ii}}{2\lambda {n_i}^2} {\|\mathbf{A} {s(\mathbf{u} - \boldsymbol{\alpha})_{[i]}} \|}^2 \\
= & -s G(\boldsymbol{\alpha}) -\sum_{i=1}^m  \frac{1}{n_i} \sum_{j=1}^{n_i} \frac{\mu}{2}(1-s)s(u_j^i-\alpha_j^i)^2  +  \sum_{i=1}^m \frac{\rho}{2\lambda}  {s^2(\mathbf{u} - \boldsymbol{\alpha})_{[i]}}^T \mathbf{K} {(\mathbf{u} - \boldsymbol{\alpha})_{[i]}}
\end{aligned}
\end{equation}
\end{proof}

\subsection{Proof of Lemma \ref{Q_upp_bound}}
\begin{proof}
The strong convexity parameter for general convex functions is $\mu = 0$. Therefore, the $Q^{(t)}$ becomes
\begin{equation*}
Q^{(t)} \stackrel{Q^{(t)} \textnormal{ in Lemma \ref{conv_lemma}}}{=\joinrel=\joinrel=}  \sum_{i=1}^{m}(\mathbf{u}_{[i]}^{(t)} - \boldsymbol{\alpha}_{[i]}^{(t)})^T \mathbf{K} (\mathbf{u}_{[i]}^{(t)} - \boldsymbol{\alpha}_{[i]}^{(t)}) \stackrel{\pi_i \textnormal{ in Lemma \ref{Q_upp_bound}}}{\leq} \sum_{i=1}^{m}\pi_i \|\mathbf{u}_{[i]}^{(t)} - \boldsymbol{\alpha}_{[i]}^{(t)} \|^2
\end{equation*}
Using \eqref{technical_L}, we know that $|{\alpha_j^i}^{(t)}| \leq L$ and we have $|{u_j^i}^{(t)}| \leq L$ because ${u_j^i}^{(t)}$ is an sub-gradient of the L-Lipschitz function. We could therefore bound $Q^{(t)} $
\begin{equation*}
Q^{(t)} \leq 4L^2\sum_{i=1}^{m} \pi_i n_i
\end{equation*}

When all data $\mathbf{x}_j^i$ are normalized such that $\|\mathbf{x}_j^i \|^2  \leq 1$, we have
\begin{equation*}
\pi_i := \underset{\boldsymbol{\alpha}_{[i]} \in \mathbb{R}^{n_i}}{\text{max}} \frac{\|\boldsymbol{\alpha}_{[i]}^T \mathbf{K} \boldsymbol{\alpha}_{[i]}\|^2}{\|\boldsymbol{\alpha}_{[i]}\|^2} = \underset{\boldsymbol{\alpha}_{[i]} \in \mathbb{R}^{n_i}}{\text{max}} \frac{\sigma_{ii}}{{n_i}^2} \frac{\|\mathbf{A}_{[i]}\boldsymbol{\alpha}_{[i]} \|^2}{\|\boldsymbol{\alpha}_{[i]} \|^2} = \frac{\sigma_{ii}}{{n_i}^2}\| \mathbf{A}_{[i]}\|_2^2 \leq \frac{\sigma_{ii}}{{n_i}^2} \| \mathbf{A}_{[i]}\|_F^2 \leq \frac{\sigma_{ii}}{{n_i}^2} n_i = \frac{\sigma_{ii}}{n_i}
\end{equation*}
where $\mathbf{A}_{[i]}$ is the data matrix for task $i$:
\begin{equation*}
  \mathbf{A}_{[i]} = \begin{bmatrix}
                       \mathbf{x}_1^i & \dots & \mathbf{x}_{n_i}^i
                     \end{bmatrix}
\end{equation*}
\end{proof}

\subsection{Proof of Lemma \ref{rho_upp_bound}}
\begin{proof}
\begin{equation*}\label{alpha_K_alpha}
\begin{aligned}
{\boldsymbol{\alpha}}^T \mathbf{K} \boldsymbol{\alpha} &= \sum_{i=1}^{m} {{\boldsymbol{\alpha}}_{[i]}^T \mathbf{K} \sum_{i'=1}^{m} \boldsymbol{\alpha}_{[i']}}\\
& = \sum_{i=1}^{m} \sum_{i'=1}^{m} \boldsymbol{\alpha}_{[i]}^T \mathbf{K}\boldsymbol{\alpha}_{[i']}\\
& = \sum_{i=1}^{m} \sum_{i'=1}^{m} \sigma_{ii'}  \langle \frac{1}{n_i} \sum_{i=1}^{n_i} \alpha_j^i \mathbf{x}_j^i, \frac{1}{n_{i'}}  \sum_{i'=1}^{n_{i'}} \alpha_j^{i'} \mathbf{x}_j^{i'} \rangle \\
& \leq \sum_{i=1}^{m} \sum_{i'=1}^{m} \frac{1}{2} |\sigma_{ii'}| \bigg( \frac{1}{{n_i}^2} \lVert \sum_{i=1}^{n_i} \alpha_j^i \mathbf{x}_j^i\rVert^2    +\frac{1}{{n_{i'}}^2} \lVert \sum_{i'=1}^{n_{i'}} \alpha_j^{i'} \mathbf{x}_j^{i'}\rVert^2      \bigg) \\
& = \sum_{i=1}^{m} \sum_{i'=1}^{m} \frac{1}{2} \bigg( \frac{|\sigma_{ii'}|}{\sigma_{ii}} \boldsymbol{\alpha}_{[i]}^T \mathbf{K}\boldsymbol{\alpha}_{[i]} + \frac{|\sigma_{ii'}|}{\sigma_{i'i'}} \boldsymbol{\alpha}_{[i']}^T \mathbf{K}\boldsymbol{\alpha}_{[i']} \bigg)\\
& = \sum_{i=1}^{m} \sum_{i'=1}^{m} \frac{|\sigma_{ii'}|}{\sigma_{ii}} \boldsymbol{\alpha}_{[i]}^T \mathbf{K}\boldsymbol{\alpha}_{[i]}
\end{aligned}
\end{equation*}
It follows that
\begin{equation*}
  {\text{max}} \frac{{\boldsymbol{\alpha}}^T \mathbf{K} \boldsymbol{\alpha}} {\sum_{i=1}^{m} {\boldsymbol{\alpha}}_{[i]}^T \mathbf{K} \boldsymbol{\alpha}_{[i]}} \leq \underset{i}{\text{max}} \sum_{i'=1}^{m} \frac{|\sigma_{ii'}|}{\sigma_{ii}}.
\end{equation*}
\end{proof}

\subsection{Proof of Theorem \ref{convergence_smooth}}
\begin{proof}
If function ${l_j^i}(\cdot)$ is $(\frac{1}{\mu})$-smooth then ${l_j^i}^*(\cdot)$ is $\mu$-strongly convex. From $Q^{(t)}$ in Lemma~\ref{conv_lemma},
\begin{equation}\label{Q_t_smooth}
\begin{aligned}
  Q^{(t)}:&= - \frac{\lambda\mu(1-s)}{\rho s}\sum_{i=1}^{m}\frac{1}{n_i}\|\mathbf{u}_{[i]}^{(t)} - \boldsymbol{\alpha}_{[i]}^{(t)} \|^2 + \sum_{i=1}^{m} (\mathbf{u}_{[i]}^{(t)} - \boldsymbol{\alpha}_{[i]}^{(t)})^T \mathbf{K} (\mathbf{u}_{[i]}^{(t)} - \boldsymbol{\alpha}_{[i]}^{(t)}) \\
  & \leq - \frac{\lambda\mu(1-s)}{\rho s}\sum_{i=1}^{m}\frac{1}{n_i}\|\mathbf{u}_{[i]}^{(t)} - \boldsymbol{\alpha}_{[i]}^{(t)} \|^2 + \sum_{i=1}^{m} \pi_i \|\mathbf{u}_{[i]}^{(t)} - \boldsymbol{\alpha}_{[i]}^{(t)}\|^2 \\
  & \leq \sum_{i=1}^{m} (- \frac{\lambda\mu(1-s)}{\rho s n_i} + \pi_i)\|\mathbf{u}_{[i]}^{(t)} - \boldsymbol{\alpha}_{[i]}^{(t)} \|^2 \\
  & \leq  (- \frac{\lambda\mu(1-s)}{\rho s n_{i^*}} + \pi_{i^*})\|\mathbf{u}^{(t)} - \boldsymbol{\alpha}^{(t)} \|^2
\end{aligned}
\end{equation}
where $i^* = \underset{i}{\text{argmax}} - \frac{\lambda\mu(1-s)}{\rho s n_i} + \pi_i$.
Let $s = \frac{\lambda\mu}{\lambda \mu + \rho n_{i^*} \pi_{i^*}} \in [0,1]$, we have that $Q^{(t)} \leq 0$, $\forall t$.
Now, \eqref{convergence_lemma_ineq} becomes
\begin{equation}\label{dual_improv_expect}
  \mathbb{E}(D(\boldsymbol{\alpha}^{(t+1)})-D(\boldsymbol{\alpha}^{(t)}) ) \geq \eta(1-\Theta) \frac{\lambda\mu}{\lambda \mu + \rho n_{i^*} \pi_{i^*}}G(\boldsymbol{\alpha}^{(t)}) \geq  \eta(1-\Theta) \frac{\lambda\mu}{\lambda \mu + \rho n_{i^*} \pi_{i^*}}(D(\boldsymbol{\alpha}^*) - D(\boldsymbol{\alpha}^{(t)}))
\end{equation}
\begin{eqnarray*}
   \mathbb{E}(D(\boldsymbol{\alpha}^{(t+1)})-D(\boldsymbol{\alpha}^{*})+ D(\boldsymbol{\alpha}^{*})-D(\boldsymbol{\alpha}^{(t)}) )  &\geq&  \eta(1-\Theta) \frac{\lambda\mu}{\lambda \mu + \rho n_{i^*} \pi_{i^*}}(D(\boldsymbol{\alpha}^*) - D(\boldsymbol{\alpha}^{(t)})) \\
   \mathbb{E}(D(\boldsymbol{\alpha}^{(t+1)})-D(\boldsymbol{\alpha}^{*})+ D(\boldsymbol{\alpha}^{*})-D(\boldsymbol{\alpha}^{(t)}))   &\geq&  \eta(1-\Theta) \frac{\lambda\mu}{\lambda \mu + \rho n_{i^*} \pi_{i^*}}(D(\boldsymbol{\alpha}^*) - D(\boldsymbol{\alpha}^{(t)})) \\
   \mathbb{E}(D(\boldsymbol{\alpha}^{*})-D(\boldsymbol{\alpha}^{(t+1)}))
   &\leq&  (1- \eta(1-\Theta) \frac{\lambda\mu}{\lambda \mu + \rho n_{i^*} \pi_{i^*}})(D(\boldsymbol{\alpha}^*) - D(\boldsymbol{\alpha}^{(t)}))\\
\end{eqnarray*}
\begin{equation*}
   \mathbb{E}(D(\boldsymbol{\alpha}^{*})-D(\boldsymbol{\alpha}^{(t+1)}))
    \leq (1- \eta(1-\Theta) \frac{\lambda\mu}{\lambda \mu + \rho n_{i^*} \pi_{i^*}})^t(D(\boldsymbol{\alpha}^*) - D(\boldsymbol{\alpha}^{(0)}))
    \leq m \text{exp}( -t\eta(1-\Theta) \frac{\lambda\mu}{\lambda \mu + \rho n_{i^*} \pi_{i^*}})
\end{equation*}
We have $ \mathbb{E}(D(\boldsymbol{\alpha}^{*})-D(\boldsymbol{\alpha}^{(t+1)})) \leq \epsilon_D$ when
\begin{equation*}
  t \geq \frac{1}{\eta(1-\Theta)}\frac{\lambda \mu + \rho n_{i^*} \pi_{i^*}}{\lambda\mu} \text{log}\frac{m}{\epsilon_D}
\end{equation*}
From \eqref{dual_improv_expect}, we have
\begin{eqnarray*}
    \eta(1-\Theta) \frac{\lambda\mu}{\lambda \mu + \rho n_{i^*} \pi_{i^*}}G(\boldsymbol{\alpha}^{(t)}) &\leq&\mathbb{E}(D(\boldsymbol{\alpha}^{(t+1)})-D(\boldsymbol{\alpha}^{(t)}) ) \leq \mathbb{E}(D(\boldsymbol{\alpha}^{*})-D(\boldsymbol{\alpha}^{(t)}) )  \\
  G(\boldsymbol{\alpha}^{(t)}) &\leq& \frac{1 }{\eta(1-\Theta)} \frac{\lambda \mu + \rho n_{i^*} \pi_{i^*}}{\lambda\mu}\mathbb{E}(D(\boldsymbol{\alpha}^{*})-D(\boldsymbol{\alpha}^{(t)}) )\\
\end{eqnarray*}
For $ G(\boldsymbol{\alpha}^{(t)}) \leq \epsilon_G$, we will need $\epsilon_D \leq \eta(1-\Theta) \frac{\lambda\mu}{\lambda \mu + \rho n_{i^*} \pi_{i^*}} \epsilon_G $. Hence, after
\begin{equation*}
  t \geq \frac{1}{\eta(1-\Theta)}\frac{\lambda \mu + \rho n_{i^*} \pi_{i^*}}{\lambda\mu} \text{log}\bigg( \frac{m }{\eta(1-\Theta)} \frac{\lambda \mu + \rho n_{i^*} \pi_{i^*}}{\lambda\mu} \frac{1}{\epsilon_G} \bigg)
\end{equation*}
iterations, duality gap will be obtained to be less than $\epsilon_G$.
\end{proof}

\subsection{Proof of Theorem \ref{convergence_general}}
\begin{proof}
Most part of the proof is adapted from proof in~\citep{MaSJJRT15}.
\textit{Proof:}
Let $Q = \text{max}_t Q^{(t)}$.
From Lemma \ref{conv_lemma}, we have
\begin{equation}
\begin{aligned}\label{inequality_main_theorem3}
  \mathbb{E}[D(\boldsymbol{\alpha}^*) -D(\boldsymbol{\alpha}^{(t+1)}]=& \mathbb{E}[D(\boldsymbol{\alpha}^*) -D(\boldsymbol{\alpha}^{(t+1)})+D(\boldsymbol{\alpha}^{(t)})-D(\boldsymbol{\alpha}^{(t)})] \\
  \leq& D(\boldsymbol{\alpha}^*) -D(\boldsymbol{\alpha}^{(t)}) - \eta(1-\Theta) (sG(\boldsymbol{\alpha}^{(t)})-\frac{\rho}{2\lambda}s^2 Q^{(t)} ) \\
  =&  D(\boldsymbol{\alpha}^*) -D(\boldsymbol{\alpha}^{(t)}) - \eta(1-\Theta)s(P(\mathbf{W}(\boldsymbol{\alpha}^{t}))-D(\boldsymbol{\alpha}^{(t)}))+\eta(1-\Theta)\frac{\rho}{2\lambda}s^2 Q^{(t)}\\
   \leq&  D(\boldsymbol{\alpha}^*) -D(\boldsymbol{\alpha}^{(t)}) - \eta(1-\Theta)s(D(\boldsymbol{\alpha}^{(*)})-D(\boldsymbol{\alpha}^{(t)})) +\eta(1-\Theta)\frac{\rho}{2\lambda}s^2 Q^{(t)}\\
   \leq& (1- \eta(1-\Theta)s)(D(\boldsymbol{\alpha}^{(*)})-D(\boldsymbol{\alpha}^{(t)})) +\eta(1-\Theta)\frac{\rho}{2\lambda}s^2 Q
\end{aligned}
\end{equation}
From the above inequality, we have
\begin{equation*}
\begin{aligned}
\mathbb{E}[D(\boldsymbol{\alpha}^*) -D(\boldsymbol{\alpha}^{(t)}] \leq & (1- \eta(1-\Theta)s)^t(D(\boldsymbol{\alpha}^{(*)})-D(\boldsymbol{\alpha}^{(0)}))+\eta(1-\Theta)\frac{\rho}{2\lambda}s^2 Q \sum_{m=0}^{t-1} (1-\Theta)s)^m\\
 = &(1- \eta(1-\Theta)s)^t(D(\boldsymbol{\alpha}^{(*)})-D(\boldsymbol{\alpha}^{(0)}))+\eta(1-\Theta)\frac{\rho}{2\lambda}s^2 Q \frac{1-(1- \eta(1-\Theta)s)^t}{\eta(1-\Theta)s}\\
 \leq &(1- \eta(1-\Theta)s)^t(D(\boldsymbol{\alpha}^{(*)})-D(\boldsymbol{\alpha}^{(0)})) + \frac{\rho}{2\lambda}s Q
\end{aligned}
\end{equation*}
By choosing $s = 1$ and $t = t_0 := \text{max}\bigg(0,\ceil[\Big] {\frac{1}{\eta(1-\Theta)}\text{log} \bigg( \frac{2\lambda (D(\boldsymbol{\alpha}^*)-D(\boldsymbol{\alpha}^{(0)}))}{4L^2\pi\rho} \bigg)} \bigg)$, we will have
\begin{equation}\label{inequality_main_theorem1}
\begin{aligned}
\mathbb{E}[D(\boldsymbol{\alpha}^*) -D(\boldsymbol{\alpha}^{(t)}] \leq \frac{\rho}{2\lambda} Q + \frac{\rho}{2\lambda}Q  = \frac{Q\rho}{\lambda}
\end{aligned}
\end{equation}
Next, we will show by induction that
\begin{equation}\label{inequality_main_theorem2}
\forall t \geq t_0, \mathbb{E}[D(\boldsymbol{\alpha}^*) -D(\boldsymbol{\alpha}^{(t)}] \leq \frac{Q\rho}{\lambda(1+ \frac{\eta}{2}(1-\Theta)(t-t_0))}
\end{equation}
Clearly \eqref{inequality_main_theorem1} shows \eqref{inequality_main_theorem2} holds when $t = t_0$.

Now assume \eqref{inequality_main_theorem2} holds for $t = t'$, then for $t = t'+1$, from \eqref{inequality_main_theorem3} using $s = \frac{1}{1+\frac{\eta}{2}(1-\Theta)(t-t_0)} \in [0,1]$, we have
\begin{equation*}
\begin{aligned}
 \mathbb{E}[D(\boldsymbol{\alpha}^*) -D(\boldsymbol{\alpha}^{(t+1)}]
 \leq& (1- \eta(1-\Theta)s)(D(\boldsymbol{\alpha}^{(*)})-D(\boldsymbol{\alpha}^{(t)})) +\eta(1-\Theta)\frac{\rho}{2\lambda}s^2 Q\\
 \stackrel{\eqref{inequality_main_theorem2}}{\leq} & (1- \eta(1-\Theta)s)\frac{4L^2 \pi\rho}{\lambda(1+ \frac{\eta}{2}(1-\Theta)(t-t_0))} +\eta(1-\Theta)\frac{\rho}{2\lambda}s^2 Q\\
 \stackrel{\eqref{inequality_main_theorem3}}{\leq} & \frac{Q\rho}{\lambda} \frac{1+\frac{\eta}{2}(1-\Theta)(t-t_0-1)}{(1+\frac{\eta}{2}(1-\Theta)(t-t_0))^2}\\
 = & \frac{Q\rho}{\lambda} \frac{1}{1+\frac{\eta}{2}(1-\Theta)(t-t_0+1)} \frac{(1+\frac{\eta}{2}(1-\Theta)(t-t_0-1))(1+\frac{\eta}{2}(1-\Theta)(t-t_0+1))}{(1+\frac{\eta}{2}(1-\Theta)(t-t_0))^2}\\
 \leq &\frac{Q\rho}{\lambda} \frac{1}{1+\frac{\eta}{2}(1-\Theta)(t-t_0+1)}
\end{aligned}
\end{equation*}

Next, given $\bar{\boldsymbol{\alpha}} = \frac{1}{T-T_0}\sum_{t=T_0}^{T-1} \boldsymbol{\alpha}^{(t)}$, we have that
\begin{equation*}
\begin{aligned}
\mathbb{E}[G(\bar{\boldsymbol{\alpha}})] =& \mathbb{E}[G(\frac{1}{T-T_0}\sum_{t=T_0}^{T-1} \boldsymbol{\alpha}^{(t)}) ] \leq \frac{1}{T-T_0} \mathbb{E}[\sum_{t=T_0}^{T-1}G(\boldsymbol{\alpha}^{(t)})]\\
\stackrel{\text{Lemma }\ref{conv_lemma},\ref{Q_upp_bound}}{\leq} & \frac{1}{T-T_0} \mathbb{E}\bigg[\sum_{t=T_0}^{T-1}\bigg( \frac{1}{\eta(1-\Theta)s}(D(\boldsymbol{\alpha}^{(t+1)})-D(\boldsymbol{\alpha}^{(t)})) + \frac{Q\rho s}{2\lambda}\bigg) \bigg]\\
= & \frac{1}{\eta(1-\Theta)s} \frac{1}{T-T_0} \mathbb{E} [D(\boldsymbol{\alpha}^{(T)})-D(\boldsymbol{\alpha}^{(T_0)})] + \frac{Q\rho s}{2\lambda}\\
\leq& \frac{1}{\eta(1-\Theta)s} \frac{1}{T-T_0} \mathbb{E} [D(\boldsymbol{\alpha}^{*})-D(\boldsymbol{\alpha}^{(T_0)})] + \frac{Q\rho s}{2\lambda}
\end{aligned}
\end{equation*}

Let's assume $T \geq T_0 + \ceil[\Big]{\frac{1}{\eta(1-\Theta)}}$ and $T_0 \geq t_0$, we get that
\begin{equation*}
\begin{aligned}
\mathbb{E}[G(\bar{\boldsymbol{\alpha}})] \leq& \frac{1}{\eta(1-\Theta)s} \frac{1}{T-T_0} \bigg(\frac{Q\rho}{\lambda(1+ \frac{\eta}{2}(1-\Theta)(T_0-t_0))}\bigg) + \frac{Q\rho s}{2\lambda}\\
= & \frac{Q\rho}{\lambda}\bigg(\frac{1}{\eta(1-\Theta)s(T-T_0)(1+ \frac{\eta}{2}(1-\Theta)(T_0-t_0)) }+ \frac{s}{2} \bigg)
\end{aligned}
\end{equation*}
By setting $s = \frac{1}{(T-T_0)\eta(1-\Theta)} \in [0,1]$,
we will have
\begin{equation*}
\begin{aligned}
\mathbb{E}[G(\bar{\boldsymbol{\alpha}})] \leq& \frac{Q\rho}{\lambda}\bigg(\frac{1}{1+ \frac{\eta}{2}(1-\Theta)(T_0-t_0) }+ \frac{1}{2(T-T_0)\eta(1-\Theta)} \bigg)
\end{aligned}
\end{equation*}
To have the right hand side smaller than $\epsilon_G$, we just need the following two inequalities to hold:
\begin{eqnarray}
\label{eq:gap_ineq1}
  \frac{Q\rho}{\lambda}\bigg( \frac{1}{1+ \frac{\eta}{2}(1-\Theta)(T_0-t_0) } \bigg)&\leq& \frac{\epsilon_G}{2} \\
  \label{gap_ineq2}
  \frac{Q\rho}{\lambda}\bigg( \frac{1}{2(T-T_0)\eta(1-\Theta)} \bigg)&\leq& \frac{\epsilon_G}{2}
\end{eqnarray}
If we have
\begin{eqnarray*}
   T_0 &\geq& t_0 + \bigg(\frac{2}{\eta(1-\Theta)}(\frac{2Q\rho}{\lambda \epsilon_G} -1) \bigg), \\
   T &\geq& T_0 + \frac{Q \rho}{\lambda\epsilon_G\eta(1-\Theta) },
\end{eqnarray*}
\eqref{eq:gap_ineq1} and \eqref{gap_ineq2} will hold and we will have $\mathbb{E}[G(\bar{\boldsymbol{\alpha}})] \leq \epsilon_G$. \
From Lemma \ref{technical_D_0} we know that $D(\boldsymbol{\alpha}^{(*)})-D(\boldsymbol{\alpha}^{(0)}) \leq m$, therefore, we conclude the proof.
\end{proof}

\section{Additional Experimental Results on Real-world Datasets}\label{sec:experiment_appendix}
We conduct more experiments on the three real-world dataset by setting a different value to $\lambda$ in \eqref{formulation} ($\lambda=10^{-5}$), and report the results in Figures~\ref{fig:pred_error_realworld_add}-\ref{fig:commu_real_world_add}. The results are similar to those with $\lambda=10^{-6}$.

\begin{figure}[h!]
	\begin{center}
		\subfigure[Prediction Error v.s. Communication] {\label{fig:pred_error_realworld_add}
			\includegraphics[trim = 10cm 0cm 12cm 0cm, clip, width=0.315\columnwidth]{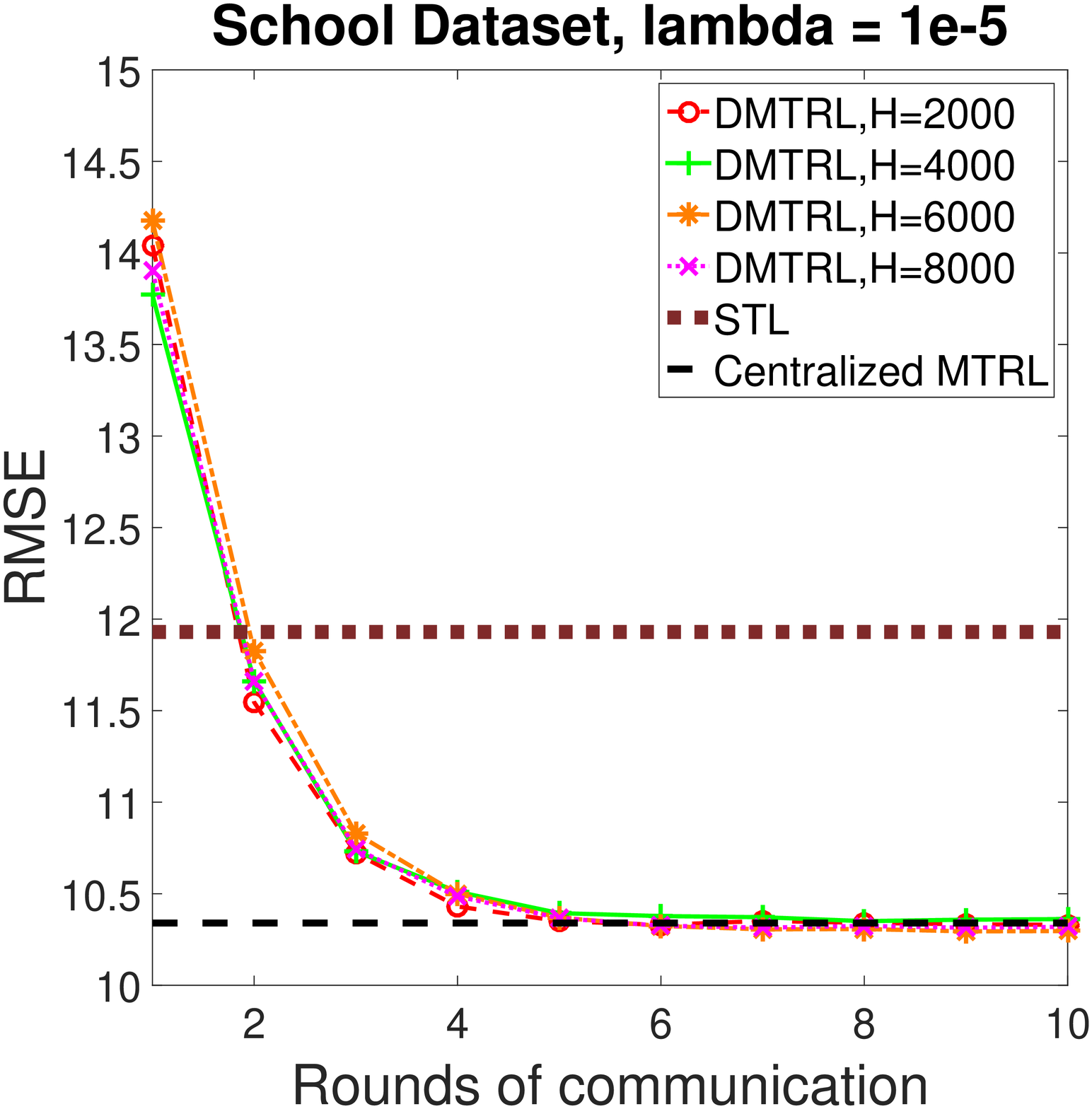}
			\includegraphics[trim = 10cm 0cm 12cm 0cm, clip, width=0.315\columnwidth]{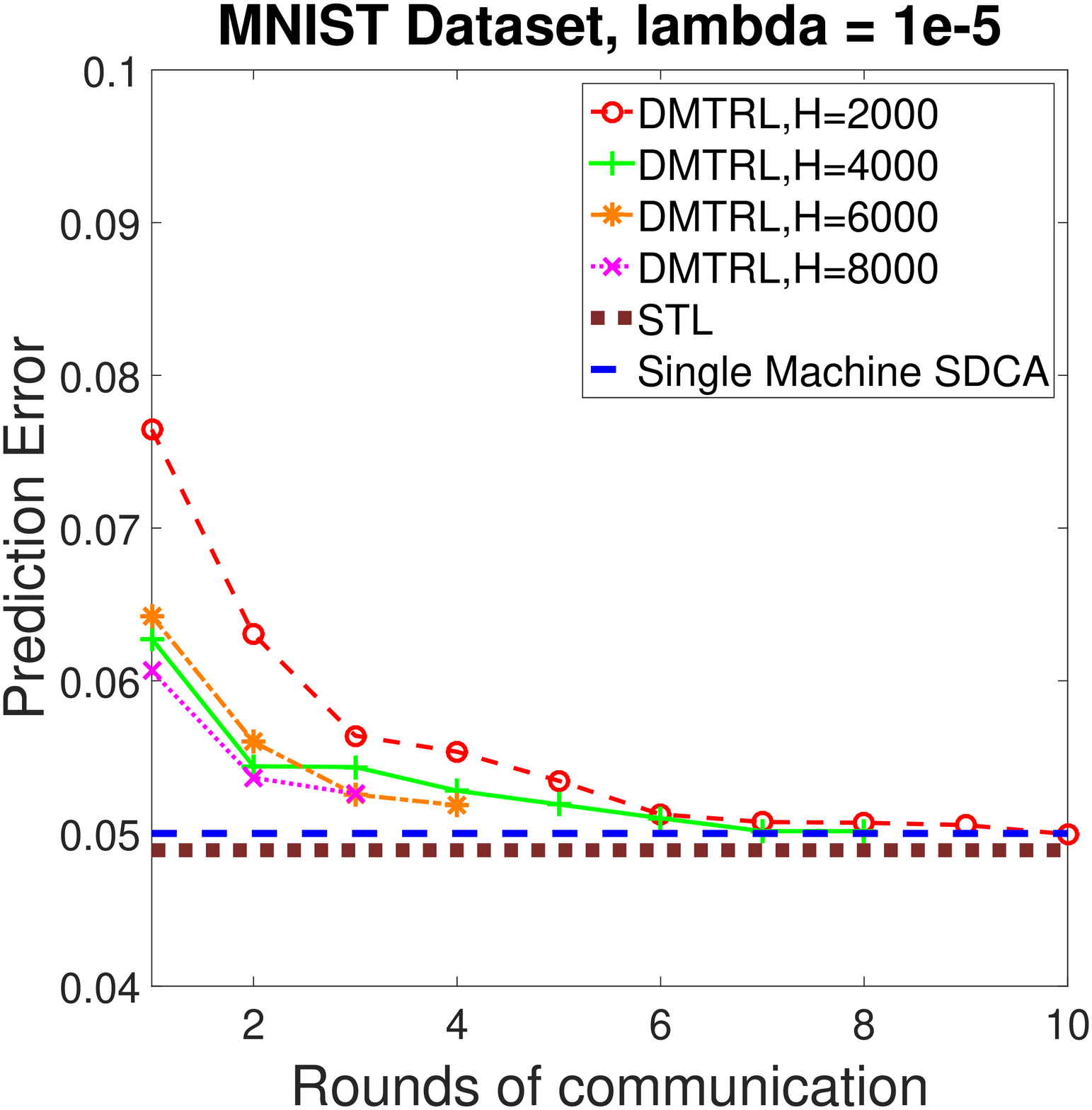}
			\includegraphics[trim = 10cm 0cm 12cm 0cm, clip, width=0.315\columnwidth]{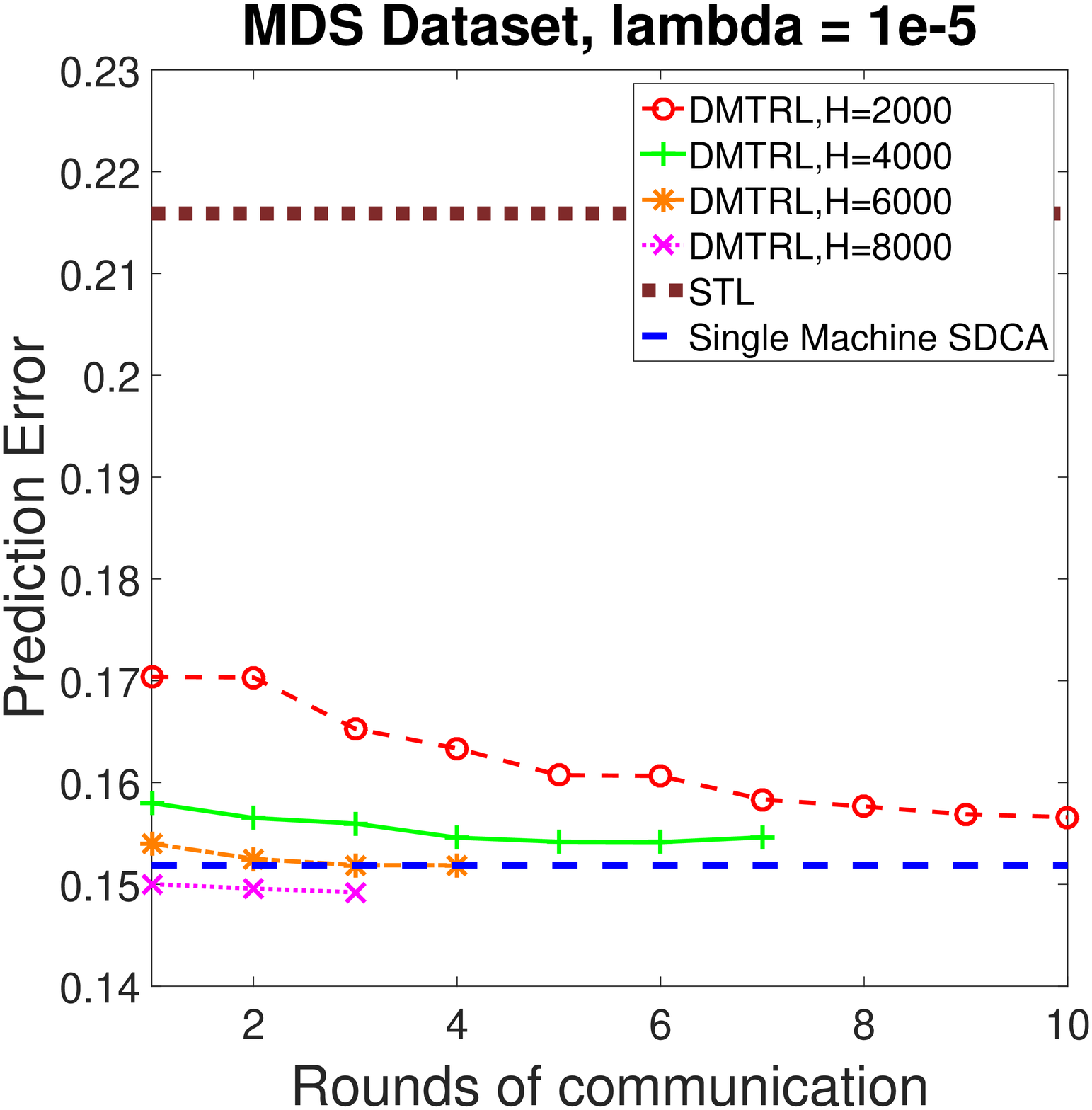}}
		\\
		\subfigure[Duality Gap v.s. Time] {\label{fig:dualGap_realworld_add}
			\includegraphics[trim = 10cm 0cm 12cm 0cm, clip, width=0.315\columnwidth]{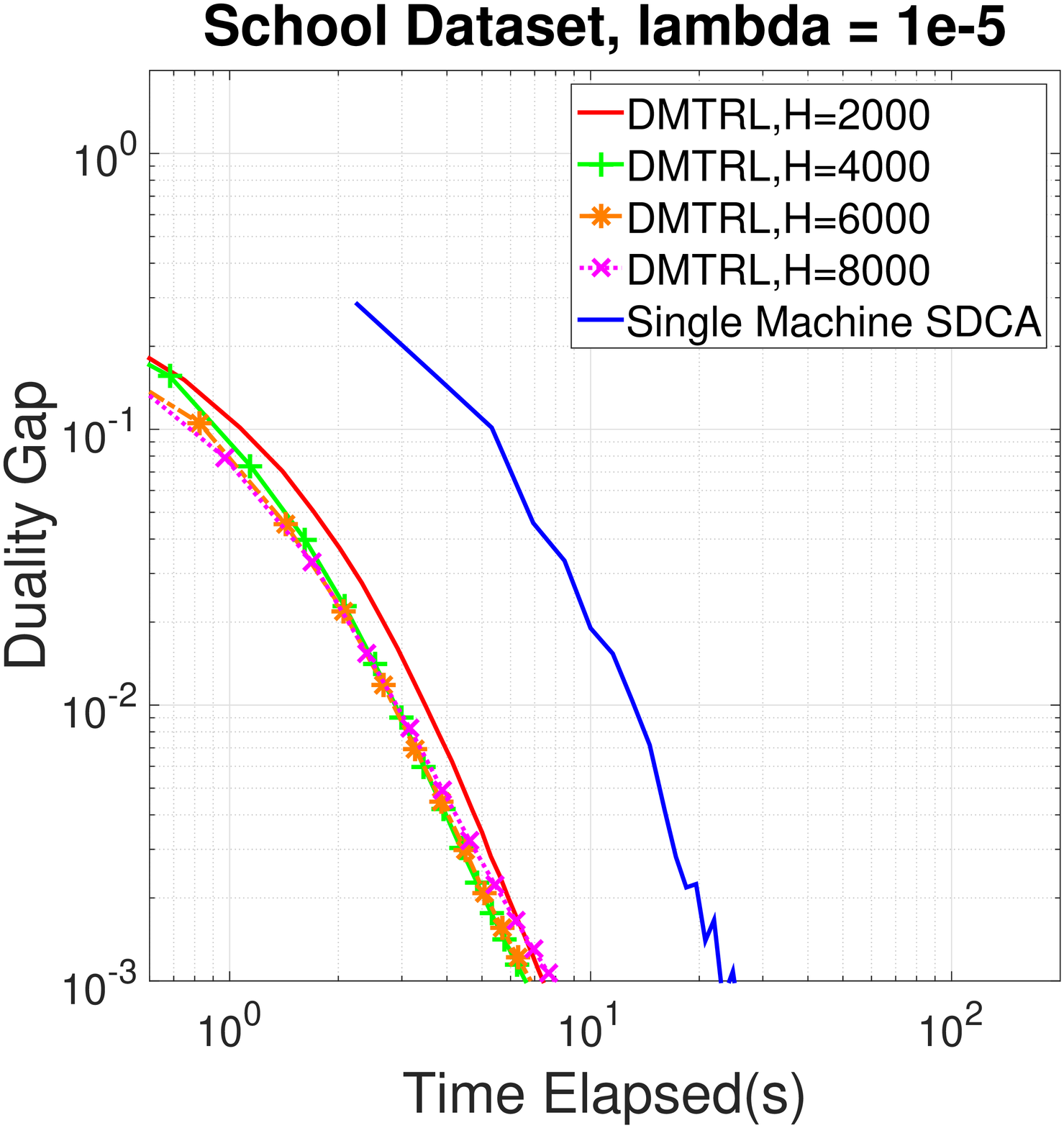}
			\includegraphics[trim = 10cm 0cm 12cm 0cm, clip, width=0.315\columnwidth]{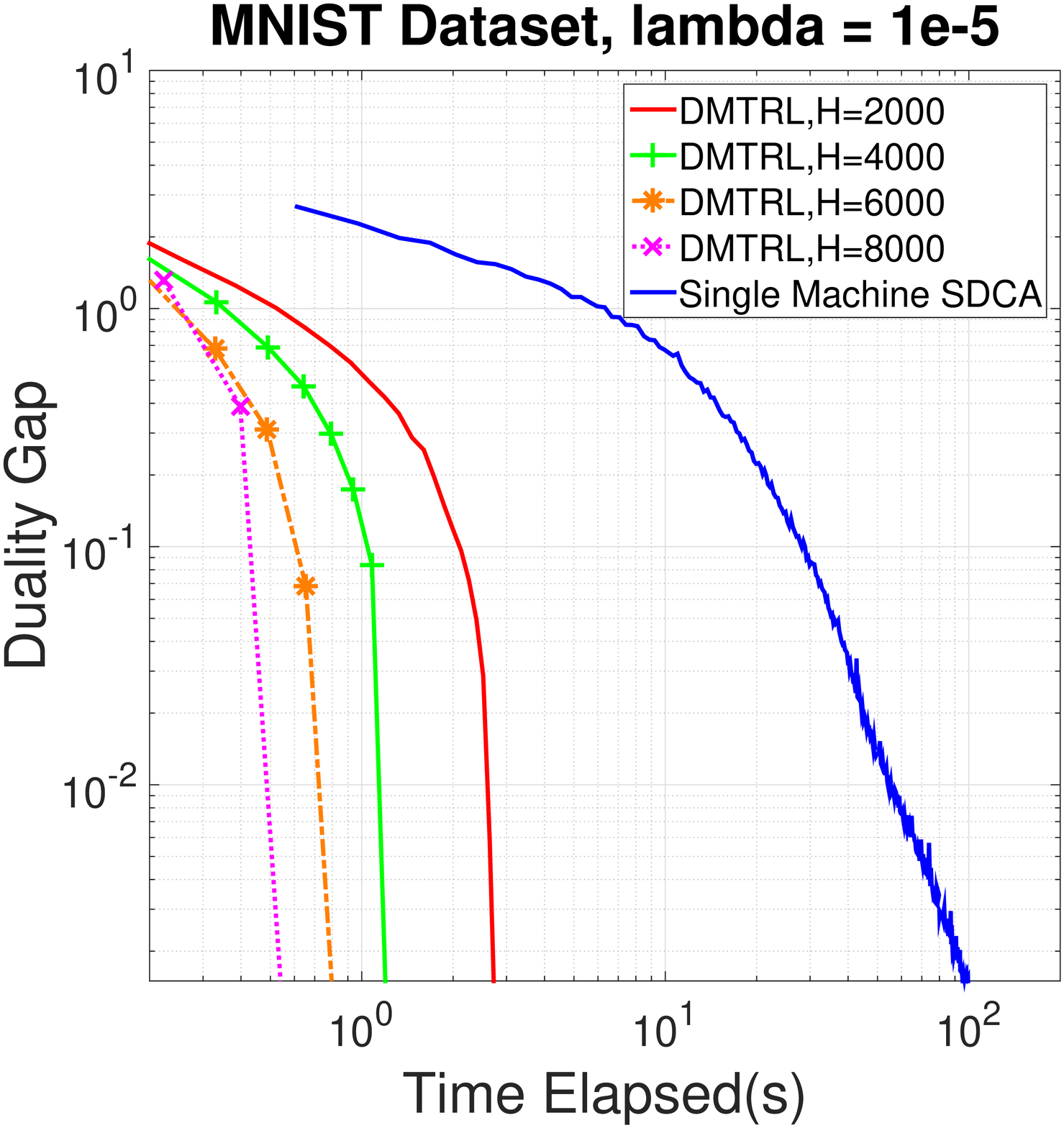}
			\includegraphics[trim = 10cm 0cm 12cm 0cm, clip, width=0.315\columnwidth]{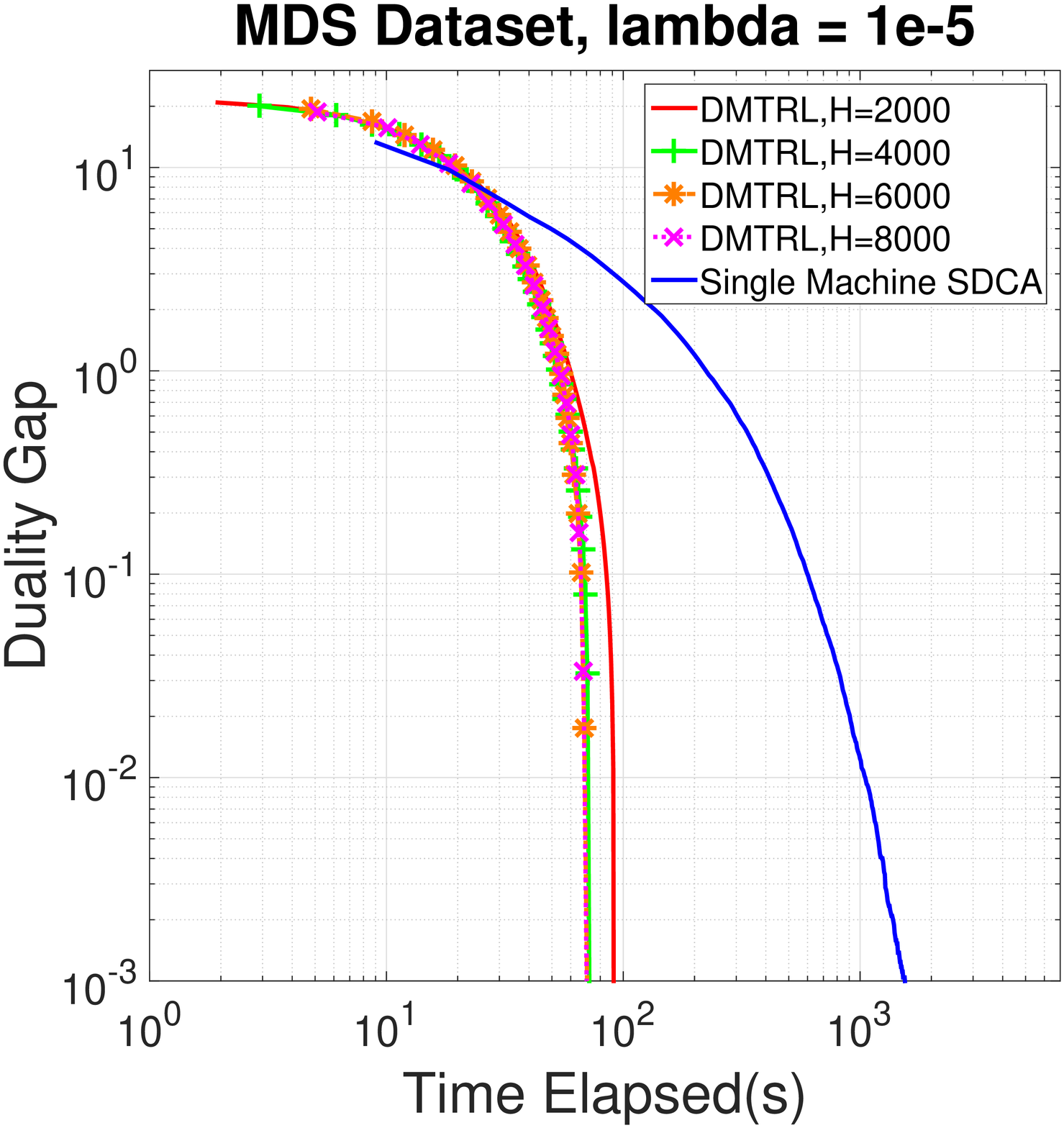}}
		\\
		\subfigure[Duality Gap v.s. Communication] {\label{fig:commu_real_world_add}\
			\includegraphics[trim = 10cm 0cm 12cm 0cm, clip, width=0.315\columnwidth]{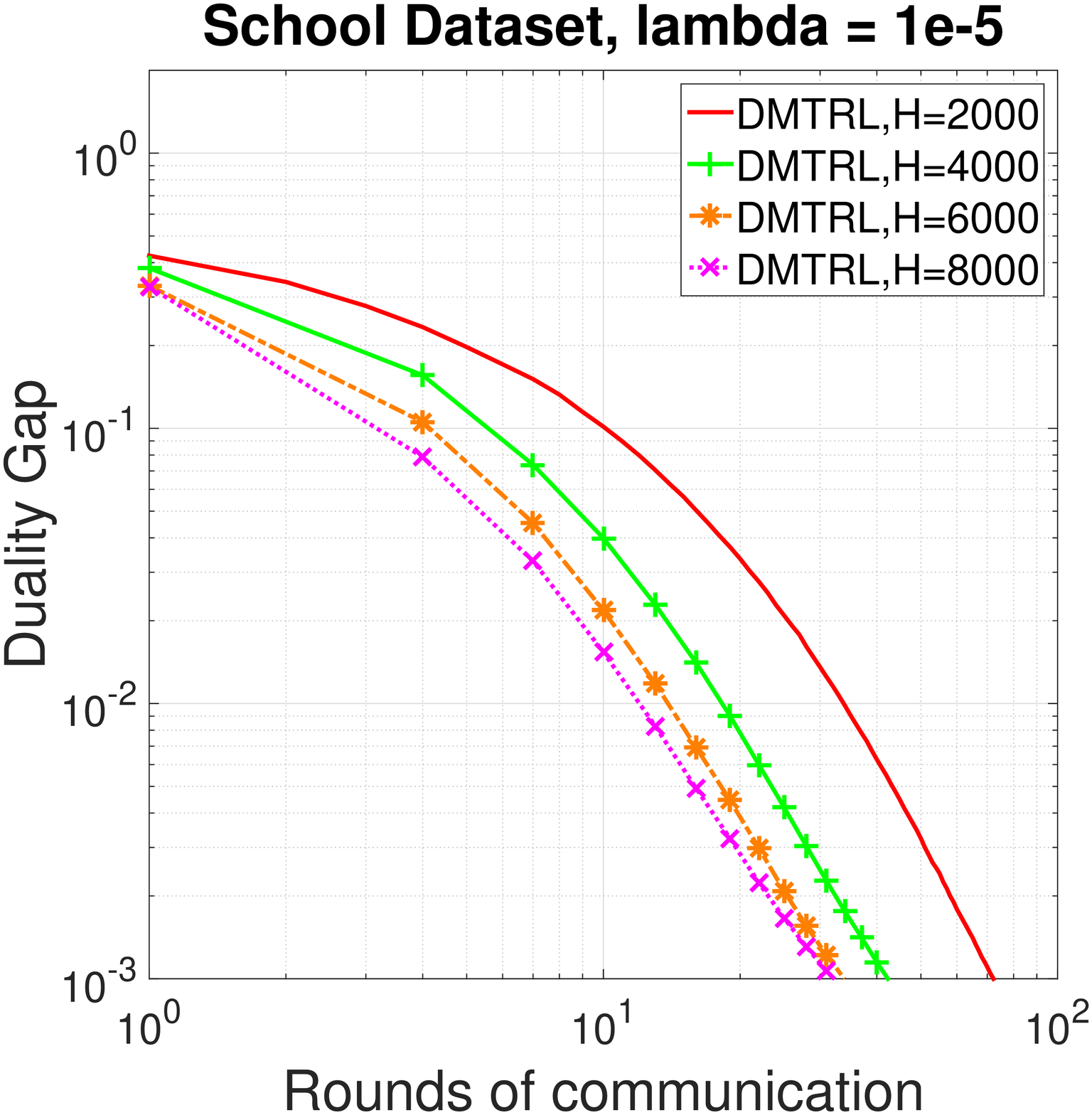}
			\includegraphics[trim = 10cm 0cm 12cm 0cm, clip, width=0.315\columnwidth]{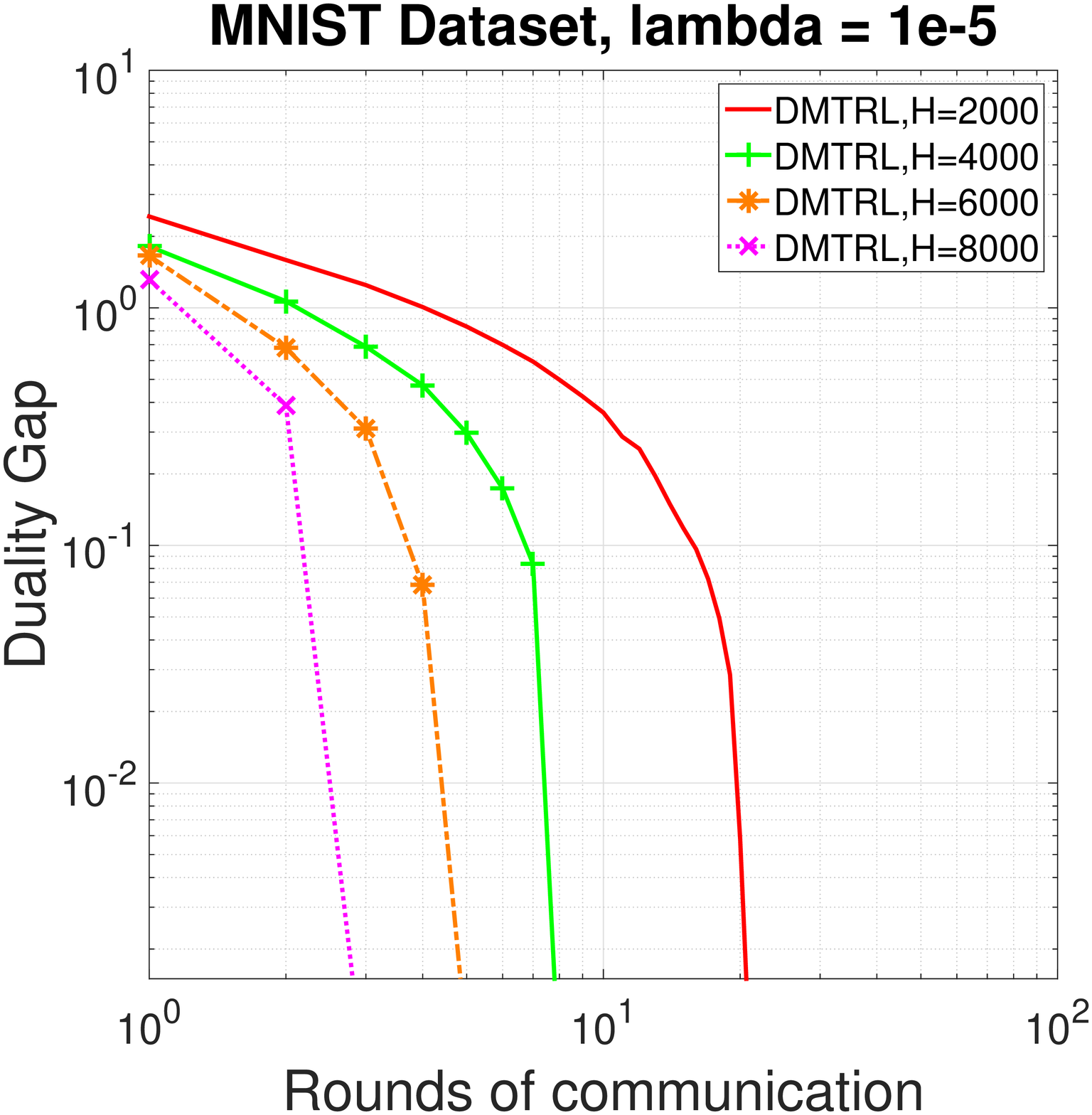}
			\includegraphics[trim = 10cm 0cm 12cm 0cm, clip, width=0.315\columnwidth]{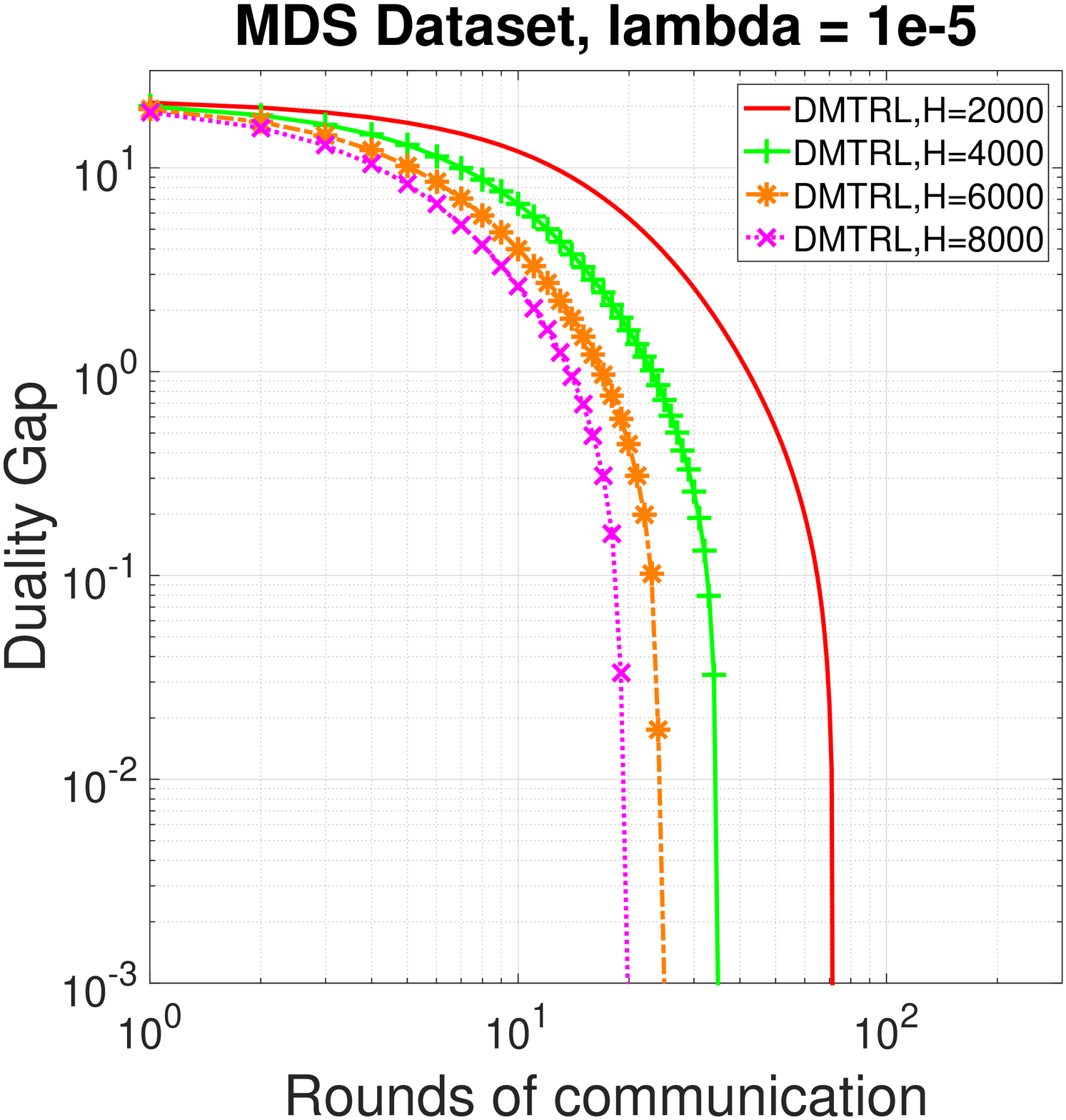}}
		\caption{Experimental results on real-world datasets}\label{fig:real_world_add}
	\end{center}
\end{figure}

\end{document}